\definecolor{MyRef}{HTML}{74787c}     %
\definecolor{darkblue}{HTML}{1A254B}
\definecolor{lightblue}{HTML}{A7BED3}
\definecolor{blue}{HTML}{114083}
\definecolor{blue2}{HTML}{0000ff}
\definecolor{mygray}{gray}{0.9}
\definecolor{mygray1}{gray}{0.95}
\newtheorem{thm}{Theorem}
\newtheorem{proposition}{Proposition}
\newcommand{\md}{\mathrm{d}}
\newcommand{\ie}{\textit{i}.\textit{e}.}
\newcommand{\eg}{\textit{e}.\textit{g}.}
\newcommand*{\dif}{\mathop{}\!\mathrm{d}}
\newcommand{\x}{\mathbf{x}}
\newcommand{\E}{\mathbb{E}}
\title{Joint Velocity-Growth Flow Matching for Single-Cell Dynamics Modeling}
\author{
	Dongyi Wang\textsuperscript{1,}\thanks{These authors contributed equally. \textsuperscript{$\dag$}Corresponding authors.} , 
    Yuanwei Jiang\textsuperscript{1,$\ast$},  
    Zhenyi Zhang\textsuperscript{2,$\ast$},  
    Xiang Gu\textsuperscript{1,$\dag$}, 
    Peijie Zhou\textsuperscript{3,$\dag$},  
    Jian Sun\textsuperscript{1,$\dag$}\\
	\textsuperscript{1}{School of Mathematics and Statistics, Xi'an Jiaotong University, Xi'an, China}\\
    \textsuperscript{2}{LMAM and School of Mathematical Sciences, Peking University, Beijing, China.}\\
    \textsuperscript{3}{Center for Machine Learning Research, Peking University, Beijing, China.}\\
	 {\tt\small \{dongyiwang,jyw1578857771\}@stu.xjtu.edu.cn};  {\tt\small\{xianggu,jiansun\}@xjtu.edu.cn};\\
     {\tt\small zhenyizhang@stu.pku.edu.cn}; {\tt\small pjzhou@pku.edu.cn}
}
\begin{document}

\maketitle

\begin{abstract}
Learning the underlying dynamics of single cells from snapshot data has gained increasing attention in scientific and machine learning research. The destructive measurement technique and cell proliferation/death result in unpaired and unbalanced data between snapshots, making the learning of the underlying dynamics challenging.  
In this paper, we propose joint Velocity-Growth Flow Matching (VGFM), a novel paradigm that jointly learns state transition and mass growth of single-cell populations via flow matching. VGFM builds an ideal single-cell dynamics containing velocity of state and growth of mass, driven by a presented two-period dynamic understanding of the static semi-relaxed optimal transport, a mathematical tool that seeks the coupling between unpaired and unbalanced data. To enable practical usage, we approximate the ideal dynamics using neural networks, forming our joint velocity and growth matching framework. 
A distribution fitting loss is also employed in VGFM to further improve the fitting performance for snapshot data. 
Extensive experimental results on both synthetic and real datasets demonstrate that VGFM can capture the underlying biological dynamics accounting for mass and state variations over time, outperforming existing approaches for single-cell dynamics modeling. Our code is available at \href{https://github.com/DongyiWang-66/VGFM}{https://github.com/DongyiWang-66/VGFM}.


\end{abstract}

\section{Introduction}\label{sec:introduction}

Inferring the latent dynamics of complex systems from sparse and noisy data is a fundamental challenge in science and engineering. In many domains, \eg, stock markets \cite{gontis2010long}, climate systems \cite{franzke2015stochastic}, and biological processes \cite{yeo2021generative, schiebinger2019optimal, zhang2021optimal, waddington1942canalization, bunne2023learning,hay2021estimating,oeppen2002broken,zhang2025review,palma2025enforcing}, continuous trajectories are rarely fully observed by sensors. Instead, cross-sectional snapshot data collected at discrete time points are commonly provided. This challenge is especially important in single-cell RNA sequencing \cite{macosko2015highly,klein2015droplet,jovic2022single,saliba2014single,haque2017practical}, where destructive sampling yields unpaired population-level snapshots across time without having the tracked individual cell fates. Furthermore, due to the mass changes during cellular development or response processes, observed data often exhibits mass unbalancedness across time, violating mass conservation. Consequently, reconstructing the time-evolving, unnormalized density function from limited samples has become an important research problem and attracted increasing attention~\cite{jovic2022single,saliba2014single,lahnemann2020eleven}.

Deep learning-based models for dynamics inference have demonstrated great potential \cite{tong_trajectorynet_2020,yeo2021generative,bunne2022proximal,bunne2023learning, zhang2025modeling,sun2025variational,tong2024improving,sha2024reconstructing,jiang2024physics,terpin2024learning,zhang2024learning,zhang2024trajectory,gu2025partially,rohbeck2025modeling}. These models typically employ ordinary or stochastic differential equations (ODEs or SDEs) parameterized by neural networks to approximate the velocity field governing density evolution. One class of methods, based on simulation \cite{chen_neural_2018,tong_trajectorynet_2020, yeo2021generative, huguet2022manifold, jiang2024physics, koshizuka2023neural, bunne2022proximal,sha2024reconstructing,zhang2024learning}, generates synthetic trajectories by feeding initial data through neural networks and solving ODEs or SDEs numerically, and compares simulated results with observations for loss computation. However, these simulation-based methods rely heavily on numerical solvers during training, which significantly increases computational cost. In high-dimensional settings, the enlarged search space further exacerbates instability, hindering scalability and convergence. Another line of research focuses on simulation-free approaches \cite{lipman_flow_2023,liu2023flow,albergo2023building,pooladian2023multisample,tong2024improving,eyringunbalancedness,kapusniak2024metric,atanackovic2024meta,klein2024genot, tong2024simulation, corso2025composing,rohbeck2025modeling, palmamulti}, where the velocity field is trained efficiently by constructing conditional probability paths without trajectory simulation, yielding better efficiency and stability in training compared with simulation approaches.

Nevertheless, the simulation-free approaches mentioned above rely solely on the velocity field \(v\), ignoring the unbalancedness of observed data, which can result in incorrect reconstruction of underlying dynamics or unsatisfying generation performance \cite{zhang2024learning,eyringunbalancedness,corso2025composing}.
Distributional imbalance is a common phenomenon in single-cell dynamics, where cellular proliferation and death occur, thus necessitating the introduction of a growth term \(g\) to allow for mass increasing or decreasing.
To address this, \cite{neklyudov2023action,neklyudov2024a} proposed to jointly learn velocity  \(v\) and growth \(g\) by minimizing the Wasserstein-Fisher-Rao (WFR) metric \cite{chizat2018unbalanced,liero2018optimal}. However, it mathematically enforces \(v = \nabla g\), which lacks clear mechanistic justification in biological systems.
 Other WFR-inspired approaches \cite{sha2024reconstructing,zhang2024learning} design separate neural networks for \(v\) and \(g\) and prioritize fitting the distributions. However,
 these methods require heavy simulation during training, limiting their scalability to high dimensions.
 
\begin{wrapfigure}{r}{0.55\textwidth}
\vspace{-0.5 cm}
  \centering
  \includegraphics[width=0.50\textwidth]{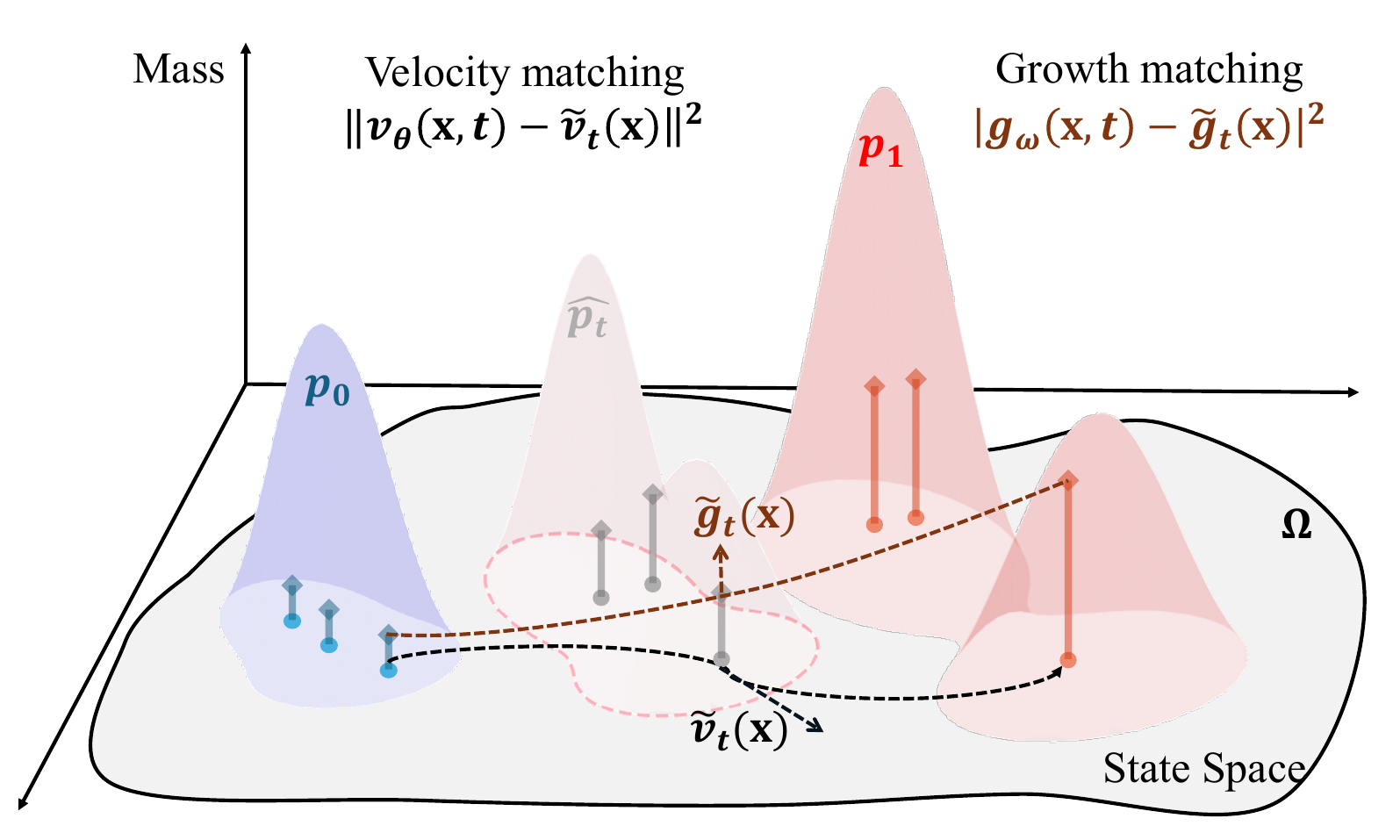}
  \caption{The goal of this paper is to learn a joint state transition (controlled by $v_\theta(\x,t)$) and mass growth (controlled by $g_\omega(\x,t)$) dynamics for single-cell evolution.}
\vspace{-0.2 cm}
  \label{fig:wfr}
\end{wrapfigure}
In this paper, we propose joint Velocity-Growth Flow Matching (VGFM), a novel approach for single-cell dynamics modeling.
VGFM aims to learn the joint state transition\footnote{State transition describes changes in gene expression in a cell.} and mass growth\footnote{Mass growth means mass increasing or decreasing.}  of single-cell evolution by flow matching,
as illustrated in Fig.~\ref{fig:wfr}. VGFM is based on the static semi-relaxed optimal transport that allows mass variation in building coupling between snapshots, for which we present a dynamic understanding that enables mass growth and state transition accomplished in two time periods, respectively. Based on this dynamic understanding, we build a joint transition and growth dynamics, inheriting the optimal properties of semi-relaxed optimal transport. We then approximate the velocity and growth in the built dynamics using neural networks with finite samples, yielding a joint velocity and growth flow matching framework. To further improve fitting performance for the single-cell snapshots, we employ a distribution fitting loss based on the Wasserstein distance. Thanks to the joint velocity and growth flow matching, our approach enjoys more stable training and better scalability to high dimensions, compared with simulation-based approaches. Extensive experiments on both synthetic and real single-cell datasets are conducted to evaluate our proposed approach. Experimental results demonstrate accurate dynamics reconstruction and superior performance achieved by VGFM over recent baselines. 


\section{Background and Related Works}\label{sec:background}

This section revisits optimal transport, and reviews the flow-matching-based methods for single-cell and unbalancedness-aware distribution learning methods, which are mostly related to this paper.

\paragraph{Optimal transport.}
Optimal transport, a tool for transforming one distribution into another at minimal cost \cite{kantorovich1942translocation}, has gained remarkable prominence in recent years across fields such as image processing \cite{tang2024residual,tang2025degradation,guoptimal,gushchin2023entropic,korotin2023neural,zheng2025towards} and bioinformatics \cite{tong2024improving,tong_trajectorynet_2020,tong2024simulation,zhang2024learning,sun2025variational}.
Given two distributions \(p_0\) and \(p_1\) on domain $\Omega$ satisfying \(\int_\Omega p_0(\x) \, \mathrm{d}x = \int_\Omega p_1(\x) \, \mathrm{d}x\), the optimal transport~\cite{kantorovich1942translocation} aims to find a joint distribution of $p_0$ and $p_1$, named coupling, such that the transport cost is minimized, formulated as the following optimization problem:
\begin{equation}\label{eq:kp}
    \min_{\pi \in U(p_0, p_1)} \int_{\Omega^2} c(\x_0, \x_1) \, \mathrm{d}\pi(\x_0, \x_1),\mbox{ s.t.}\ U(p_0, p_1) = \left\{ \pi\geq0 : {\rm P}_{\#}^0\pi = p_0,\  {\rm P}_{\#}^1\pi = p_1 \right\},
\end{equation}
where $c$ is the cost function. For a map $T$, we define $T_\# p_0(\x) = \int_{\x':T(\x')=\x}p_0(\x')\dif \x'$. Let ${\mbox{P}}^i(\x_0,\x_1)$ $=\x_i$ for $i=0,1$, then we have \({\rm P}_{\#}^i\pi(\x_i) = \int_\Omega \pi(\x_0, \x_1) \, \mathrm{d}\x_i\) for $i=0,1$.
When \(c(\x_0, \x_1) = \|\x_0 - \x_1\|^2\), by Brenier's theorem~\cite{brenier1987decomposition}, the optimal coupling can be expressed as \(\pi^* = (\mathrm{Id}, T^*)_\# p_0\), where $T^*$ is called the Monge map. 

\paragraph{Flow matching for learning single cell dynamics.}
Flow matching \cite{lipman_flow_2023, liu2023flow, albergo2023building} is a simulation-free approach where the velocity field is trained efficiently by constructing conditional probability paths without trajectory simulation. Building on this foundation, further improvements have been achieved by incorporating optimal transport guidance \cite{pooladian2023multisample,tong2024improving}, considering unbalancedness \cite{eyringunbalancedness}, accounting for manifold structures \cite{kapusniak2024metric}, and approximating Schrödinger bridge via flow and score matching \cite{tong2024simulation}, all of which has been applied to model single-cell dynamics \cite{tong2024improving,tong2024simulation,kapusniak2024metric,eyringunbalancedness}. Different from these methods, our method explicitly models the simultaneous matching of both the velocity field and the growth function, driven from our developed ideal joint state transition and mass growth dynamics, which does not require the construction of conditional probability paths like the above approaches.

\paragraph{Unbalancedness-aware distribution learning methods.}\label{sec:unbalanced}
Data imbalance is prevalent across a variety of domains, such as image synthesis and protein generation, motivating the development of flow matching models that account for unbalanced distributions \cite{eyringunbalancedness,corso2025composing,choi2024scalable}. However, these methods 
often do not model growth functions, which are essential in single-cell contexts. In the single-cell domain, several approaches have been proposed to model growth dynamics \cite{neklyudov2023action,neklyudov2024a,zhang2021optimal,sha2024reconstructing,pariset2023unbalanced}, yet they either fail to leverage the informative variations in cell abundance observable from snapshot data \cite{neklyudov2023action,neklyudov2024a}, or rely heavily on computationally expensive simulations \cite{zhang2021optimal,sha2024reconstructing,pariset2023unbalanced}, limiting their scalability and efficiency. Differently, our method explicitly learns the growth function from observed snapshot data, yielding better performance for single cells than these methods as shown in experiments.

\section{Method}\label{sec:dvgm}
Given the snapshot population of single cells, this paper aims to build the dynamic trajectory of single cells. Considering the unbalancedness between snapshot populations due to the undergo cell proliferation or death, we aim to build a model that can transform between unbalanced distributions by jointly learning a velocity field for controlling state transition and a growth function for controlling mass variation. Towards this goal, we propose joint Velocity-Growth Flow Matching (VGFM) based on semi-relaxed optimal transport to learn the dynamics of single cells. As illustrated in Fig.~\ref{fig:workflow}, we first present a decoupled understanding of state transition and mass growth for unbalanced dynamics based on semi-relaxed optimal transport (Figs.~\ref{fig:workflow} (a), (b)), upon which we build a dynamic process between unbalanced distributions (Fig.~\ref{fig:workflow} (c) ). Finally, we propose the velocity and growth flow matching using the built dynamic process to learn the velocity and growth with samples (Fig.~\ref{fig:workflow} (d)). Next, we first describe the unbalanced dynamics of single-cell data, and then discuss the details of each component of our method.


\subsection{Unbalanced Dynamics of Single Cell}\label{sec:coneq}
Given two adjacent snapshots of populations/distributions denoted as $p_0$ and $p_1$, if the system is balanced, the dynamics $\bar{p}_t (t\in[0,1])$ between $p_0$ and $p_1$ can be generated by ODE \(\frac{\mathrm{d}\x_t}{\mathrm{d}t} = v_t(\x_t)\) such that $\x_0\sim p_0$ and $\x_1\sim p_1$.  Corresponding to this ODE, $\bar{p}_t$ is governed by continuity equation \(\partial_t \bar{p}_t = -\nabla \cdot (\bar{p}_t v_t)\).
However, in biological systems, cells can proliferate and die. Such behaviors can be captured by a time-dependent weight \(w_t(\x_t)\) associated with the state transition, whose evolution is controlled by a growth function \(g\), simulating cell proliferation or death processes:
\begin{equation}\label{eq:vgode}
\begin{cases}
    &\frac{\mathrm{d}\x_t}{\mathrm{d}t} = v_t(\x_t), \\
    &\frac{\mathrm{d} \log w_t(\x_t)}{\mathrm{d}t} = g_t(\x_t).   
\end{cases}
\end{equation}
Under this formulation, the unbalanced distribution dynamics \(p_t\) is generated jointly by the position \(\x_t\) and the weight \(w_t(\x_t)\). Specifically, \(w_t(\x_t)\) models the variation of \(p_t\) against \(\bar{p}_t\), \ie, $w_t(\x_t)=p_t(\x_t)/\bar{p}_t(\x_t)$ for $\bar{p}_t(\x_t)\neq 0$. Corresponding to Eq.~\eqref{eq:vgode}, ${p}_t$ is governed by 
\begin{equation}\label{eq:coneq}
    \partial_t p_t = -\nabla \cdot (p_t v_t) + g_t p_t.
\end{equation}

\begin{figure}[t]
    \centering
    \includegraphics[width=0.9\linewidth]{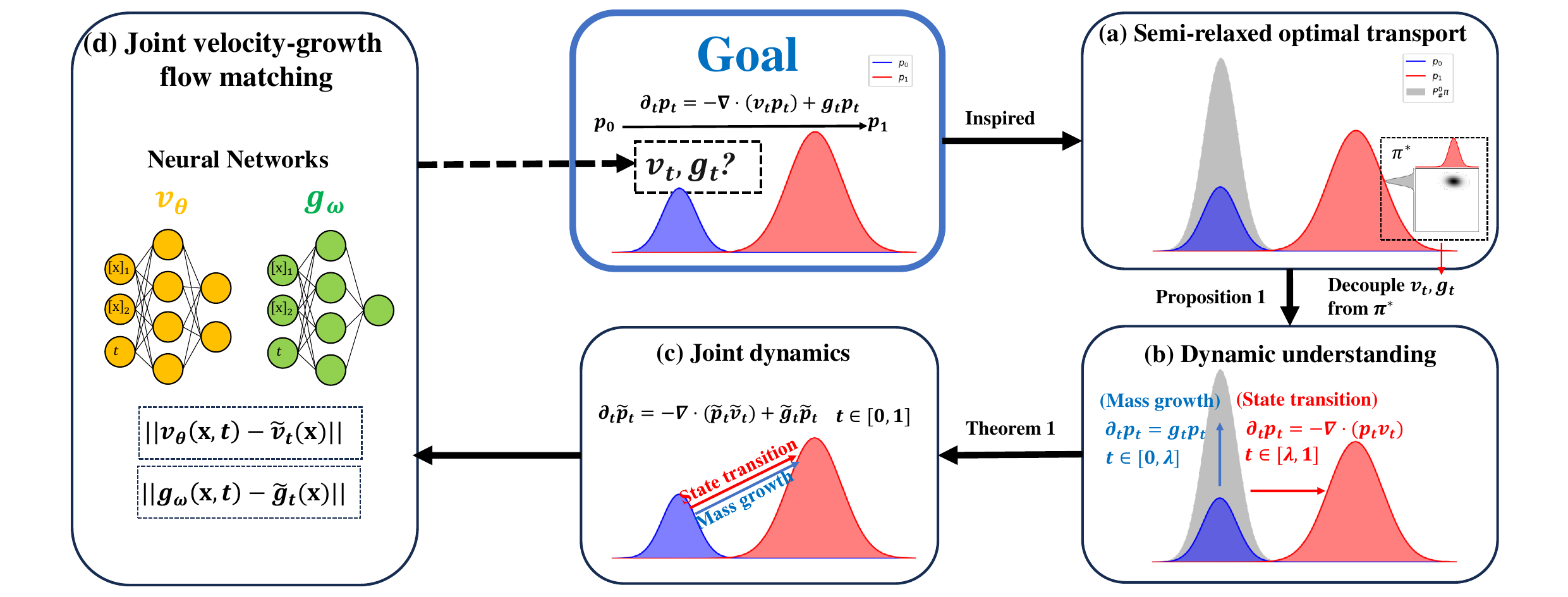}
    \caption{Illustration of our proposed VGFM, consisting of the velocity and growth flow matching deduced by the dynamic reformulation of the semi-relaxed optimal transport. }
    \label{fig:workflow}
\end{figure}
\subsection{Building Unlabalnced Dynamics Based on Semi-Relaxed Optimal Transport}\label{sec:decouple}
Though Eqs.~\eqref{eq:vgode} or~\eqref{eq:coneq} provide a formulation of unbalanced dynamics between \(p_0\) and \(p_1\), the equations admit a wide range of admissible dynamics. We next seek the dynamics based on semi-relaxed optimal transport. Specifically, given the semi-relaxed optimal transport
\begin{equation}\label{eq:semikp}
\min_{\pi\geq 0} \mathcal{J}_{\rm sot}(\pi)\triangleq\int_{\Omega^2} c(\x_0, \x_1) \, \mathrm{d}\pi(\x_0, \x_1) + \mathrm{KL}({\rm P}_{\#}^0\pi \| p_0) \quad \text{subject to} \quad {\rm P}_{\#}^1\pi = p_1,
\end{equation}
between two unbalanced distributions \(p_0\) and \(p_1\), we will first present a dynamic understanding of Eq.~\eqref{eq:semikp} that decouples the state transition and mass growth into different periods. Based on this dynamic understanding, we will build a reasonable unbalanced dynamics between \(p_0\) and \(p_1\), which naturally drives our velocity and growth matching to learn the dynamics with samples (see Sect.~\ref{sec:matchingloss}).  


\paragraph{Dynamic understanding of semi-relaxed optimal transport.}As mentioned above, the solution of the semi-relaxed optimal transport allows both state transition and mass variation/growth. From a dynamic perspective, we understand the transport as a two-period dynamic process, in which we perform mass growth in the first period and state transition in the second period. Note that such an understanding is not practical in applications, but it provides a decoupled modeling of state transition and mass growth, which promotes a reasonable and friendly-to-matching unbalanced dynamics modeling, driving our flow matching algorithm for learning velocity and growth, as we will make clear later. More concretely, 
given \(\lambda \in (0,1)\), we perform mass growth controlled by $g_t$ for $t\in[0,\lambda]$ through $\partial_tp_t=g_tp_t$ with initial and ending distributions of $p_0$ and $p_{\lambda}$ respectively. For $t\in(\lambda,1]$, the state transition controlled by $v_t$ through $\partial_tp_t=-\nabla\cdot(p_tv_t)$ is performed with initial and ending distributions of $p_{\lambda}$ and $p_1$, respectively. We then define the following two-period transport model:
\begin{equation}\label{eq:dynsemi}
    \min_{(v_t,g_t) \in \mathcal{C}_\lambda(p_0, p_1)} \mathcal{J}^\lambda_{\rm tpt}(v_t,g_t) \triangleq
    (1 - \lambda) \int_\Omega \int_\lambda^1 p_t(\x) \|v_t(\x)\|^2 \mathrm{d}t \mathrm{d}\x + \mathcal{H}(v_t,g_t,p_t),
\end{equation}
where $\mathcal{H}(v_t,g_t,p_t) = \int_\Omega p_0(\x) ( e^{\int_0^\lambda g_t(\x) \mathrm{d}t} 
 ( \int_0^\lambda g_t(\x) \mathrm{d}t - 1) + 1 ) \mathrm{d}x$, $\mathcal{C}_\lambda(p_0,p_1)=\{(v_t,g_t): \partial_tp_t=g_tp_t,t\in[0,\lambda];\partial_tp_t=-\nabla\cdot(p_tv_t),t\in(\lambda,1]\}$, and $p_0$ and $p_1$ are the given distributions. The following proposition shows the relation between the semi-relaxed optimal transport model in Eq.~\eqref{eq:semikp} and the two-period transport model in Eq.~\eqref{eq:dynsemi}.
\begin{proposition}\label{prop:prop1}
   Assume \(c(\x_0, \x_1) = \|\x_0 - \x_1\|^2\) and if we enforce ${\rm P}_{\#}^0\pi$ and $p_0$ to share the same support for admissible solution $\pi$ to problem~\eqref{eq:semikp}, then we have $\min_\pi \mathcal{J}_{\rm sot}(\pi) = \min_{v_t,g_t}\mathcal{J}^\lambda_{\rm tpt}$ $(v_t,g_t), \forall \lambda\in(0,1)$. Moreover, for any $\lambda\in(0,1)$, given the optimal transport plan $\pi^*$ to problem~\eqref{eq:semikp}, let $p_\lambda^*\triangleq{\rm P}_{\#}^0\pi^*$, then 
    $\pi^*$ can be expressed as $\pi^*=(\mathrm{Id},T^*)_\#p^*_\lambda$ where $T^*$ is the Monge map between $p^*_\lambda$ and $p_1$. Meanwhile, there exist a $g_t^*$ such that $p_\lambda^* = p_0(\x) e^{\int_0^\lambda g^*_t(\x)\mathrm{d}t}$, and a 
    $v_t^*$ given by $v^*_t\left(\x+\frac{t-\lambda}{1-\lambda}(T^*(\x)-\x)\right) = \frac{T^*(\x)-\x}{1-\lambda}$,
    satisfying $(v_t^*,g_t^*)\in \arg\min_{v_t,g_t}\mathcal{J}^\lambda_{\rm tpt}(v_t,g_t)$.
\end{proposition}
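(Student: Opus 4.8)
The plan is to show that both the static problem \eqref{eq:semikp} and the dynamic two-period problem \eqref{eq:dynsemi} reduce to one and the same intermediate minimization over an auxiliary relaxed first marginal, namely $\min_{\rho}\big[W_2^2(\rho,p_1)+\mathrm{KL}(\rho\|p_0)\big]$, and then to read off the structure of the optimizers from this common reduction. First I would identify the growth term with a relative entropy. During the first period the equation $\partial_t p_t=g_t p_t$ carries no spatial transport, so integrating the pointwise ODE $\partial_t\log p_t=g_t$ gives $p_\lambda(\x)=p_0(\x)\,e^{\int_0^\lambda g_t(\x)\md t}$. Writing $G(\x)=\int_0^\lambda g_t(\x)\md t$ and substituting $\rho=p_\lambda=p_0 e^{G}$ into the unbalanced relative entropy $\mathrm{KL}(\rho\|p_0)=\int_\Omega(\rho\log\tfrac{\rho}{p_0}-\rho+p_0)\md\x$ reproduces exactly $\mathcal{H}=\int_\Omega p_0\big(e^{G}(G-1)+1\big)\md\x$. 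The support hypothesis guarantees that $G=\log(\rho/p_0)$ is finite, so every $\rho$ sharing the support of $p_0$ is realizable by some admissible $g_t$.

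Next I would collapse the kinetic term by the Benamou--Brenier theorem. For fixed $g_t$ (hence fixed $p_\lambda$), minimizing the second-period energy over velocity fields solving $\partial_t p_t=-\nabla\cdot(p_t v_t)$ that steer $p_\lambda$ to $p_1$ over an interval of length $1-\lambda$ gives, after the time rescaling $s=(t-\lambda)/(1-\lambda)$, the value $W_2^2(p_\lambda,p_1)/(1-\lambda)$; multiplying by the prefactor $(1-\lambda)$ cancels the $\lambda$-dependence, so $\min_{v_t}\mathcal{J}^\lambda_{\rm tpt}=W_2^2(p_\lambda,p_1)+\mathrm{KL}(p_\lambda\|p_0)$. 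Minimizing over $g_t$ (equivalently over $\rho=p_\lambda$) then yields $\min_{v_t,g_t}\mathcal{J}^\lambda_{\rm tpt}=\min_{\rho}\big[W_2^2(\rho,p_1)+\mathrm{KL}(\rho\|p_0)\big]$, manifestly independent of $\lambda$. On the static side I would fix the first marginal $\rho={\rm P}_\#^0\pi$ and note that, since $c=\|\x_0-\x_1\|^2$, minimizing $\int c\,\md\pi$ over couplings with marginals $(\rho,p_1)$ gives precisely $W_2^2(\rho,p_1)$; hence $\min_\pi\mathcal{J}_{\rm sot}(\pi)$ equals the same intermediate value, proving the claimed equality of optimal values for every $\lambda\in(0,1)$.

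Finally I would extract the structure of the optimizers from this reduction. Given an optimal $\pi^*$ with $p_\lambda^*={\rm P}_\#^0\pi^*$, optimality forces $\int c\,\md\pi^*=W_2^2(p_\lambda^*,p_1)$, so $\pi^*$ is the optimal quadratic-cost plan between $p_\lambda^*$ and $p_1$; Brenier's theorem then gives $\pi^*=(\mathrm{Id},T^*)_\#p_\lambda^*$ with $T^*$ the Monge map. I would realize $p_\lambda^*$ by taking, e.g., the time-constant growth $g_t^*(\x)=\tfrac1\lambda\log(p_\lambda^*(\x)/p_0(\x))$, which satisfies $p_\lambda^*=p_0 e^{\int_0^\lambda g_t^*\md t}$, and realize the second-period minimizer by the displacement-interpolation (McCann geodesic) flow: $\x_t=\x+\tfrac{t-\lambda}{1-\lambda}(T^*(\x)-\x)$ has velocity $v_t^*(\x_t)=\tfrac{T^*(\x)-\x}{1-\lambda}$, matching the stated formula, and is exactly the Benamou--Brenier minimizer attaining $W_2^2(p_\lambda^*,p_1)/(1-\lambda)$. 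Hence $(v_t^*,g_t^*)$ realizes the common optimal value and lies in $\arg\min\mathcal{J}^\lambda_{\rm tpt}$.

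The main obstacle I anticipate is the rigorous justification of the Benamou--Brenier reduction in this unbalanced, split-interval setting: I must verify that the minimizations over $v_t$ and over $g_t$ decouple (the energy depends on $g_t$ only through $p_\lambda$), that the time-rescaling is legitimate so the $(1-\lambda)$ prefactor precisely cancels, and that the displacement-interpolation field is admissible in $\mathcal{C}_\lambda(p_0,p_1)$ and attains the geodesic energy. The support hypothesis is what keeps $\log(p_\lambda^*/p_0)$ finite and the growth realization well posed, so I would flag explicitly where it enters.
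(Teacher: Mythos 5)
Your proposal is correct and rests on the same three ingredients as the paper's proof (the identification of $\mathcal{H}$ with $\mathrm{KL}(p_\lambda\|p_0)$, the Benamou--Brenier collapse of the second-period kinetic energy after time rescaling, and Brenier's theorem plus displacement interpolation for the optimizer structure), but you organize the equality of optimal values differently. The paper proves $\min_\pi\mathcal{J}_{\rm sot}\geq\min\mathcal{J}^\lambda_{\rm tpt}$ by explicitly constructing a feasible $(v_t,g_t)$ from $\pi^*$ and evaluating the objective, and then obtains the reverse inequality by contradiction: it assumes strictness, takes an optimal $(v_t^*,g_t^*)$, builds the coupling $\tilde\pi^*=(\mathrm{Id},\tilde T^*)_{\#}\tilde p_\lambda$, and shows its static cost equals the dynamic one. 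You instead exhibit a common intermediate problem, $\min_\rho\bigl[W_2^2(\rho,p_1)+\mathrm{KL}(\rho\|p_0)\bigr]$, and show that both the static and the dynamic formulations marginalize onto it exactly; this avoids the contradiction step, makes the $\lambda$-independence of the optimal value immediate, and makes transparent why the only data that matters in the first period is the endpoint $p_\lambda$ (since $\mathcal{H}$ depends on $g_t$ only through $\int_0^\lambda g_t\,\mathrm{d}t$). The paper's route has the minor advantage of producing the explicit feasible competitor $(v_t^*,g_t^*)$ as a byproduct of part (1), whereas in your version you extract it at the end from the common reduction; both are complete, and you correctly flag the one place where care is needed, namely that the rescaling constant $(1-\lambda)$ cancels the prefactor and that the support hypothesis keeps $\log(p_\lambda^*/p_0)$ finite so the growth realization is well posed.
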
 

The proof is provided in the Appendix~\ref{append:proof_pro1}. Proposition~\ref{prop:prop1} indicates that problems~\eqref{eq:semikp} and~\eqref{eq:dynsemi} share the same optimal objective function value. Meanwhile, the optimal objective function value of the two-period transport problem~\eqref{eq:dynsemi} is independent of $\lambda$. Moreover, $v_t^*,g_t^*$ can be constructed using $\pi^*$.

\paragraph{Building unbalanced dynamics.}Although we have identified a scheme in which \(p_0\) evolves into \(p_1\) through a two-period process governed separately by \(v_t\) and \(g_t\) respectively, this decoupling does not align with the behavior observed in biological systems, where transport and growth typically occur simultaneously. To address this, we next build an unbalanced dynamics based on the dynamic understanding discussed above that allows velocity and growth to jointly drive the state transition and mass growth while ensuring consistency with the target distribution. Specifically, given \(v_t\) $(t \in (\lambda,1])$, \(g_t\) $(t \in [0,\lambda])$ of the two-period process,
we define the joint velocity $\tilde{v}_t$ and growth $\tilde{g}_t$ by
\begin{equation}\label{eq:reparam}
\begin{aligned}
&\tilde{v}_t(\x) = (1 - \lambda) \cdot v_{(1 - \lambda)t + \lambda}(\x), \\
&\tilde{g}_t(\x) = \lambda \cdot g_{\lambda t}\left(\psi_{\tilde{v},t}^{-1}(\x) \right),  
\end{aligned}
\end{equation}
where $t\in [0,1]$, and $\psi_{\tilde{v},t}$ is the flow generated by $\tilde{v}$, \ie, \(\psi_{\tilde{v},t}(\x_0) = \x_0 + \int_0^t \tilde{v}_s(\x) \mathrm{d}s\). 

\begin{thm}\label{thm:thm1}
Given the initial distribution $p_0$, denote the ending distribution of the two-period dynamics
\begin{equation}\label{eq:2period_dyn}
    \partial_t p_t = g_t p_t, t \in [0,\lambda]; \quad \partial_t p_t = -\nabla \cdot (p_t v_t), t \in (\lambda,1],
\end{equation}
as $p_1$, and denote the ending distribution of the joint dynamics starting from $p_0$
\begin{equation}
   \partial_t \tilde{p}_t = -\nabla \cdot (\tilde{p}_t \tilde{v}_t) + \tilde{g}_t \tilde{p}_t, \quad  t \in [0,1], \quad \tilde{p}_0 = p_0,
\end{equation}
as $\tilde{p}_1$, then it holds that \(\tilde{p}_1 = p_1\).
\end{thm}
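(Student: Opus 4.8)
The plan is to solve both dynamics in Lagrangian (characteristic) form and then match the two end-states through a single time change. First I would write period one and period two of Eq.~\eqref{eq:2period_dyn} in closed form. Since the first period $\partial_t p_t = g_t p_t$ involves no spatial motion, the density at each point simply multiplies by a growth factor, giving $p_\lambda(\x) = p_0(\x)\exp\big(\int_0^\lambda g_s(\x)\,\mathrm{d}s\big)$. Writing $\phi_{\lambda,s}$ for the flow of $v$ from time $\lambda$ to $s$, the second (pure-transport) period gives $p_1 = (\phi_{\lambda,1})_\#\,p_\lambda$, so that
\begin{equation}
p_1 = (\phi_{\lambda,1})_\#\Big[\,p_0\,\exp\big(\textstyle\int_0^\lambda g_s\,\mathrm{d}s\big)\Big]. \tag{A}
\end{equation}

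Next I would record the Lagrangian solution of the joint growth-transport PDE. Along the characteristics $X_t = \psi_{\tilde v,t}(\x_0)$ of $\tilde v$, the equation $\partial_t\tilde p_t = -\nabla\cdot(\tilde p_t\tilde v_t) + \tilde g_t\tilde p_t$ reduces to $\frac{\mathrm{d}}{\mathrm{d}t}\log\tilde p_t(X_t) = \tilde g_t(X_t) - (\nabla\cdot\tilde v_t)(X_t)$; absorbing the divergence term into the Jacobian of $\psi_{\tilde v,t}$ yields the pushforward representation
\begin{equation}
\tilde p_1 = (\psi_{\tilde v,1})_\#\Big[\,p_0\,\exp\big(\textstyle\int_0^1 \tilde g_s(\psi_{\tilde v,s}(\cdot))\,\mathrm{d}s\big)\Big]. \tag{B}
\end{equation}
I can justify (B) cleanly by verifying that the right-hand measure satisfies the same weak formulation and initial condition as $\tilde p_t$ and invoking uniqueness of the continuity-with-reaction equation.

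The core of the argument is then to show that the two ingredients of (B) coincide with those of (A). For the flow map, introduce the time change $\tau(t) = (1-\lambda)t + \lambda$. Because $\tilde v_t = (1-\lambda)\,v_{\tau(t)}$ and $\tau'=1-\lambda$, the curve $t\mapsto \phi_{\lambda,\tau(t)}(\x_0)$ solves the $\tilde v$-ODE with the correct initial value, so $\psi_{\tilde v,t}=\phi_{\lambda,\tau(t)}$, and in particular $\psi_{\tilde v,1}=\phi_{\lambda,1}$. For the accumulated growth, the definition $\tilde g_t(\x)=\lambda\, g_{\lambda t}(\psi_{\tilde v,t}^{-1}(\x))$ is engineered precisely so that $\tilde g_s(\psi_{\tilde v,s}(\x_0)) = \lambda\, g_{\lambda s}(\x_0)$; substituting $u=\lambda s$ then gives $\int_0^1 \tilde g_s(\psi_{\tilde v,s}(\x_0))\,\mathrm{d}s = \int_0^\lambda g_u(\x_0)\,\mathrm{d}u$. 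Plugging both identities into (B) turns it into (A), which establishes $\tilde p_1 = p_1$.

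The only genuinely delicate point is making the pushforward formula (B) rigorous (equivalently, the characteristic solution of the transport-with-growth equation) and ensuring that $\psi_{\tilde v,t}$ is a well-defined diffeomorphism so that the inverse $\psi_{\tilde v,t}^{-1}$ appearing in $\tilde g_t$ is meaningful; under the standard regularity assumed for the fields $v$ and $g$ (e.g.\ Lipschitz in space, so that flows exist, are unique, and are invertible) this is routine. Everything else is the time-change and the substitution computation, whose success hinges entirely on the particular reparametrization built into Eq.~\eqref{eq:reparam} — the factor $1-\lambda$ rescaling $v$ and the composition with $\psi_{\tilde v,t}^{-1}$ inside $\tilde g$ are exactly what force the two end-states to agree.
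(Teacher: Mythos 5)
Your proposal is correct and follows essentially the same route as the paper's proof: both solve the two dynamics along characteristics, use the time change $s=(1-\lambda)t+\lambda$ to identify the flow of $\tilde v$ with the second-period flow of $v$, and use the identity $\tilde g_t(\psi_{\tilde v,t}(\x_0))=\lambda g_{\lambda t}(\x_0)$ together with the substitution $u=\lambda s$ to match the accumulated growth. The only difference is presentational — the paper tracks a single particle and its log-weight and then quantifies over $\x_0\sim p_0$, whereas you phrase the same computation as an equality of weighted pushforward measures.
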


The proof is detailed in Appendix~\ref{append:proof_thm1}. Theorem~\ref{thm:thm1} indicates that given the same initial distribution at $t=0$, the two-period and the joint dynamics yield the same distribution at $t=1$. Since $\tilde{v}_t,\tilde{g}_t$ are defined on $[0,1]$, the joint dynamics is more practical in applications, \eg, biological systems. Meanwhile, if $v_t,g_t$ is defined as in Proposition~\ref{prop:prop1}, $\tilde{v}_t,\tilde{g}_t$ will inherit the optimality properties of the semi-relaxed optimal transport, benefitting our flow matching algorithm, as shown in Sect.~\ref{sec:matchingloss}.


\subsection{Velocity and Growth Flow Matching}\label{sec:matchingloss}
Since the optimal growth function \(g^*_t\) in Proposition~\ref{prop:prop1} is required to evolve \(p_0\) to \({\rm P}_{\#}^0\pi^*\), satisfying the marginal constraint
\(
p_0 \exp\left( \int_0^\lambda g^*_s \, ds \right) = \rm P_{\#}^0 \pi^*,
\)
possible choice for \(g^*_t\) is not unique. For simplicity and ease of implementation, we choose a time-independent form:
\(
g^*_t(\x) = \frac{\log {\rm P}_{\#}^0\pi^*(\x) - \log p_0(\x)}{\lambda}
\), which minimizes the $L^2$-norm energy functional of $g$ and {can be explained by the Malthusian growth model~\cite{Malthus1798}, \ie, the exponential growth model} (see Appendix~\ref{append:g=log}).
By plugging the expression of \( v^*_t \) and consists with from Proposition~\ref{prop:prop1} and $g_t^*$ into Eq.~\eqref{eq:reparam}, we obtain
\begin{equation}\label{eq:tildevg}
\tilde{v}_t(\psi_{\tilde{v},t}(\x_0)) = T^*(\x_0) - \x_0, \quad \tilde{g}_t(\psi_{\tilde{v},t}(\x_0)) = \log {\rm P}_{\#}^0\pi^*(\x_0) - \log p_0(\x_0),\mbox{ where }\ \x_0 \sim p_0.
\end{equation}

In practice, only finite samples of $p_0$ and $p_1$ can be accessed. We then aim to learn neural networks $v_\theta(\x,t), g_\omega(\x,t)$ to approximate $\tilde{v}_t(\x),\tilde{g}_t(\x)$  by 
velocity and growth flow matching as 
\begin{equation}\label{eq:fm}
  \min_{\theta,\omega} \E_{\x_0}\E_t\left[\Vert v_\theta(\psi_{\tilde{v},t}(\x_0),t) - \tilde{v}_t(\psi_{\tilde{v},t}(\x_0))\Vert^2 +\left| g_\omega(\psi_{\tilde{v},t}(\x_0),t) - \tilde{g}_t(\psi_{\tilde{v},t}(\x_0))\right|^2\right],  
\end{equation}
that can generalize to new samples. We first estimate $\pi^*$ and $T^*$ using samples.
Given samples $\{\x_0^i\}_{i=1}^n$ of $p_0$ and $\{\x_0^i\}_{j=1}^m$ of $p_1$,
We seek the optimal transport plan \(\pi^{0\to1} \approx \pi^*(\x_0, \x_1)\) by solving the entropy-regularized semi-relaxed transport problem using Sinkhorn-based algorithm \cite{cuturi2013sinkhorn,peyre_computational_2020} as 
\begin{equation}\label{eq:discretesemikp}
   \pi^{0\to1} = \arg\min_{\pi \geq 0} \sum_{i,j} c_{ij} \pi_{ij} + \epsilon H(\pi) + \tau \mathrm{KL}(\pi\mathbf{1}_m||\mathbf{1}_n),\quad \text{subject to} \quad \pi^\top\mathbf{1}_n = \mathbf{1}_m, 
\end{equation}
where $c_{ij}=\Vert \x_0^i-\x_1^j\Vert^2$, $\mathbf{1}_m$ is the all-one vector, \(H(\pi)\) denotes the negative entropy of \(\pi\), and \(\epsilon\) and \(\tau\) are hyperparameters. 
With $\pi^{0\to1}$, $T^*$ can be estimated using barycentric mapping $T^*(\x_0^i)\approx\frac{1}{N^i}\sum_j\pi^{0\to1}_{ij}\x_1^j$ where $N^i=\sum_j\pi^{0\to1}_{ij}$. In implementation, for each $\x_0^i$, we sample $j$ from $(1,2,\cdots,m)$ with probability $\frac{1}{N^i}\pi^{0\to1}_{ij}$, and approximate $T^*(\x_0^i)\approx \x_1^j$. This approximation is accurate as $\epsilon\rightarrow 0$.
Therefore, $\psi_{\tilde{v},t}(\x_0^i)\approx \x^i_0 + t(\x^j_1-\x_0^i)\triangleq \x_t$ for $t\in [0,1]$, and Eq.~\eqref{eq:fm} becomes
\begin{equation}\label{eq:loss_vgfm}
    \mathcal{L}_{\rm VGFM}(\theta,\omega) = \sum_{i=1}^n\sum_{j=1}^m \pi_{ij}^{0\to 1}\mathbb{E}_{t}\left[\left\|v_\theta(\x_t, t)-(\x_1^j-\x_0^i)\right\|^2 + \left| g_\omega(\x_t, t) - \log([\pi^{0\to 1}\mathbf{1}_m]_i)\right|^2\right],
\end{equation}
where $[\cdot]_i$ is the $i$-th element. The last term in Eq.~\eqref{eq:loss_vgfm} is because $p_0(\x_0^i)=1$, for $i=1,\cdots,n$. In experiments, $\mathcal{L}_{\rm VGFM}$ can be implemented using mini-batch samples. Specifically, we sample $(i,j)$ from $\pi^{0\to 1}$ and $t$ from $\mathcal{U}(0,1)$, then calculate the loss in square brackets. To improve robustness, we add a Gaussian noise to $\x_t$ in experiments.
\subsection{Training Process}\label{sec:overalltrain}
Our ultimate goal is to generate data close to the real data distribution. The current learned velocity field of flow matching is the expectation of the conditional velocity field, \ie,$v_\theta(x,t)=\mathbb{E}_{z|(x,t)}v_\theta(x,t|z)$. In practice, since we use limited samples to learn the velocity field $v$, each numerical integration step introduces an approximation error, which can be accumulated during the numerical ODE solving process, resulting in a deviation from the true trajectory \cite{frans2025one}. To address this bias, we employ distribution fitting loss besides the flow matching loss in Eq.~\eqref{eq:loss_vgfm}, to improve performance further. Next, we introduce the distribution fitting loss and our training algorithm.

To achieve better performance, we incorporate the Wasserstein distance between generated samples and observed samples as part of the loss function. Specifically, let \( \mathcal{X}_t = \{\x^i_{t}\}_{i=1}^{N_t} \) for \( t = 0, 1, \cdots, T-1 \) denote the observed samples at different time points, and denote \( p(\mathcal{X}_t) =  \sum_{i=1}^{N_t} \delta_{\x^i_{t}} \). We define \( \phi_{v_\theta}: \mathbb{R}^d \times \mathcal{T} \to \mathbb{R}^{d|\mathcal{T}|} \) as the trajectory mapping function parameterized by the neural network \( v_\theta \), which takes a given starting point as the initial condition and outputs particle coordinates at time indices \( \mathcal{T} \) according to ODE dynamics, where \( \mathcal{T} \) denotes a set of time steps. Given an initial set \( \mathcal{X}_0 \), the model \( \phi_{v_\theta} \) predicts particle positions at future time points in \( \mathcal{T} \). Specifically, the predicted samples are computed by applying the neural ODE to \( \mathcal{X}_0 \) over the time indices \( \{1, \cdots, T-1\} \), \ie, \( \hat{\mathcal{X}}_1, \cdots, \hat{\mathcal{X}}_{T-1} = \phi_{v_\theta}(\mathcal{X}_0, \{1, \cdots, T-1\}) \). Similarly, we define \( \phi_{g_\omega} \) as the particle weight mapping function parameterized by the neural network \( g_\omega \), which takes the initial weight of particle \( i \) as input and outputs the corresponding weight values at \( \mathcal{T} \) under ODE dynamics. For simplicity of notation, we use \( \hat{w}(\hat{\x}) \mbox{ for } \hat{\x}\in\hat{\mathcal{X}}_t \) to denote the weight generated by $\phi_{g_\omega}$ corresponding to the particles in set \( \hat{\mathcal{X}} \) which is generated by $\phi_{v_\theta}$ .
The distribution fitting loss is defined as the Wasserstein distance as 
\begin{equation}\label{eq:wloss}
    \mathcal{L}_{\mathrm{OT}}(\theta,\omega)=\sum_{t=1}^{T-1}\mathrm{\mathcal{W}_1}\left(\frac{1}{N_t}p(\mathcal{X}_t),\frac{1}{\sum_{\hat{\x}\in\hat{\mathcal{X}_t}} \hat{w}(\hat{\x})}p_{\hat{w}}(\hat{\mathcal{X}_t})\right),
\end{equation}
where $p_{\hat{w}}(\hat{\mathcal{X}_t})=\sum_{\hat{\x}\in\hat{\mathcal{X}_t}} \hat{w}(\hat{\x})\delta_{\hat{\x}}$, ${\rm \mathcal{W}_1}$ is the 1-Wasserstein distance with Euclidean norm.
\paragraph{Total loss and algorithm.}
Note that the matching loss can be easily generalized to multi-time snapshots by employing Eq.~\eqref{eq:loss_vgfm} 
among two consecutive snapshots $p_t$ and $p_{t+1}$. Combined with distribution fitting loss in Eq.~\eqref{eq:wloss}, we obtain the final loss
\begin{equation}\label{eq:finalloss}
\mathcal{L}(\theta,\omega)=\mathcal{L}_{\mathrm{VGFM}}(\theta,\omega)+\mathcal{L}_{\mathrm{OT}}(\theta,\omega)
\end{equation}
During training, we employ a parameter scheduling scheme in which we initially use $\mathcal{L}_{\rm VGFM}$ as a warm-up stage, before enabling both loss terms for joint training. This design provides a notable advantage: after the warm-up stage, \( v_\theta \) and \( g_\omega \) are well-initialized through conditional matching, which facilitates convergence when switching to $\mathcal{W}_1$-based training. The detailed algorithm is presented in Algorithm~\ref{algo:}. For further details on the warm-up procedure and parameter scheduling, please refer to Appendix~\ref{apdx:tau}.

\begin{algorithm}[t]
\caption{Training algorithm of joint velocity-growth flow matching}
\label{algo:}
\KwIn{Observed data $\mathcal{X}_0, \dots, \mathcal{X}_{T-1}$; warm-up epochs $M_1$; training epochs $M_2$}
\KwOut{Trained velocity field $v_\theta$ and growth function $g_\omega$}

Compute transport plans $\pi^{t \to t+1}$ for $t = 0, \dots, T-2$\;

\For{$i=1$ \KwTo $M_2$}{
    \For{$t_0 = 0$ \KwTo $T-2$}{
        Sample a batch $(i,j) \sim \pi^{t_0 \to t_0+1}$\;
        Sample $t \sim \mathcal{U}(0,1) + t_0$, and $\x_t \sim \mathcal{N}(t \x_1^j + (1 - t)\x_0^i, \sigma^2 I)$\;
        $\mathcal{L} \leftarrow \mathcal{L} + \left\|v_\theta(\x_t, t)-(\x_1^j-\x_0^i)\right\|^2 + \left| g_\omega(\x_t, t) - \log([\pi^{0\to 1}\mathbf{1}_m]_i)\right|^2$\;
        \If{$i>M_1$}{
        $\hat{\mathcal{X}}_{t_0+1} \leftarrow \phi_{v_\theta}(\hat{\mathcal{X}}_{t_0}, t_0+1)$,\quad $\hat{w}_{t_0+1}(\hat{\mathcal{X}}_{t_0+1}) \leftarrow \phi_{g_\omega}(\hat{\mathcal{X}}_{t_0}, t_0+1)$\;
         $\mathcal{L} \leftarrow \mathcal{L}+\mathrm{\mathcal{W}_1}\left(\frac{1}{N_{t_0+1}}p(\mathcal{X}_{t_0+1}),\frac{1}{\sum_{\hat{\x}\in\hat{\mathcal{X}_t}} \hat{w}(\hat{\x})}p_{\hat{w}}(\hat{\mathcal{X}}_{t_0+1})\right)$\;
        }
        
    }
    Update $\theta,\omega$ using $\mathcal{L}$
}

\end{algorithm}

\section{Experiments}\label{sec:experiments}
We evaluate VGFM on synthetic and real datasets, and compare it with state-of-the-art approaches, including methods that account for distributional unbalancedness \cite{zhang2024learning,pariset2023unbalanced,sha2024reconstructing} as well as those that assume mass conservation \cite{tong2024improving,kapusniak2024metric,tong2024simulation}. Our code will be released online.

\textbf{Synthetic datasets.}
Inspired by \cite{sha2024reconstructing,zhang2024learning}, 
we adopt the Simulation Gene dataset that applies a three-gene regulatory network to produce a quiescent region and an area exhibiting both transition and growth that can be observed. 
We also use Dyngen~\cite{cannoodt2021spearheading} to simulate a scRNA-seq dataset from a dynamic cellular process, which exhibits pronounced branching unbalancedness, with significantly different cell abundances across divergent lineages. Following the experiment setup of \cite{huguet2022manifold}, we use PHATE \cite{moon2019visualizing} to reduce its dimensions to 5.
Additionally, inspired by \cite{zhang2024learning} that employs a high-dimensional Gaussian mixture model \cite{ruthotto2020machine} to evaluate the scalability of models, we adopt a more challenging setting: 1000-dimensional Gaussian mixtures.


\textbf{Real-world dataset.} We conduct experiments on three real-world datasets, Embryoid Body (EB)~\cite{moon2018manifold}, CITE-seq (CITE)~\cite{lance2022multimodal} and Pancreas \cite{bastidas2019comprehensive}, preprocessed following the procedures in~\cite{tong_trajectorynet_2020,tong2024improving,eyringunbalancedness}. EB and CITE dataset are evaluated under both 5- and 50-dimensional PCA projections, {while Pancreas dataset is evaluated under 2000 highly variable gene space to assess VGFM's scalability on real-world data.} Specifically, the EB (5D), CITE (5D), and CITE (50D) configurations are assessed using a hold-out strategy, same as \cite{kapusniak2024metric}, in which an intermediate time point is excluded during training. The model is then used to predict the distribution at the hold-out time, and we compute the $\mathcal{W}_1$ distance between the predicted and true distributions at that time point. The EB (50D) setting is further used for comparison against~\cite{zhang2024learning} and ablation study, evaluated using both \( \mathcal{W}_1 \) and the Relative Mass Error (RME) (see below). {In the Pancreas dataset, in addition to $\mathcal{W}_1$ and RME, 
we also present the mean–variance trends between generated and real gene expressions 
and analyze the interpretability of the learned growth function $g$.}

\textbf{Evaluation metrics.}
We follow~\cite{zhang2024learning} to use $\mathcal{W}_1$ to measure the distance between the predicted cell distribution and the true cell distribution. 
Additionally, to assess the accuracy of the growth function $g_\omega$, we normalize the total mass at time 0 to 1, and denote the relative mass at time $t$ \textit{w.r.t} time 0 as $m_t$, \ie, $m_t=\frac{N_t}{N_0}$. The predicted relative mass at time $t$ based on the evolution induced by $g_\omega$ is denoted as $\hat{m}_t = \sum_{\hat{\x}\in\hat{\mathcal{X}_t}} \hat{w}(\hat{\x}) / N_0$, where $\sum_{\hat{\x}\in\hat{\mathcal{X}_t}} \hat{w}(\hat{\x})$ is the predicted total mass at time $t$. We define the relative mass error, denoted as RME, \ie, $\mathrm{RME}=\frac{|m_t - \hat{m}_t|}{m_t}$.




\textbf{Implementation details.}
We employ a 3-layer (5-layer for dimensions greater than 50) MLP with 256 hidden units and LeakyReLU activation to parameterize both the velocity field \( v_\theta \) and the growth function \( g_\omega \). Optimization is performed using the Adam optimizer with a learning rate of \( 10^{-3} \) at warm-up stage and \( 10^{-4} \) after applying distribution fitting loss. A warm-up stage of 500 iterations for synthetic datasets and 5000 iterations for real-world dataset is applied (only matching loss in Eq.~\eqref{eq:loss_vgfm} ), after which the distribution
fitting loss \( \mathcal{L}_{\mathrm{OT}} \) defined in Eq.~\eqref{eq:wloss} is applied for an additional 30 training epochs. We will discuss the strategy to set \( \epsilon \) and \( \tau \) in the Appendix~\ref{apdx:tau}.

\subsection{ Results and Analysis}
\textbf{Ability to reconstruct cellular dynamics.}
As shown in Tab.~\ref{tab:simulation_data}, our model achieves lowest mean \( \mathcal{W}_1 \) and RME on synthetic datasets, outperforming both unbalancedness-aware methods~\cite{zhang2024learning,sha2024reconstructing,pariset2023unbalanced} and unbalancedness-ignoring methods~\cite{kapusniak2024metric,tong2024improving}, which demonstrate the ability of VGFM to model cellular dynamics for data and mass fitting.
We also visualize the learned trajectory and growth function on the Simulation Gene dataset in Figs.~\ref{fig:6pic} (a-c).
Figs.~\ref{fig:6pic} (a-c) show that VGFM not only reconstructs accurate trajectories but also produces faithful predictions of spatially varying growth rates. 

\textbf{Scalability and mass-matching performance compared with existing unbalanced models.}
While existing SOTA approaches of UDSB~\cite{pariset2023unbalanced}, TIGON~\cite{sha2024reconstructing} and DeepRUOT~\cite{zhang2024learning} incorporate unbalanced modeling, {these models either introduce biologically implausible transitions~\cite{pariset2023unbalanced} (as discussed in \cite{zhang2024learning}) or face difficulties in scaling to high-dimensions~\cite{pariset2023unbalanced,zhang2024learning,sha2024reconstructing}. As a result, these methods do not perform well or converge on the 1000-dimensional Gaussian dataset, as in Tab.~\ref{tab:simulation_data}.}
In contrast, VGFM integrates flow matching based on unbalanced transport, allowing it to scale more effectively to high-dimensional datasets. Our method more accurately recovers the ground-truth dynamics defined in \cite{zhang2024learning} and achieves superior mass-matching accuracy, as shown in Tab.~\ref{tab:simulation_data} and Figs.~\ref{fig:6pic} (d-f).


\begin{figure}[t]
    \centering
    \subfigure[Predicted dynamics]
    {
    \includegraphics[width=0.3\textwidth]{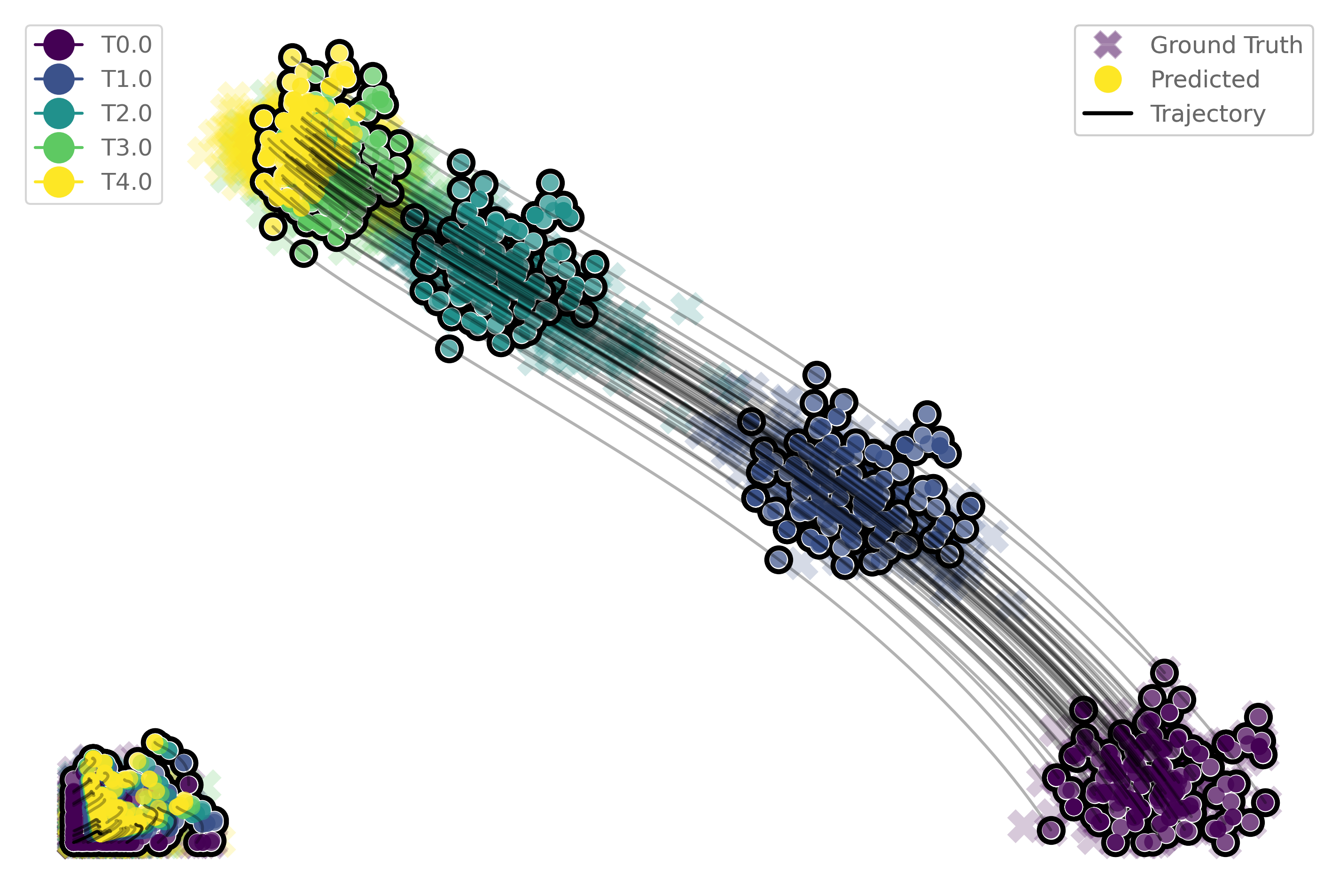}
    }
    \subfigure[Predicted growth rate]
    {
    \includegraphics[width=0.3\textwidth]{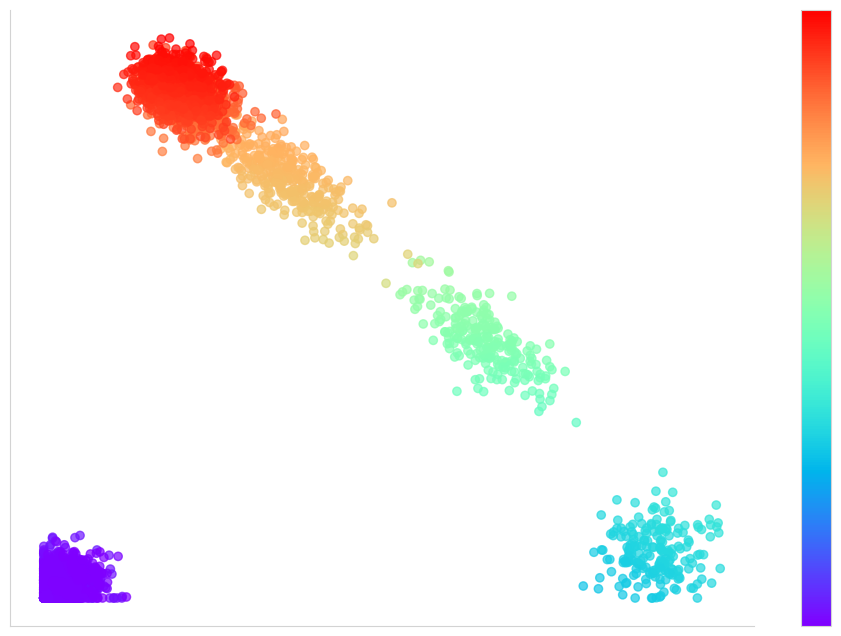}
    }
    \subfigure[True growth rate]
    {
    \includegraphics[width=0.3\textwidth]{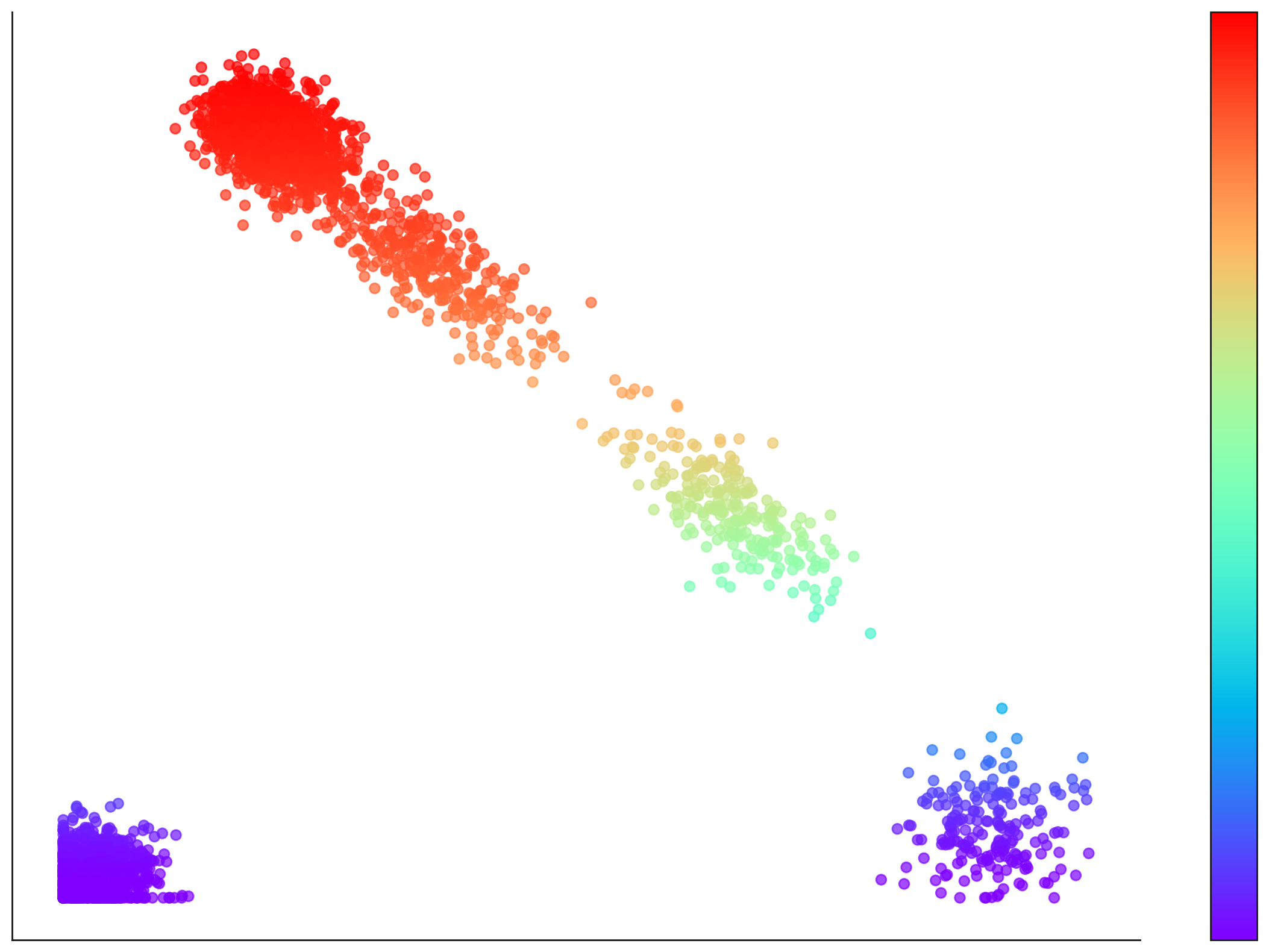}
    }
    \subfigure[Simulation Gene]
    {
    \includegraphics[width=0.3\textwidth]{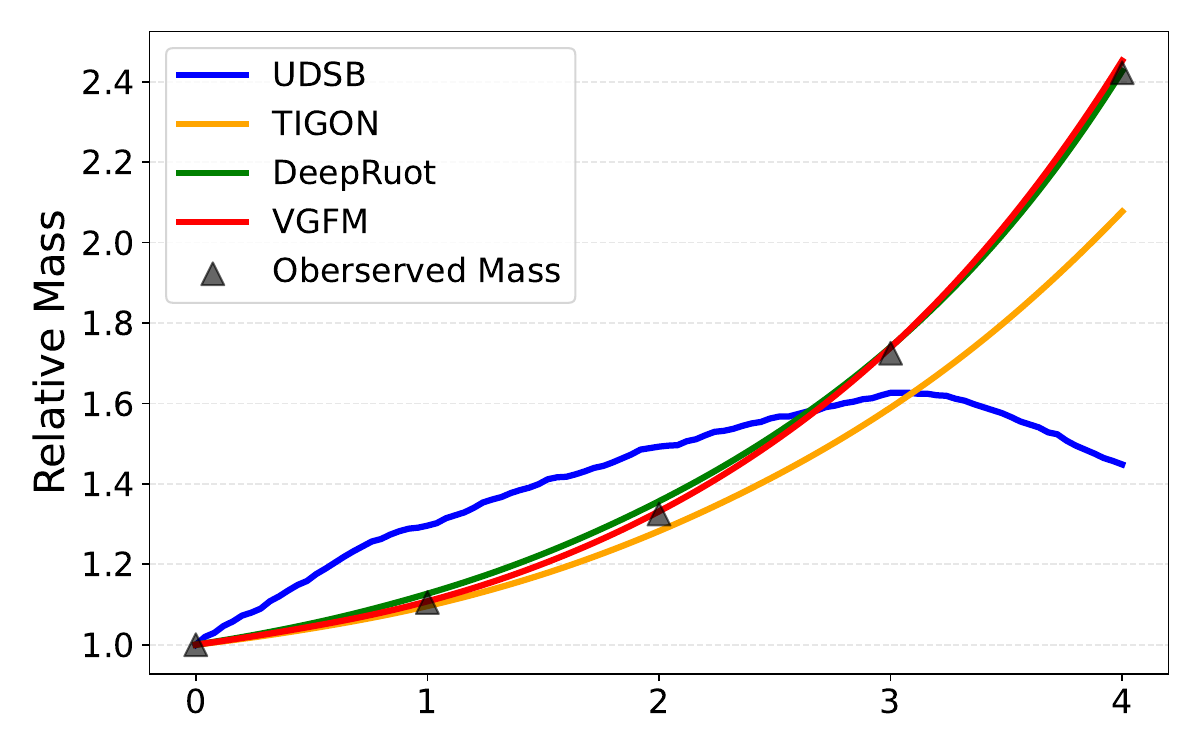}
    }
    \subfigure[Dyngen]
    {
    \includegraphics[width=0.3\textwidth]{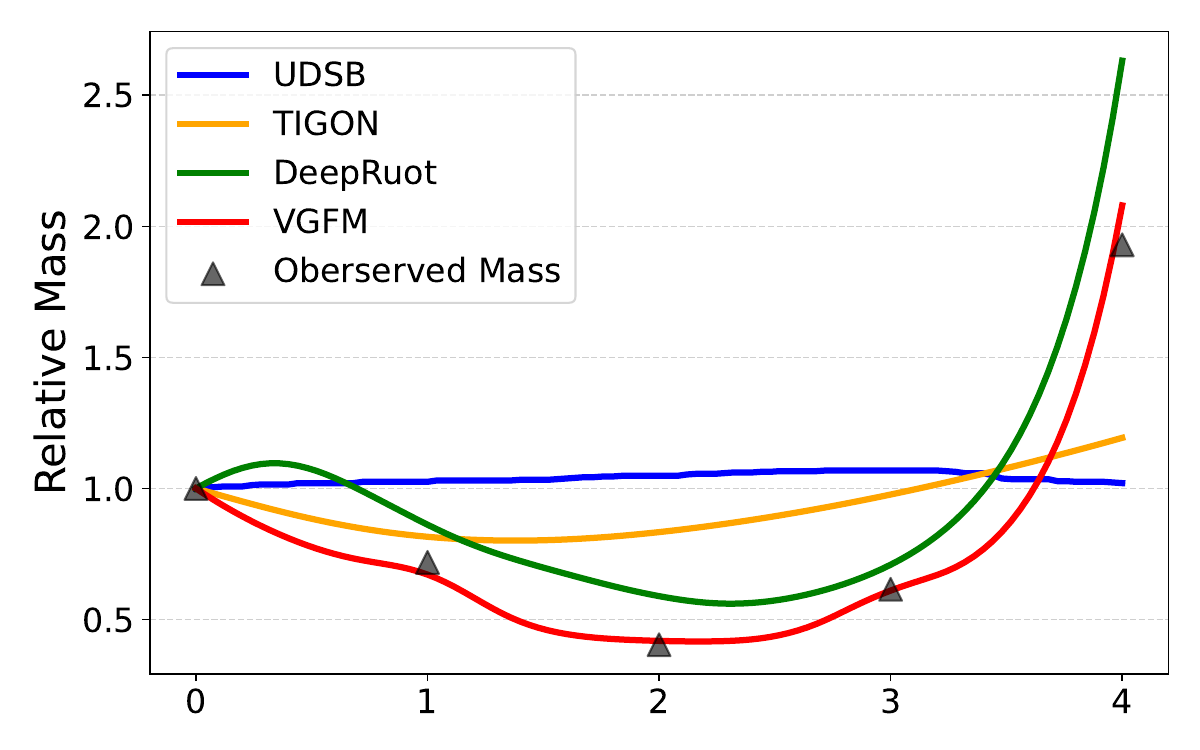}
    }
    \subfigure[EB (50D)]
    {
    \includegraphics[width=0.3\textwidth]{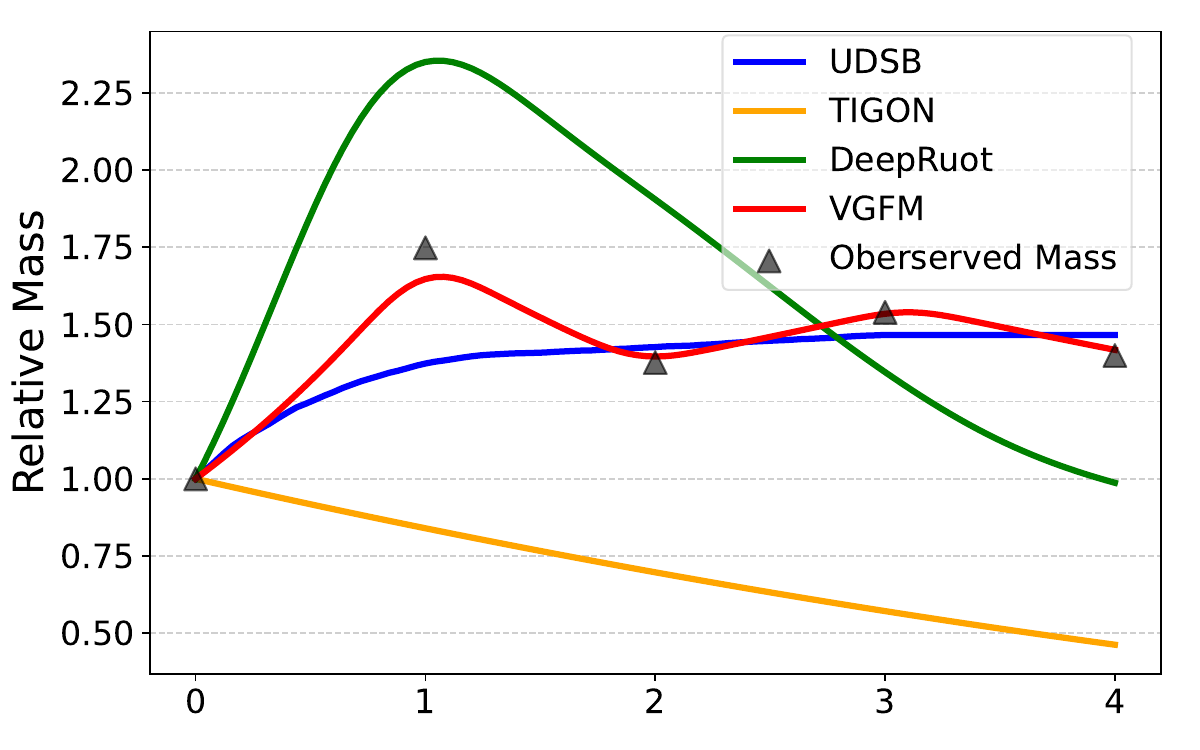}
    }
    \caption{
    (a) Predicted dynamics with trajectories of VGFM on Simulation Gene. (b) Predicted and (c) true growth rates on Simulation Gene. Note that for Simulation Gene, the true growth rates could be accessed. (d), (e) and (f) respectively compare predicted relative mass by UDSB, TIGON, DeepRUOT, and VFGM, and observed relative mass from data, on three datasets. Note that in (f), on 50D dataset, the mass predicted by TIGON deviates significantly from the observed trend, because of its difficulty in handling high-dimensional data as discussed in their paper~\cite{sha2024reconstructing}.}
    \label{fig:6pic}
\end{figure}


\begin{table}[t]
\centering
\caption{Mean $\mathcal{W}_1$ and RME over all time on synthetic datasets. *OT-CFM and OT-MFM do not model the growth function, thus, RME is not computed. ``N/C'' means ``not converge''.}
\label{tab:simulation_data}
{\setlength{\tabcolsep}{12.0pt}
\begin{tabular}{lcccccc}
\toprule
\multirow{2}{*}{\textbf{Method}} 
& \multicolumn{2}{c}{Simulation Gene (2D)} 
& \multicolumn{2}{c}{Dyngen (5D)} 
& \multicolumn{2}{c}{Gaussian (1000D)} \\
\cmidrule(lr){2-3} \cmidrule(lr){4-5} \cmidrule(lr){6-7}
& $\mathcal{W}_1$ & RME 
& $\mathcal{W}_1$ & RME 
& $\mathcal{W}_1$ & RME \\
\midrule
OT-CFM* \cite{tong2024improving}     &  0.302      & —       &3.926        &  —      &   10.126     &  —      \\
OT-MFM* \cite{kapusniak2024metric}   &  0.311      & —       &  3.976      &  —      &  11.008      &  —      \\
UDSB \cite{pariset2023unbalanced}   &  0.665
      &    0.192    &    1.914    &   0.658     &    N/C    &    N/C    \\
TIGON \cite{sha2024reconstructing} &    0.099    &   0.065   &   1.029   &   0.542   &    N/C   &   N/C    \\
DeepRUOT \cite{zhang2024learning}   & 0.068
       &    0.016    & 0.474       &   0.199
     &   N/C     &    N/C   \\
\textbf{VGFM}                                &  \textbf{0.046}      &  \textbf{0.006}     &      \textbf{0.420} & \textbf{0.053}      &   \textbf{3.010}     &   \textbf{0.037}     \\
\bottomrule
\end{tabular}
}
\end{table}

\textbf{Hold-one-out results on real-world datasets.}
Table~\ref{tab:real} compares hold-one-out results of different methods on EB (5D) and CITE (5D and 50D) datasets. It can be observed that VGFM outperforms most existing approaches in terms of $\mathcal{W}_1$ distance. We attribute this to the integration of complementary strengths from both flow matching and simulation-based (Neural ODE) frameworks. On the one 
hand, flow matching provides a robust initialization for learning the velocity field and growth rate, which helps maintain training stability after incorporating the distribution fitting loss \( \mathcal{L}_{\mathrm{OT}} \), leading to better performance than purely simulation-based models, as also shown in Fig.~\ref{fig:eb_vism}. On the other hand, since the flow matching objective serves only as an upper bound on the true Wasserstein distance \cite{benton2024error}, the introduction of \( \mathcal{L}_{\mathrm{OT}} \) allows direct optimization of the distribution fitting loss, further enhancing model performance. Moreover, by explicitly modeling cell growth through a time-varying weight function, our framework generalizes beyond the conventional setting where Wasserstein distance is computed between unweighted discrete measures, thereby offering greater flexibility in evaluation and enabling more possible transport solutions. 

\textbf{Pancreas dataset under 2000-dimensional gene space.}
To further explore the scalability of VGFM, we applied our model and compared methods explicitly modeling $g_t(x)$ \cite{sha2024reconstructing,zhang2024learning} to the Pancreas dataset with 2000 highly variable genes. We first observed that VGFM is the only method showing a steadily decreasing training loss, both for $\mathcal{L}_{\mathrm{VGFM}}$ and $\mathcal{L}_{\mathrm{OT}}$, as shown in Fig.\ref{tab:pan}. Next, following \cite{palma2025multi}, we calculated the means and variances of the real and generated gene at day 15.5 and plotted the corresponding mean-variance trend and histograms. The results show that the generated samples closely follow the real data, as shown in Fig. \ref{fig:6pic_pan}.
Finally, we analyzed the learned $g_w(x,t)$ and found that our model successfully reconstructs the unbalanced pattern and identifies key genes without being given any prior knowledge of the cell types. The results are reported in Fig. \ref{fig:mass_branch}, \ref{fig:mass_pred} and \ref{fig:gene_growth}. For more details, please refer to Appendix \ref{appd: pan}.

\textbf{Ablation study on loss terms.}
To assess the contribution 
of each loss component in our framework, 
\begin{wraptable}{r}{0.46\textwidth}
\vspace{-0.2cm}
\centering
\caption{Mean hold-one-out results on EB and CITE datasets over hold-out times.}
\label{tab:real}
\begin{tabular}{lccc}
\toprule
\multirow{2}{*}{\textbf{Method}} 
& \multicolumn{1}{c}{EB} 
& \multicolumn{2}{c}{CITE}  \\
\cmidrule(lr){2-2} \cmidrule(lr){3-4}
& 5D & 5D & 50D \\
\midrule
OT-CFM \cite{tong2024improving} &0.790        & 0.882    &   38.756  \\
SF$^2$M \cite{tong2024simulation} & 0.793        &  0.920   & 38.524    \\
UDSB \cite{pariset2023unbalanced}          &  1.206  &  2.023   &  44.168   \\
OT-MFM \cite{kapusniak2024metric} &0.713         & \textbf{0.724}  &  \textbf{36.394}   \\
DeepRUOT \cite{zhang2024learning} &0.774     & 0.845    & 38.681   \\
\textbf{VGFM}     &\textbf{0.676}     &  0.745   &  37.386  \\
\bottomrule
\end{tabular}
\end{wraptable}we perform an ablation study on the EB dataset with unwhitened 50-dimensional PCA features. 
{Table~\ref{tab:ablation}} reports the performance across four time points. VGFM in general achieves the lowest \( \mathcal{W}_1 \) and RME values (excluding time 1) compared with VGFM (w/o $\mathcal{L}_{\rm{OT}}$) and  VGFM (w/o $\mathcal{L}_{\rm{VGFM}}$), demonstrating the effectiveness of both our joint velocity-growth matching loss and distribution fitting loss. Notably, by removing $\mathcal{L}_{\mathrm{VGFM}}$, both $\mathcal{W}_1$ and RME increase by a significant margin, implying the importance of our velocity and growth matching to our framework. Consistent with our motivation in Sect.~\ref{sec:overalltrain}, adding in distribution fitting loss \( \mathcal{L}_{\mathrm{OT}} \) results in lower $\mathcal{W}_1$ and RME, improved fitting ability to observed distributions.
\begin{figure}[H]
    \centering
    \subfigure[Predicted dynamics (w/o $\mathcal{L}_{\rm OT}$)]
    {
    \includegraphics[width=0.31\textwidth]{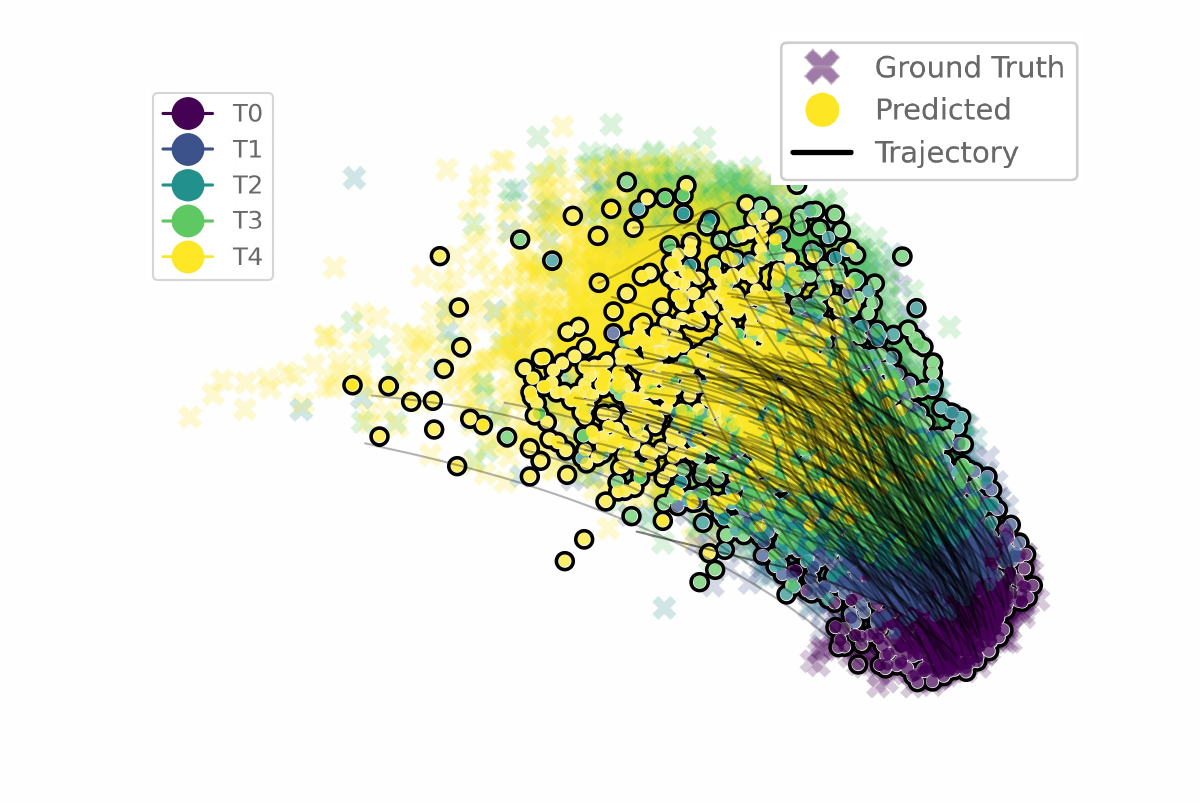}
    }
    \subfigure[Predicted dynamics]
    {
    \includegraphics[width=0.31\textwidth]{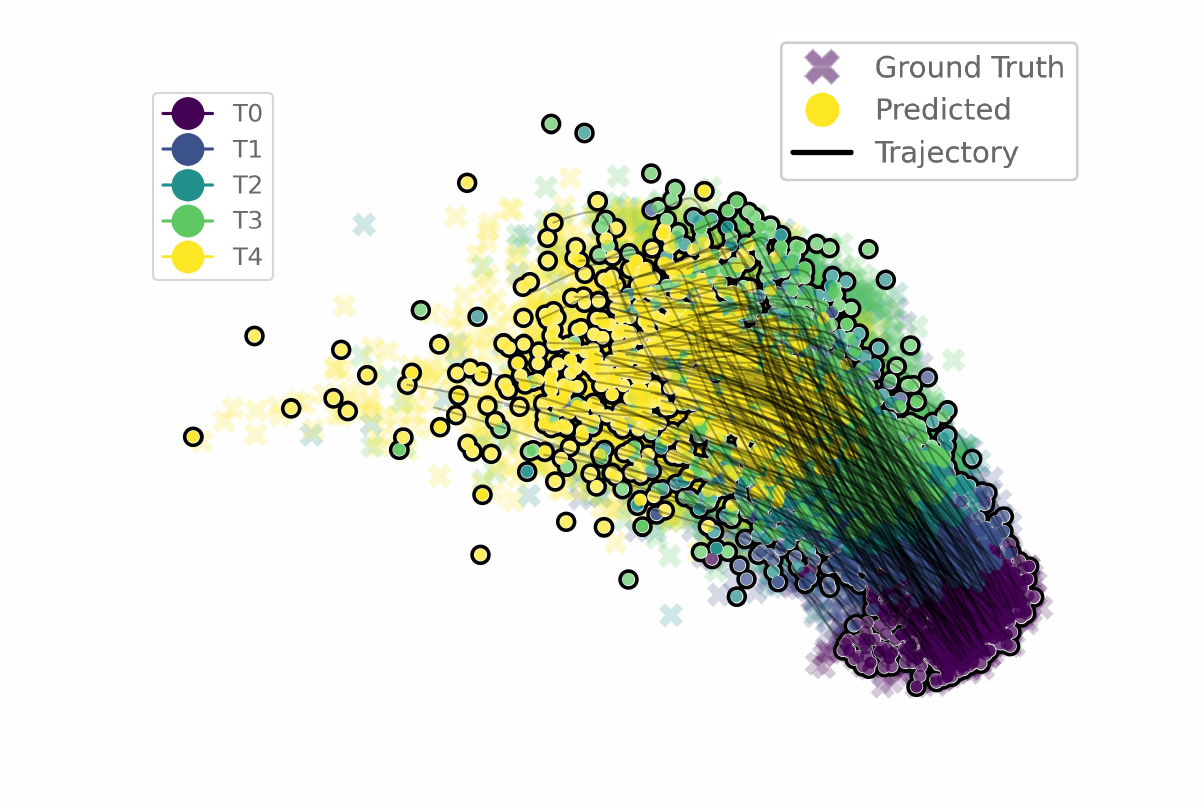}
    }
    \subfigure[Predicted growth rate]
    {
    \includegraphics[width=0.31\textwidth]{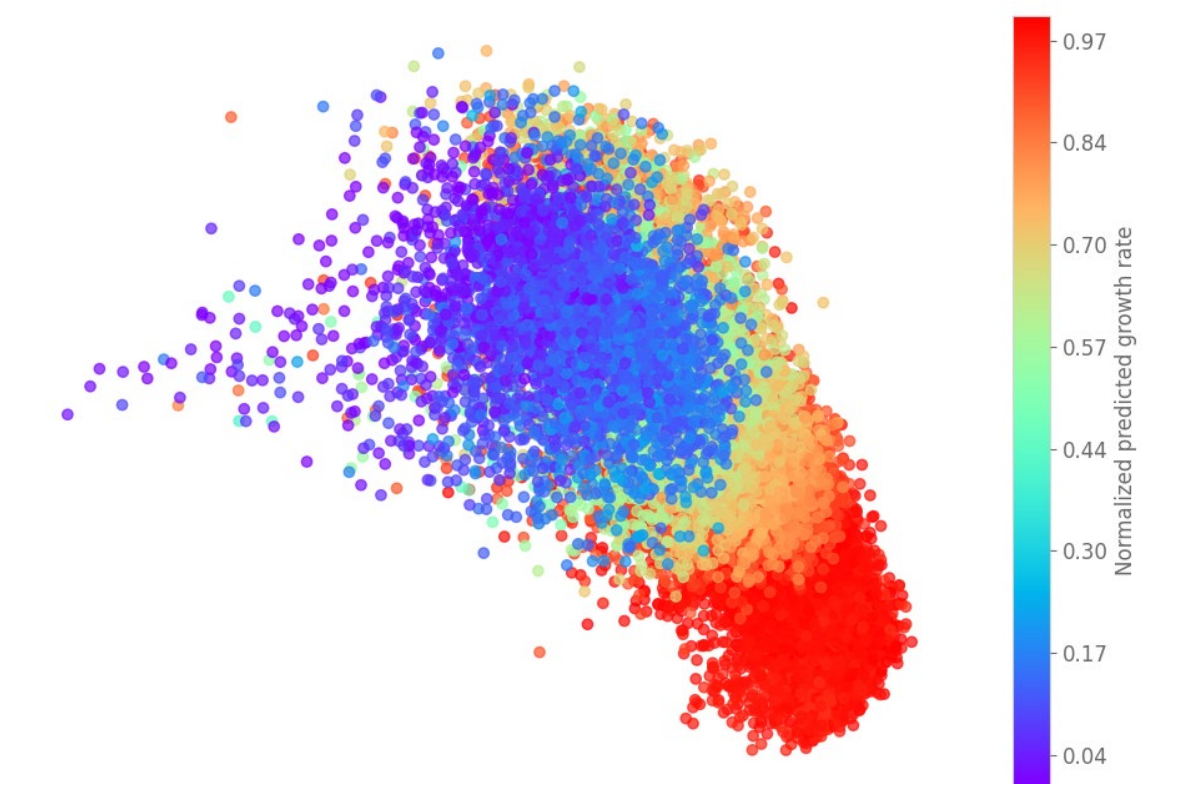}
    }
    \caption{Visualization of predicted dynamics by (a) VFGM (w/o $\mathcal{L}_{\rm OT}$) and (b) VGFM, and growth rate by (c) VFGM, on EB (5D) dataset, where the hold-out time is the first intermediate timepoint.}
    \label{fig:eb_vism}
\end{figure}
\textbf{Comparison on computational efficiency. }
We compare the computational cost of our method and the SOTA method DeepRUOT~\cite{zhang2024learning} in Tab.~\ref{tab:ablation}. For fair comparison, all methods in Tab.~\ref{tab:ablation} adopt the same networks 5-layer MLP with 256 hidden units, and are trained on the same GPU until convergence.
As shown in Table~\ref{tab:ablation}, our full model requires slightly more training time than the variant using only \( \mathcal{L}_{\rm VGFM} \), due to the training with additional distribution fitting loss. However, it still achieves faster convergence compared with simulation-based methods of DeepRUOT~\cite{zhang2024learning} and the ablation variant without \( \mathcal{L}_{\rm VGFM} \). This demonstrates the critical role of the loss \( \mathcal{L}_{\rm VGFM} \) for facilitating training convergence.

To make $\mathcal{L}_{\mathrm{OT}}$ a fully differentiable component in our training pipeline, we also employed the Sinkhorn divergence~\cite{feydy19a} to replace $\mathcal{W}_1$ as the distribution fitting loss $\mathcal{L}_{\mathrm{OT}}$, 
forming a variant of VGFM denoted as VGFM(*) in Tab.~\ref{tab:ablation}. 
It can be observed that using the Sinkhorn divergence leads to a further improvement in the accuracy of dynamics reconstruction, 
while reducing the computational cost for distribution fitting. 
For more details, please refer to Appendix~\ref{append:eb}.

\begin{table}[h]
\centering
\caption{Ablation study on EB (50D) dataset. For comparison, we also report the results of the SOTA approach DeepRUOT \cite{zhang2024learning} in this table.}
\label{tab:ablation}
\small  
{\setlength{\tabcolsep}{4.8pt}  
\begin{tabular}{lccccccccc}
\toprule
\multirow{2}{*}{{Method}} 
& \multicolumn{2}{c}{$t_1$} 
& \multicolumn{2}{c}{$t_2$} 
& \multicolumn{2}{c}{$t_3$}
& \multicolumn{2}{c}{$t_4$}
& \multirow{2}{*}{{Training time}} 
\\
\cmidrule(lr){2-3} 
\cmidrule(lr){4-5} 
\cmidrule(lr){6-7} 
\cmidrule(lr){8-9}
& $\mathcal{W}_1$ & RME 
& $\mathcal{W}_1$ & RME 
& $\mathcal{W}_1$ & RME
& $\mathcal{W}_1$ & RME 
& \\
\midrule
DeepRUOT \cite{zhang2024learning}         & 8.169  & 0.416 & 9.041  & 0.415 & 9.348  & 0.119 & 9.808  & 0.296 & 90 (mins) \\
VGFM (w/o $\mathcal{L}_{\rm{OT}}$)        & 8.915  & 0.020 & 10.590 & 0.098 & 10.915 & 0.067 & 11.635 & 0.088 & 6 (mins) \\
VGFM (w/o $\mathcal{L}_{\rm{VGFM}}$)      & 8.644  & 0.650 & 10.167 & 0.710 & 11.052 & 0.823 & 11.530 & 0.862 & 62 (mins)\\
VGFM                                      & 7.951 & 0.089 & \textbf{8.747} & 0.042 & 9.244 & \textbf{0.019} & 9.620 & \textbf{0.044} & 13 (mins) \\
VGFM  (*)                 & \textbf{7.902} & \textbf{0.018} & 8.767 &\textbf{0.013} &\textbf{9.063} & 0.083& \textbf{9.507}& 0.096 & 9 (mins)\\
\bottomrule
\end{tabular}
}
\end{table}

\section{Conclusion, Limitation, and Future Work}\label{sec:discussion}
The paper proposes the joint Velocity-Growth Flow Matching (VGFM) method that jointly learns state transition and mass growth of single-cell populations via flow matching. VGFM designs an ideal dynamics containing a velocity field and a growth function, driven by our presented two-period dynamic understanding of the static semi-relaxed optimal transport model. Approximating the ideal dynamics using neural networks yields the velocity-growth joint flow matching framework.

Although our approach achieves better scalability and training efficiency compared to simulation-based methods, it is not entirely simulation-free due to the incorporation of the distribution fitting loss $\mathcal{L}_{\rm OT}$. Moreover, since the learned growth rate relies on the observed cell counts at each time point, it may inadvertently capture effects that are not solely attributable to biological growth. Future work could address them by integrating biological priors into the learning process of the growth function~$ g$, and by exploring fully simulation-free alternatives that can achieve comparable performance.

\section*{Acknowledgment}
This work was supported by the National Key R\&D Program 2021YFA1003002, NSFC (12125104, 623B2084, 12501709, 12426313), China National Postdoctoral Program for Innovative Talents (BX20240276), China Fundamental Research Funds for the Central Universities (xzy022025047) and China Postdoctoral Science Foundation (2025M773058).

\bibliographystyle{unsrt}
\bibliography{clean_ref}
\newpage

\renewcommand{\thetable}{A-\arabic{table}}
\renewcommand{\theequation}{A-\arabic{equation}}
\renewcommand{\thethm}{A-\arabic{thm}}
\renewcommand{\theproposition}{A-\arabic{proposition}}
\renewcommand{\thedefinition}{A-\arabic{definition}}
\renewcommand{\thefigure}{A-\arabic{table}}
\renewcommand{\thefigure}{A-\arabic{figure}}
\newpage
\section*{Outline of Appendix}

The appendix is organized into three main parts.

Appendix~\ref{append:proof} presents the theoretical foundations. We begin by reviewing the Brenier–Benamou theorem~\cite{benamou2000computational}, followed by detailed proofs of Proposition~\ref{prop:prop1} and Theorem~\ref{thm:thm1} that are stated in the main text, as well as the justification for our choice of the reparameterized growth function \( \tilde{g} \) in Eq.~\eqref{eq:tildevg}. 

Appendix~\ref{append:exp} focuses on experiments. We first describe our general implementation setup, including the strategy for selecting the hyperparameters in Eq.~\eqref{eq:discretesemikp}. Next, we provide detailed settings, visualizations and analysis for each experiment.

Appendix~\ref{append:broader} discusses the broader impacts of our work.

\appendix

\section{Proofs}\label{append:proof}
\subsection{Background: Brenier-Benamou Formulation}
We first recall the dynamic formulation of balanced optimal transport, which is known as Brenier-Benamou formula \cite{benamou2000computational}. In this paper, we assume $p_0$ and $p_1$ are absolutely continuous \textit{w.r.t.} Lebesgue measure, which is omitted for convenience of description.
\begin{thm}[Brenier-Benamou formula]\label{thm:bb}
    Given two probability measures $p_0,p_1\in\mathcal{P}_2(\Omega)$, it holds that
    \begin{equation}\label{bb}
        \mathcal{W}_2^2(p_0,p_1)=\inf_{p_t,v_t}\left\{\int_0^1\int_\Omega\|v_t(\x)\|^2p_t(\x)\ \md \x\md t\Big |\partial_tp_t=-\nabla\cdot(p_tv_t),\ p_{t=0}=p_0,p_{t=1}=p_1\right\}
    \end{equation}
    and the optimal $v_t^*(\x)$ can be expressed by Monge map between $p_0$ and $p_1$. \ie, 
    \begin{equation}\label{bbv}
        v_t(\x+t(T^*(\x)-\x)) = T^*(\x) - \x,\ \x\sim p_0
    \end{equation}
\end{thm}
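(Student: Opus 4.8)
The plan is to prove the Benamou--Brenier identity by establishing two matching inequalities and then reading off the optimal velocity field from the construction used for the achievability direction. Throughout I would invoke Brenier's theorem, which applies because $p_0$ is absolutely continuous: there exists a convex potential $\varphi$ whose gradient $T^*=\nabla\varphi$ is the Monge map, satisfying $T^*_\# p_0=p_1$ and $\mathcal{W}_2^2(p_0,p_1)=\int_\Omega\|T^*(\x)-\x\|^2 p_0(\x)\,\md\x$.

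For the direction $\mathrm{RHS}\le\mathcal{W}_2^2$, I would exhibit an admissible pair realizing the Monge cost. Define the displacement interpolation $T_t(\x)=\x+t(T^*(\x)-\x)$, set $p_t\triangleq (T_t)_\# p_0$, and define $v_t$ implicitly by $v_t(T_t(\x))=T^*(\x)-\x$. First I would verify admissibility: $p_{t=0}=p_0$, $p_{t=1}=T^*_\# p_0=p_1$, and the continuity equation $\partial_t p_t=-\nabla\cdot(p_t v_t)$ holds because $p_t$ is the push-forward of $p_0$ along the flow whose velocity at $T_t(\x)$ is exactly $\frac{\md}{\md t}T_t(\x)=T^*(\x)-\x$. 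Then, changing variables $\y=T_t(\x)$ and using $p_t=(T_t)_\# p_0$,
\begin{equation*}
\int_0^1\!\!\int_\Omega\|v_t(\y)\|^2 p_t(\y)\,\md\y\,\md t=\int_0^1\!\!\int_\Omega\|T^*(\x)-\x\|^2 p_0(\x)\,\md\x\,\md t=\mathcal{W}_2^2(p_0,p_1),
\end{equation*}
since the integrand is independent of $t$. This both proves $\mathrm{RHS}\le\mathcal{W}_2^2$ and identifies the claimed optimal velocity field in Eq.~\eqref{bbv}.

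For the reverse inequality $\mathcal{W}_2^2\le\mathrm{RHS}$, I would take an arbitrary admissible $(p_t,v_t)$ and pass from the Eulerian to the Lagrangian description: let $\phi_t$ be the flow generated by $v_t$, so that $p_t=(\phi_t)_\# p_0$ and in particular $(\phi_1)_\# p_0=p_1$. Then $(\mathrm{Id},\phi_1)_\# p_0$ is a valid coupling of $p_0$ and $p_1$, whence
\begin{equation*}
\mathcal{W}_2^2(p_0,p_1)\le\int_\Omega\|\phi_1(\x)-\x\|^2 p_0(\x)\,\md\x\le\int_0^1\!\!\int_\Omega\|v_t(\phi_t(\x))\|^2 p_0(\x)\,\md\x\,\md t,
\end{equation*}
where the second step writes $\phi_1(\x)-\x=\int_0^1 v_t(\phi_t(\x))\,\md t$ and applies Jensen's inequality in $t$. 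Undoing the push-forward converts the right-hand side into the action integral, and taking the infimum over admissible pairs completes the bound. Combining the two inequalities yields the identity, with optimality attained by the displacement interpolation constructed above.

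The main obstacle I anticipate is the rigorous Eulerian-to-Lagrangian passage in the lower bound: for a general nonsmooth velocity field the flow $\phi_t$ need not exist in the classical sense, so making $p_t=(\phi_t)_\# p_0$ precise requires either a regularity assumption on $v_t$ or the DiPerna--Lions/Ambrosio theory of regular Lagrangian flows. A secondary technical point is justifying the change of variables and the continuity equation in the upper bound, since Brenier's potential $\varphi$ is merely convex---hence differentiable $p_0$-a.e. with $T_t$ injective $p_0$-a.e.---rather than smooth; these are settled by standard optimal-transport regularity arguments, which for a background recall I would cite rather than reprove.
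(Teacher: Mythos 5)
Your proposal is correct, but there is nothing in the paper to compare it against: Theorem~\ref{thm:bb} is stated purely as background and delegated to the citation \cite{benamou2000computational}, with no internal proof given. What you have written is the classical two-inequality argument for the Benamou--Brenier formula, i.e., precisely the standard proof in the literature that the paper implicitly appeals to. Your upper bound exhibits the displacement interpolation $p_t=(T_t)_\#p_0$ with $v_t(T_t(\x))=T^*(\x)-\x$ as an admissible pair whose action equals the Monge cost, which simultaneously establishes Eq.~\eqref{bbv}; your lower bound passes to Lagrangian coordinates, uses $(\mathrm{Id},\phi_1)_\#p_0$ as a coupling, and applies Jensen's inequality in time. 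Both halves are sound, and the technical caveats you flag are the right ones: for a merely square-integrable velocity field the flow need not exist classically, so the rigorous form of your lower bound runs through Ambrosio's superposition principle (or DiPerna--Lions regular Lagrangian flows) rather than a pointwise flow map, and the well-definedness of $v_t$ in the upper bound rests on $p_0$-a.e.\ injectivity of $T_t$ for $t<1$, which follows since $T_t$ is the gradient of the strongly convex potential $(1-t)\|\x\|^2/2+t\varphi(\x)$. In short, the proposal supplies a correct proof of a result the paper only quotes.
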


\subsection{Proof of Proposition \ref{prop:prop1}}\label{append:proof_pro1}
For the convenience of reading, here we restate Proposition \ref{prop:prop1} in the main text as Proposition \ref{prop:prop1-A}.
\begin{proposition}\label{prop:prop1-A}
   Assume \(c(\x_0, \x_1) = \|\x_0 - \x_1\|^2\) and if we enforce ${\rm P}_{\#}^0\pi$ and $p_0$ to share the same support for admissible solution $\pi$ to problem~\eqref{eq:semikp}, then we have $\min_\pi \mathcal{J}_{\rm sot}(\pi) = \min_{v_t,g_t}\mathcal{J}^\lambda_{\rm tpt}(v_t,g_t), \forall \lambda\in(0,1)$. Moreover, for any $\lambda\in(0,1)$, given the optimal transport plan $\pi^*$ to problem~\eqref{eq:semikp}, let $p_\lambda^*\triangleq{\rm P}_{\#}^0\pi^*$, then 
    $\pi^*$ can be expressed as $\pi^*=(\mathrm{Id},T^*)_\#p^*_\lambda$ where $T^*$ is the Monge map between $p^*_\lambda$ and $p_1$. Meanwhile, there exist a $g_t^*$ such that $p_\lambda^* = p_0(\x) e^{\int_0^\lambda g^*_t(\x)\mathrm{d}t}$, and a 
    $v_t^*$ given by
    \begin{equation}
       v^*_t\left(\x+\frac{t-\lambda}{1-\lambda}(T^*(\x)-\x)\right) = \frac{T^*(\x)-\x}{1-\lambda},
    \end{equation} 
    satisfying $(v_t^*,g_t^*)\in \arg\min_{v_t,g_t}\mathcal{J}^\lambda_{\rm tpt}(v_t,g_t)$.
\end{proposition}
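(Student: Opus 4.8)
The plan is to show that the two optimization problems in question -- the semi-relaxed optimal transport~\eqref{eq:semikp} and the two-period transport~\eqref{eq:dynsemi} -- both collapse, after eliminating their respective ``inner'' variables, to one and the same minimization over the intermediate marginal $\mu \triangleq {\rm P}_{\#}^0\pi = p_\lambda$, namely
\begin{equation}\label{eq:outermin}
\min_{\mu}\ \mathcal{W}_2^2(\mu, p_1) + \mathrm{KL}(\mu\,\|\,p_0),
\end{equation}
where $\mu$ ranges over nonnegative measures sharing the support of $p_0$ with $\int_\Omega\mu = \int_\Omega p_1$, and $\mathrm{KL}$ is the generalized (unbalanced) divergence. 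The argument rests on two matching identities: at its optimum the transport part of $\mathcal{J}^\lambda_{\rm tpt}$ equals $\mathcal{W}_2^2(\mu,p_1)$, and the growth part $\mathcal{H}$ equals exactly $\mathrm{KL}(\mu\|p_0)$.

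First I would handle the growth period. Solving $\partial_t p_t = g_t p_t$ on $[0,\lambda]$ gives $p_\lambda = p_0\,e^{G}$ with $G(\x)=\int_0^\lambda g_t(\x)\,\md t$, so $\mu=p_\lambda$ is determined by $G$ alone. Substituting into the density form $\mathrm{KL}(\mu\|p_0)=\int_\Omega(\mu\log(\mu/p_0)-\mu+p_0)\,\md\x$, a direct computation collapses the integrand to $p_0(e^{G}(G-1)+1)$, which is precisely $\mathcal{H}(v_t,g_t,p_t)$; hence the growth term depends on $(v_t,g_t)$ only through $\mu$, and any $g_t$ with the same $\int_0^\lambda g_t\,\md t$ is interchangeable. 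Next I would treat the transport period via the reparameterization $s=(t-\lambda)/(1-\lambda)$, $q_s=p_{\lambda+(1-\lambda)s}$, $u_s=(1-\lambda)v_{\lambda+(1-\lambda)s}$, under which the continuity constraint is preserved and the prefactor $(1-\lambda)$ is exactly absorbed, so that $(1-\lambda)\int_\lambda^1\!\int_\Omega p_t\|v_t\|^2\,\md\x\,\md t = \int_0^1\!\int_\Omega q_s\|u_s\|^2\,\md\x\,\md s$. Applying the Brenier--Benamou formula (Theorem~\ref{thm:bb}) to the right-hand side bounds it below by $\mathcal{W}_2^2(\mu,p_1)$, with equality for the constant-speed geodesic; unwinding the reparameterization turns the optimal $u_s$ into the claimed $v_t^*$.

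With these identities, for fixed growth (hence fixed $\mu$) the transport-optimal two-period objective equals $\mathcal{W}_2^2(\mu,p_1)+\mathrm{KL}(\mu\|p_0)$, so $\min_{v_t,g_t}\mathcal{J}^\lambda_{\rm tpt}$ is exactly~\eqref{eq:outermin}. For the semi-relaxed side I argue analogously: writing $\mu={\rm P}_{\#}^0\pi$, the cost $\int c\,\md\pi$ is at least $\mathcal{W}_2^2(\mu,p_1)$ by definition, with equality attained by the Brenier/Monge coupling $(\mathrm{Id},T^*)_\#\mu$ under the squared cost, while the penalty is exactly $\mathrm{KL}(\mu\|p_0)$; minimizing over $\pi$ again yields~\eqref{eq:outermin}. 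Since~\eqref{eq:outermin} carries no dependence on $\lambda$, the common optimal value is $\lambda$-independent, establishing the first claim. For the constructions I set $\mu^*=p_\lambda^*={\rm P}_{\#}^0\pi^*$, read off $\pi^*=(\mathrm{Id},T^*)_\#p_\lambda^*$ with $T^*$ the Monge map $p_\lambda^*\!\to\!p_1$, choose any $g_t^*$ with $\int_0^\lambda g_t^*\,\md t=\log(p_\lambda^*/p_0)$, and take $v_t^*$ as the reparameterized geodesic velocity, which is exactly the stated formula.

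I expect the main obstacle to be the identification of the two feasible marginal families rather than either computation: one must verify that every admissible first marginal of the semi-relaxed problem (under the same-support restriction) is realizable as $p_0 e^{G}$ for some finite $G$, and conversely, so that the two outer problems are literally the same minimization. Here the mass-matching constraint $\int_\Omega\mu=\int_\Omega p_1$ forced by mass-conserving transport must be reconciled with the relaxed first marginal, and the same-support hypothesis is precisely what keeps $\log(\mu/p_0)$ finite by preventing $\mu$ from charging regions where $p_0$ vanishes. Once this equivalence of feasible sets is in place, the Brenier--Benamou reparameterization and the $\mathcal{H}=\mathrm{KL}$ identity are routine.
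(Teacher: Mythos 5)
Your proposal is correct and rests on the same two ingredients as the paper's proof: the identity $\mathcal{H}(v_t,g_t,p_t)=\mathrm{KL}(p_\lambda\|p_0)$ for $p_\lambda=p_0e^{\int_0^\lambda g_t\,\md t}$, and the Benamou--Brenier characterization of the reparameterized transport leg giving $(1-\lambda)\int_\lambda^1\int_\Omega p_t\|v_t\|^2=\mathcal{W}_2^2(p_\lambda,p_1)$ at the optimum. The only difference is organizational --- you marginalize both problems to a common outer minimization over $\mu$, which makes the $\lambda$-independence transparent, whereas the paper proves the two inequalities separately (the second by contradiction); the mathematical content is the same.
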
 
\begin{proof}
Recall that when \(c(\x_0, \x_1) = \|\x_0 - \x_1\|^2\), problems~\eqref{eq:semikp} and~\eqref{eq:dynsemi} are respectively
\[
\min_{\pi\geq 0} \mathcal{J}_{\rm sot}(\pi)\triangleq \int_{\Omega^2} \|\x_0 - \x_1\|^2\ \, \mathrm{d}\pi(\x_0, \x_1) + \mathrm{KL}({\rm P}_{\#}^0\pi \| p_0) \quad \text{subject to} \quad {\rm P}_{\#}^1\pi = p_1,
\]
and
\[
\min_{(v_t,g_t) \in \mathcal{C}_\lambda(p_0, p_1)}\mathcal{J}^\lambda_{\rm tpt}(v_t,g_t) \triangleq
(1 - \lambda) \int_\Omega \int_\lambda^1 p_t(\x) \|v_t(\x)\|^2 \mathrm{d}t \mathrm{d}x + \mathcal{H}(v_t,g_t,p_t),
\]
where $\mathcal{H}(v_t,g_t,p_t) = \int_\Omega p_0(\x) ( e^{\int_0^\lambda g_t(\x) \mathrm{d}t} 
 ( \int_0^\lambda g_t(\x) \mathrm{d}t - 1) + 1 ) \mathrm{d}x$, $\mathcal{C}_\lambda(p_0,p_1)=\{(v_t,g_t): \partial_tp_t=g_tp_t,t\in[0,\lambda];\partial_tp_t=-\nabla\cdot(p_tv_t),t\in(\lambda,1]\}$

\textbf{(1)} We first prove \( \min_\pi \mathcal{J}_{\rm sot}(\pi) \geq \min_{v_t,g_t}\mathcal{J}^\lambda_{\rm tpt}(v_t,g_t) \).

Let \( \pi^* \) be the optimal solution to problem~\eqref{eq:semikp}. For any \( \lambda \in (0,1) \), define \( p_\lambda = {\rm P}^0_{\#}\pi^* \), and let \( w_\lambda(\x) := \frac{p_\lambda(\x)}{p_0(\x)} \). Define the growth as
\[
\frac{\md}{\md t} \log w_t(\x) = g_t(\x),  t \in [0,\lambda].
\]
This implies the continuity equation \( \partial_t p_t = g_t p_t \) for \( t \in [0,\lambda] \), and hence we have \( p_0(\x) \exp\left( \int_0^\lambda g_t(\x) dt \right) = {\rm P}^0_{\#}\pi^* \).

By Brenier’s theorem~\cite{peyre_computational_2020}, we have \( \pi^* = (\mathrm{Id}, T^*)_\# p_\lambda \), where \( T^*(\x) \) is the Monge map from \( p_\lambda \) to \( p_1 \). Define
\begin{equation}\label{eq:g_v}
g_t(\x) = \frac{\log p_\lambda(\x) - \log p_0(\x)}{\lambda}, \quad v_t(\x+\frac{t-\lambda}{1-\lambda}(T^*(\x)-\x)) = \frac{T^*(\x) - \x}{1 - \lambda}.
\end{equation}
Plugging into the dynamic objective and using the definition of KL-divergence:
\[
\mathrm{KL}(p_\lambda \| p_0) = \int_\Omega p_0(\x) \left( \frac{p_\lambda(\x)}{p_0(\x)} \left( \log \frac{p_\lambda(\x)}{p_0(\x)} - 1 \right) + 1 \right) \md\x,
\]
we obtain
\[
\begin{aligned}
\mathcal{J}^\lambda_{\rm tpt}(v_t,g_t)=& \int_\Omega p_0(\x) \left( e^{\int_0^\lambda g_t(\x)\md t} \left( \int_0^\lambda g_t(\x)\md t - 1 \right) + 1 \right) \md\x + (1 - \lambda) \int_\Omega \|T^*(\x) - \x\|^2 p_\lambda(\x) \md\x \\
= & \mathrm{KL}(p_\lambda \| p_0) + \int_{\Omega^2} \|\x_0 - \x_1\|^2 \md\pi^*(\x_0, \x_1)\\
= & \mathcal{J}_{\rm{sot}}(\pi^*)
\end{aligned}
\]

Thus, \( \mathcal{J}_{\rm sot}(\pi^*) =  \mathcal{J}^\lambda_{\rm tpt}(v_t,g_t) \geq \min_{v_t,g_t}\mathcal{J}^\lambda_{\rm tpt}(v_t,g_t)\).

\textbf{(2)} To show the reverse inequality \( \min_\pi \mathcal{J}_{\rm sot}(\pi) \leq \min_{v_t,g_t}\mathcal{J}^\lambda_{\rm tpt}(v_t,g_t) \),  we assume the contrary that \( \min_\pi \mathcal{J}_{\rm sot}(\pi) > \min_{v_t,g_t}\mathcal{J}^\lambda_{\rm tpt}(v_t,g_t) \). Let \( (v_t^*, g_t^*) \) be an optimal solution to problem~\eqref{eq:dynsemi}, and define
\[
\tilde{p}_\lambda(\x) := p_0(\x) \exp\left( \int_0^\lambda g_t^*(\x)\md t \right),
\]
Since the second-stage evolution \( p_t \), for \( t \in (\lambda,1] \), is governed solely by the velocity field \( v_t \) and does not involve mass creation or destruction, both \( \tilde{p}_\lambda \) and \( p_1 \) have the same total mass. Thus, the Monge map \( \tilde{T}^* \) from \( \tilde{p}_\lambda \) to \( p_1 \) under quadratic cost is well-defined. Then, by the Benamou–Brenier formulation~\cite{benamou2000computational,figalli2021invitation}, we have

\[
v_t^*(\x+\frac{t-\lambda}{1-\lambda}(\tilde{T}^*(\x)-\x)) = \frac{\tilde{T}^*(\x) - \x}{1 - \lambda},
\]
and the corresponding coupling \( \tilde{\pi}^* := (\mathrm{Id}, \tilde{T}^*)_\# \tilde{p}_\lambda \) satisfies
\[
\begin{aligned}
\mathcal{J}_{\rm sot}(\tilde{\pi}^*)
=& \mathrm{KL}(\tilde{p}_\lambda \| p_0) + \int_{\Omega^2} \|\x_0 - \x_1\|^2 \md\tilde{\pi}^*(\x_0, \x_1) \\
= & \int_\Omega p_0(\x) \left( e^{\int_0^\lambda g_t^*(\x) \md t} \left( \int_0^\lambda g_t^*(\x) \md t - 1 \right) + 1 \right)\md \x + (1 - \lambda) \int_\Omega \|v_t^*(\x)\|^2 \tilde{p}_\lambda(\x) \md \x\\
=& \mathcal{J}^\lambda_{\rm tpt}(v_t^*,g_t^*) = \min_{v_t,g_t}\mathcal{J}^\lambda_{\rm tpt}(v_t,g_t)
<   \min_\pi \mathcal{J}_{\rm sot}(\pi)
\end{aligned}
\]
which leads to the contradiction.

Combining (1) and (2), we have \( \mathcal{J}_{\rm sot}(\pi^*) = \mathcal{J}^\lambda_{\rm tpt}(v_t^*,g_t^*) \). 

Given $\pi^*$, we construct $g_t^*,v_t^*$ as in Eq.~\eqref{eq:g_v}. According to the proof of (1), we have $\mathcal{J}^\lambda_{\rm tpt}(v_t^*,g_t^*)=\mathcal{J}_{\rm sot}(\pi^*)=\min_{v_t,g_t}\mathcal{J}^\lambda_{\rm tpt}(v_t,g_t)$. 
\end{proof}

\subsection{Proof and Empirical Evidence of Theorem \ref{thm:thm1}}\label{append:proof_thm1}
For the convenience of reading, here we restate Theorem \ref{thm:thm1} in the main text as Theorem \ref{thm:thm1-A}.
\begin{thm}\label{thm:thm1-A}
Gvien the initial distribution $p_0$, denote the ending distribution of the two-period dynamics
\begin{equation}\label{eq:2period_dyn-A}
    \partial_t p_t = g_t p_t, t \in [0,\lambda]; \quad \partial_t p_t = -\nabla \cdot (p_t v_t), t \in (\lambda,1],
\end{equation}
as $p_1$, and denote the ending distribution of the joint dynamics starting from $p_0$
\begin{equation}
   \partial_t \tilde{p}_t = -\nabla \cdot (\tilde{p}_t \tilde{v}_t) + \tilde{g}_t \tilde{p}_t, \quad  t \in [0,1], \quad \tilde{p}_0 = p_0,
\end{equation}
as $\tilde{p}_1$, then it holds that \(\tilde{p}_1 = p_1\).
\end{thm}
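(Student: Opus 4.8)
The plan is to represent both dynamics in Lagrangian (characteristic) form and show they produce the same terminal measure. First I would recall that the unbalanced continuity equation $\partial_t \tilde{p}_t = -\nabla\cdot(\tilde{p}_t\tilde{v}_t) + \tilde{g}_t\tilde{p}_t$ admits, for every test function $\phi$, the solution representation
\[
\int \phi\,\tilde{p}_t\,\md\x = \int \phi(\psi_{\tilde{v},t}(\x_0))\,\tilde{W}_t(\x_0)\,p_0(\x_0)\,\md\x_0, \qquad \tilde{W}_t(\x_0) = \exp\Big(\int_0^t \tilde{g}_s(\psi_{\tilde{v},s}(\x_0))\,\md s\Big),
\]
that is, $\tilde{p}_t = (\psi_{\tilde{v},t})_\#(\tilde{W}_t\,p_0)$. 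I would verify this by differentiating the right-hand side in $t$, using $\tfrac{\md}{\md t}\psi_{\tilde{v},t}=\tilde{v}_t(\psi_{\tilde{v},t})$ and $\tfrac{\md}{\md t}\tilde{W}_t=\tilde{g}_t(\psi_{\tilde{v},t})\,\tilde{W}_t$, and checking the result matches the weak form $\tfrac{\md}{\md t}\!\int\phi\,\tilde{p}_t = \int(\nabla\phi\cdot\tilde{v}_t + \tilde{g}_t\phi)\,\tilde{p}_t$ of the PDE.

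The key computation is then evaluating the accumulated growth along the joint flow. Substituting the definition $\tilde{g}_s(\x)=\lambda\,g_{\lambda s}(\psi_{\tilde{v},s}^{-1}(\x))$, the inner composition telescopes to $\tilde{g}_s(\psi_{\tilde{v},s}(\x_0))=\lambda\,g_{\lambda s}(\x_0)$, and the change of variables $u=\lambda s$ gives
\[
\int_0^1 \tilde{g}_s(\psi_{\tilde{v},s}(\x_0))\,\md s = \int_0^1 \lambda\,g_{\lambda s}(\x_0)\,\md s = \int_0^\lambda g_u(\x_0)\,\md u.
\]
Hence $\tilde{W}_1\,p_0 = p_0\exp\!\big(\int_0^\lambda g_u\,\md u\big)$, which is exactly $p_\lambda$, the output of the pure-growth first period of the two-period dynamics (where, with no transport, $\partial_t p_t=g_tp_t$ integrates pointwise). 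The evaluation at the pre-image $\psi_{\tilde{v},s}^{-1}(\x)$ in the definition of $\tilde{g}$ is precisely what makes the growth act on the \emph{original} particle positions, matching the two-period scheme in which all mass change occurs before any transport.

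It remains to identify the joint transport flow at $t=1$ with the second-period flow at $s=1$. Writing $\Phi_s$ for the flow of $v_s$ on $s\in[\lambda,1]$ (so that $p_1=(\Phi_1)_\# p_\lambda$), I would reparametrize time by $s=(1-\lambda)t+\lambda$ and set $\Psi_s := \psi_{\tilde{v},(s-\lambda)/(1-\lambda)}$. Using $\tilde{v}_t(\x)=(1-\lambda)\,v_{(1-\lambda)t+\lambda}(\x)$ and the chain rule, the factor $(1-\lambda)$ cancels $\tfrac{\md t}{\md s}=1/(1-\lambda)$, so $\Psi_s$ solves $\tfrac{\md}{\md s}\Psi_s=v_s(\Psi_s)$ with $\Psi_\lambda=\mathrm{Id}$; by uniqueness of ODE solutions $\Psi_s\equiv\Phi_s$, whence $\psi_{\tilde{v},1}=\Phi_1$. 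Combining the three steps yields $\tilde{p}_1=(\psi_{\tilde{v},1})_\#(\tilde{W}_1\,p_0)=(\Phi_1)_\# p_\lambda = p_1$.

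I expect the main obstacle to be the first step: rigorously justifying the Lagrangian solution formula for the unbalanced continuity equation and the accompanying change of variables, in particular ensuring that $\psi_{\tilde{v},t}$ is a well-defined diffeomorphism (sufficient regularity of $\tilde{v}$) so that $\psi_{\tilde{v},s}^{-1}$ and the pushforward are meaningful. Once this representation is in place, the remaining two steps are essentially bookkeeping driven by the two time-reparametrizations.
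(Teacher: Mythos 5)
Your proposal is correct and follows essentially the same route as the paper's proof: both arguments work along Lagrangian characteristics, showing that the terminal particle position satisfies $\psi_{\tilde{v},1}(\x_0)=\x_1$ via the substitution $s=(1-\lambda)t+\lambda$ and that the accumulated log-weight $\int_0^1 \tilde{g}_s(\psi_{\tilde{v},s}(\x_0))\,\md s$ telescopes to $\int_0^\lambda g_u(\x_0)\,\md u$ via $u=\lambda s$. Your version is somewhat more careful than the paper's in making the pushforward representation $\tilde{p}_t=(\psi_{\tilde{v},t})_\#(\tilde{W}_t\,p_0)$ and the required regularity of the flow explicit, but the substance of the argument is the same.
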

\begin{proof}
Given \( \x_0 \sim p_0 \), consider the two systems below.

\textit{System I (original two-period transport dynamics):}
\begin{equation}\label{sys1}
\begin{cases}
    \dfrac{\md\x_t}{\md t} = v_t(\x_t) \cdot \mathbb{I}_{(\lambda,1]}(t),\\
    \dfrac{\md}{\md t} \log w_t(\x_t) = g_t(\x_t) \cdot \mathbb{I}_{[0,\lambda]}(t),
\end{cases}
\end{equation}
where \( w_t \) is the time-dependent weight function and $\mathbb{I}_\Omega$ is the indicator function, for any function $f, f\cdot\mathbb{I}_\Omega(t)=\begin{cases}
    f,\quad \text{If $t\in\Omega$,}\\
    0,\quad \text{Otherwise.}
\end{cases}$.

\textit{System II (joint dynamics defined via reparameterization \eqref{eq:reparam}):}
\begin{equation}\label{sys2}
\begin{cases}
    \dfrac{\md{\x}_t}{\md t} = \tilde{v}_t({\x}_t),\\
    \dfrac{\md}{\md t} \log {w}_t({\x}_t) = \tilde{g}_t({\x}_t),
\end{cases}
\end{equation}
with \( \tilde{w}_t \) being the time-dependent weight under joint dynamics.
We first recall the definition $\tilde{v}_t(\x) = (1 - \lambda) \cdot v_{(1 - \lambda)t + \lambda}(\x), 
\tilde{g}_t(\x) = \lambda \cdot g_{\lambda t}\left(\psi_{\tilde{v},t}^{-1}(\x) \right)$.
To prove Theorem~\ref{thm:thm1}, it suffices to show that given the same initialization $\x_0, w_0(\x_0)$, the final state of system I ($\x_1,w_1(\x_1)$) and II ($\tilde{\x}_1,\tilde{w}_1(\tilde{\x}_1)$) are identical, \ie, 
\( \tilde{\x}_1 = \x_1 \) and \( \tilde{w}_1(\tilde{\x}_1) = w_1(\x_1) \).
\begin{align*}
\tilde{\x}_1 &= \x_0 + \int_0^1 \tilde{v}_t({\x}) \, \md t \\
             &= \x_0 + \int_0^1 v_{(1-\lambda)t + \lambda}(\x) \cdot (1 - \lambda) \, \md t \\
             &= \x_0 + \int_\lambda^1 v_s\left(\x\right) \, \md s \quad (\mbox{let } s = (1-\lambda)t + \lambda)\\
             &= \x_1.
\end{align*}
Meanwhile,
\begin{align*}
\log w_1(\x_1) &= \log w_\lambda(\x_\lambda) = \log w_\lambda(\x_0) \\
               &= \log w_0(\x_0) + \int_0^\lambda g_s(\x_0) \, \md s \\
               &= \log w_0(\x_0) + \int_0^1 \lambda \cdot g_{\lambda t}(\x_0) \, \md t  \quad (\mbox{let } t = \frac{1}{\lambda}s)\\
               &= \log w_0(\x_0) + \int_0^1 \lambda \cdot g_{\lambda t}(\psi_{\tilde{v},t}^{-1}(\x)) \, \md t \quad (\x=\x_0+\int_0^t\tilde{v}_s(\x)\md s)\\
               &= \log w_0(\x_0) + \int_0^1 \tilde{g}_t(\x) \, \md t \\
               &= \log \tilde{w}_1(\tilde{\x}_1).
\end{align*}

Hence, the final state and mass at \( t=1 \) under both systems coincide. Applying the above analysis to all $\x_0\sim p_0$ completes the proof.
\end{proof}

\paragraph{Empirical evidence of Theorem \ref{thm:thm1}.} We provide the following example as an evidence to Theorem \ref{thm:thm1}.
    Set $p_0=\mathcal{N}(2,0.5),\ v_t(\x)=2t,\ g_t(\x)=-\log(\x+1)+t^3 $, using Eq.\eqref{eq:reparam}, we have $\tilde{v}_t(\x)=(1-\lambda)v_{(1-\lambda)t+\lambda}(\x),\ \tilde{g}_t(\x)=\lambda g_{\lambda t}(\psi_{\tilde{v},t}^{-1}({\x}))=\lambda g_{\lambda t}(\x-((1-\lambda)t+\lambda)^2+\lambda^2)$. By setting $\lambda=0.4$ and using numerical solvers to solve the corresponding continuity equation, we validate our correctness of our theorem. As shown in Fig.~\ref{fig:example}, the two dynamics ended in the same distribution.

\begin{figure}[H]
    \centering
    \includegraphics[width=1\linewidth]{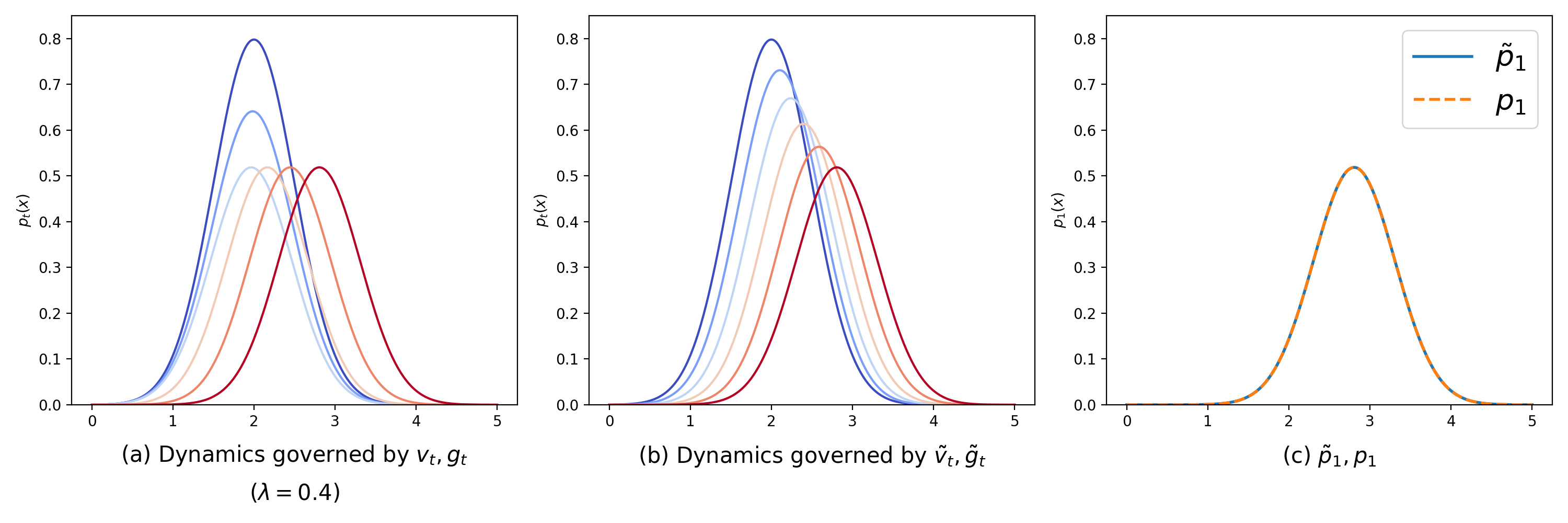}
    \caption{Empirical evidence of Theorem~\ref{thm:thm1}.(a) Original two-period transport dynamics. (b) Joint dynamics defined via reparameterization~\ref{eq:reparam}. (c) The two dynamics ended in the same distribution.}
    \label{fig:example}
\end{figure}
\subsection{{Motivation for time-independent growth function $\tilde{g}_t$}\label{append:g=log} according to Eq. \eqref{eq:tildevg}}

\textbf{Minimization of $L^2$ energy potential.}
We aim to prove the following statement:  
Given a function $w_t(\x)$ whose logarithmic derivative satisfies
\(
\frac{\md}{\md t} \log w_t(\x) = g_t(\x)
\)
and the boundary condition
\[
\log w_1(\x) - \log w_0(\x) = \int_0^1 g_t(\x)\, \md t,
\]
then among all functions $g_t(\x)$ satisfying this constraint, the one minimizing the energy functional
\[
\mathcal{E}(g) := \int_{\Omega}\int_0^1 \|g_t(\x)\|_2^2\, \md t \md\x
\]
is the constant function
\(
\int_0^1g_t(\x)\md \x = \log w_1(\x) - \log w_0(\x).
\)
To show this, we apply the method of Lagrange multipliers. Introduce a multiplier $\lambda(\x)$ and consider the Lagrangian
\[
\mathcal{L}(g_t) = \int_\Omega(\int_0^1 \left( g_t(\x)^2 + \lambda(\x)\, g_t(\x) \right) \md t - \lambda(\x) \left( \log w_1(\x) - \log w_0(\x) \right))\md \x.
\]

Taking the first variation of $\mathcal{L}$ with respect to $g_t(\x)$ yields the optimality condition 
\[
\frac{\delta \mathcal{L}}{\delta g_t}(\x) = 2g_t(\x) + \lambda(\x) = 0 \  \Rightarrow \ g_t(\x) = -\frac{\lambda(\x)}{2}.
\]

This shows that the optimal $g_t(\x)$ is constant with respect to time. Plugging this into the constraint gives
\(
g_t(\x) = \log w_1(\x) - \log w_0(\x).
\)
Hence, the constant function $g_t(\x)$ is the unique minimizer of the energy functional under the given constraint.

\textbf{Explanation from the Malthusian growth model \cite{Malthus1798}, \ie, the exponential growth model.}
The Malthusian growth model is $\frac{\mathrm{d}p_t}{\mathrm{d}t}=gp_t$, where $p_t$ is the population size and $g$ is a constant growth rate. Our model on the growth $\frac{\partial p_t}{\partial t}=g_tp_t$ is consistent with the above Malthusian growth model. In these models, $g$ is treated as a constant based on the assumption that the resources are abundant and the environment is stable, causing growth rates to remain relatively stable over time. In the context of scRNA-seq experiments, these conditions are typically satisfied because cells are often cultured or sampled under controlled laboratory conditions, where nutrient supply, temperature, and other environmental factors are maintained at constant levels. Thus, choosing this form is biologically reasonable, especially when there is no prior knowledge about the growth rate.

\section{Experimental Details}\label{append:exp}
The model architecture has been described in Sect.~\ref{sec:experiments}. We next detail the strategy for selecting the hyperparameters \( \epsilon \) and \( \tau \) in Eq.~\eqref{eq:discretesemikp}, as well as provide a comprehensive description of each experiment. All experiments are performed on a single-core CPU without GPU acceleration, and all visualizations are based on projections onto the first two dimensions of the high-dimensional data.

\subsection{Determine the Entropy Regularization Parameter \( \epsilon \)  and Relaxation Coefficient \( \tau \)}\label{apdx:tau}
Our algorithm involves solving a static semi-relaxed optimal transport problem~\eqref{eq:discretesemikp}, which includes two critical hyperparameters $\epsilon$ and $\tau$. These parameters play a pivotal role in the behavior and stability of the model. On one hand, if $\epsilon$ is too small, it may lead to numerical instability; if it is too large, it can cause incorrect cross-branch matching in unbalanced data scenarios. To select $\epsilon$, we first set $\tau$ to a moderately large value, such as $\tau = 50$, and then perform a grid search over increasing values of $\epsilon$ starting from 0 until numerical stability is achieved.
The choice of $\tau$ is even more crucial. When $\tau$ is too small, the KL divergence term between $P_\#^0\pi$ and $p_0$ receives a weak penalty. As a result, the transport mass becomes overly concentrated around points close to $p_1$, leading to highly uneven marginals $\pi\mathbf{1}_m$ in the discrete setting. This causes instability in the learning of the growth function $g$ and may result in erroneous modeling. Conversely, as $\tau \to +\infty$, Eq.~\eqref{eq:discretesemikp} reduces to a balanced OT problem, which contradicts the unbalanced nature of our formulation.
To determine a suitable value for $\tau$, we fix the previously selected small $\epsilon$ and gradually increase $\tau$ while observing the variation in $\sum \pi_{ij} c_{ij}$. This curve typically exhibits an increasing-then-flattening trend. Similar to the ``elbow metho'' used in clustering to select the optimal number of clusters, we identify the region where the curve becomes stable and choose $\tau$ accordingly, as illustrated in Fig.~\ref{fig:tau}.

On the \textit{simulation gene} dataset, we fix $\epsilon=0.001$ and evaluate the model with different values of $\tau \in \{5, 10, 15, 20\}$, all of which lie within the identified stable region. The corresponding model performance metrics are summarized in Tab.~\ref{tab:tau}. We observe that the models trained with $\tau$ in this range are stable, validating the effectiveness of our hyperparameter selection strategy.

\begin{table}[H]
\centering
\caption{Fix $\epsilon=0.001$, change different $\tau$ on simulation gene data, we present the Wasserstein-1 distance of each timepoint.}
\label{tab:tau}
\begin{tabular}{lcccccccc}
\toprule
\multirow{2}{*}{\textbf{Models}} 
& \multicolumn{2}{c}{$t_1$} 
& \multicolumn{2}{c}{$t_2$} 
& \multicolumn{2}{c}{$t_3$}
& \multicolumn{2}{c}{$t_4$}\\
\cmidrule(lr){2-3} \cmidrule(lr){4-5} \cmidrule(lr){6-7} \cmidrule(lr){8-9}
& $\mathcal{W}_1$ & RME 
& $\mathcal{W}_1$ & RME 
& $\mathcal{W}_1$ & RME
& $\mathcal{W}_1$ & RME \\
\midrule
VFGM ($\tau=5$)   &  0.046     & 0.007       & 0.062      &  0.001      &    0.053    &  0.003    &0.063&0.003\\
VFGM ($\tau=10$) 
 & 0.041      & 0.005   &  0.053      & 0.005      & 0.038      &   0.007  &0.040 &0.008 \\
VFGM ($\tau=15$)                       & 0.045     & 0.003      &  0.056      & 0.003      &0.046      & 0.005  &0.052 &0.004 \\
VFGM ($\tau=20$)                           & 0.043      & 0.007     &  0.057      & 0.004      & 0.045      &   0.007  &0.058 &0.011 \\
\bottomrule
\end{tabular}
\end{table}

\begin{figure}[t]
    \centering
    \subfigure[Simulation Gene data]
    {
    \includegraphics[width=0.3\textwidth]{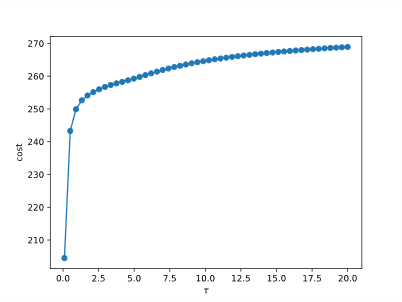}
    }
    \subfigure[Dygen data]
    {
    \includegraphics[width=0.3\textwidth]{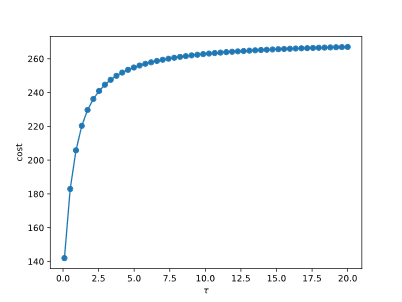}
    }
    \subfigure[Gaussian data]
    {
    \includegraphics[width=0.3\textwidth]{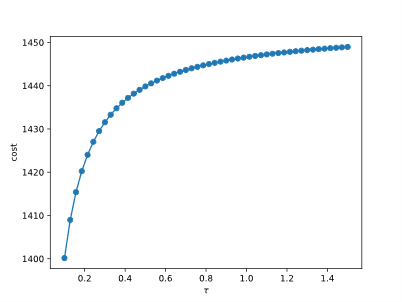}
    }
    \caption{Fixing \( \epsilon \), we plot the transport cost as a function of \( \tau \). 
    (a) In Simulation Gene, the curve flattens around \( \tau = 5 \); 
    (b) In Dyngen, the transport cost also stabilizes near \( \tau = 5\).}
    \label{fig:tau}
\end{figure}

\subsection{Simulation of Gene Data}

Following the setup of~\cite{zhang2024learning}, the dynamics of the simulated gene regulatory network are governed by the following system of differential equations:
\begin{equation}\label{eq:smg_gen}
\begin{aligned}
\frac{\md X_1}{\md t} &= \frac{\alpha_1 X_1^2 + \beta}{1 + \alpha_1 X_1^2 + \gamma_2 X_2^2 + \gamma_3 X_3^2 + \beta} - \delta_1 X_1 + \eta_1 \xi_t, \\
\frac{\md X_2}{\md t} &= \frac{\alpha_2 X_2^2 + \beta}{1 + \gamma_1 X_1^2 + \alpha_2 X_2^2 + \gamma_3 X_3^2 + \beta} - \delta_2 X_2 + \eta_2 \xi_t, \\
\frac{\md X_3}{\md t} &= \frac{\alpha_3 X_3^2}{1 + \alpha_3 X_3^2} - \delta_3 X_3 + \eta_3 \xi_t,
\end{aligned}
\end{equation}
where $X_i(t)$ denotes the expression level of gene $i$ at time $t$. Genes $X_1$ and $X_2$ mutually inhibit each other while self-activating, forming a toggle-switch regulatory motif. An external signal $\beta$ independently activates both $X_1$ and $X_2$, whereas $X_3$ inhibits the expression of both $X_1$ and $X_2$.

The parameters $\alpha_i$ and $\gamma_i$ determine the strengths of self-activation and inhibition, respectively. The $\delta_i$ terms denote degradation rates, and $\eta_i \xi_t$ represents additive Gaussian noise with intensity $\eta_i$. The cell division rate is positively correlated with $X_2$ expression and is calculated as
\[
g = \frac{\alpha_2 X_2^2}{1 + X_2^2} \%.
\]

At each cell division event, daughter cells inherit gene expression values $(X_1, X_2, X_3)$ from the parent, subject to small perturbations $\eta_d \mathcal{N}(0, 1)$ per gene. Post-division, cells evolve independently according to the same stochastic dynamics.

\paragraph{Initial Conditions and Simulation Setup.} Initial expression states are sampled from two normal distributions: $\mathcal{N}([2, 0.2, 0], 0.01)$ and $\mathcal{N}([0, 0, 2], 0.01)$. At every step, negative values are clipped to zero. Gene expression data is recorded at time points $t \in \{0, 8, 16, 24, 32\}$.

By setting \( \tau = 10 \) and \( \epsilon = 0.003 \) according to our selection scheme described in Appendix~\ref{apdx:tau} and illustrated in Fig.~\ref{fig:tau} (a). Our model achieve the best performance in terms of (weighted) $\mathcal{W}_1$ and relative mass error as shown in Tab.~\ref{tab:simulation_data}.

\begin{table}[ht]
\centering
\caption{Simulation parameters on gene regulatory network~\cite{zhang2024learning}.}
\begin{tabular}{lll}
\toprule
\textbf{Parameter} & \textbf{Value} & \textbf{Description} \\
\midrule
$\alpha_1$ & 0.5 & Strength of self-activation for $X_1$ \\
$\gamma_1$ & 0.5 & Inhibition of $X_2$ by $X_1$ \\
$\alpha_2$ & 1   & Strength of self-activation for $X_2$ \\
$\gamma_2$ & 1   & Inhibition of $X_1$ by $X_2$ \\
$\alpha_3$ & 1   & Strength of self-activation for $X_3$ \\
$\gamma_3$ & 10  & Half-saturation constant for inhibition \\
$\delta_1$ & 0.4 & Degradation rate for $X_1$ \\
$\delta_2$ & 0.4 & Degradation rate for $X_2$ \\
$\delta_3$ & 0.4 & Degradation rate for $X_3$ \\
$\eta_1$   & 0.05 & Noise intensity for $X_1$ \\
$\eta_2$   & 0.05 & Noise intensity for $X_2$ \\
$\eta_3$   & 0.05 & Noise intensity for $X_3$ \\
$\eta_d$   & 0.014 & Noise for perturbation during cell division \\
$\beta$    & 1     & External activation signal \\
$\md t$       & 1     & Simulation time step \\
Time Points & $\{0, 8, 16, 24, 32\}$ & Observation time points \\
\bottomrule
\end{tabular}
\label{tab:grn_parameters}
\end{table}

\textbf{Growth rate correlation analysis.}

To validate the accuracy of our growth rate modeling and learning, we perform correlation analysis between the predicted growth rates and the ground truth. We utilize this dataset, which is generated from the prescribed dynamical systems (Eq. \eqref{eq:smg_gen}. Since the ground truth growth rates can be computed analytically, we conduct \textbf{correlation analysis on out-of-distribution (OOD) time points}---excluded from the training data---to assess the generalization capability of VGFM. The results are summarized in Table~\ref{tab:correlation_analysis}.

\begin{table}[h]
\centering
\caption{Correlation analysis on four OOD time points for Simulation Gene Data}
\label{tab:correlation_analysis}
\begin{tabular}{cc}
\hline
\textbf{OOD Time Point} & \textbf{Pearson Correlation Coefficient} \\
\hline
$t_{0.5}$ & 0.980 \\
$t_{1.5}$ & 0.992 \\
$t_{2.5}$ & 0.995 \\
$t_{3.5}$ & 0.996 \\
\hline
\end{tabular}
\end{table}

The growth rates predicted by our VGFM exhibit strong correlation with the ground truth values. 

\textbf{Uncertainty quantification.}
Although our current model is deterministic, we incorporate dropout into the velocity and growth networks to quantify predictive uncertainty. During inference, we compute the average variance for each dimension of velocity and growth across all data points, considering different branches (0 denotes the quiescent region \ie, the lower left corner of Fig. \ref{fig:6pic} while 1 denotes the region where exhibits mass variation and state transition) and time points. We conduct the experiment with a dropout rate of 0.1, performing 5 stochastic forward passes through the velocity and growth networks for all data points. The results are presented in Table~\ref{tab:uncertainty_estimation}.

\begin{table}[h]
\centering
\caption{Uncertainty estimation of predicted velocity (2-dimensional) and growth on Simulation Gene Data. $\text{var}_{x_1}$ and $\text{var}_{x_2}$ represent the average variance of the first and second velocity dimensions, respectively, while $\text{var}_g$ denotes the variance of growth. All variance values are scaled by $10^{-3}$.}
\label{tab:uncertainty_estimation}
\begin{tabular}{ccccc}
\hline
\textbf{Branch} & \textbf{Time} & $\mathbf{\text{var}_{x_1}}$ ($\times 10^{-3}$) & $\mathbf{\text{var}_{x_2}}$ ($\times 10^{-3}$) & $\mathbf{\text{var}_g}$ ($\times 10^{-3}$) \\
\hline
0 & 0 & 0.1080 & 0.1000 & 0.0550 \\
0 & 1 & 0.0770 & 0.0600 & 0.0660 \\
0 & 2 & 0.0660 & 0.0670 & 0.1100 \\
0 & 3 & 0.0850 & 0.0820 & 0.1910 \\
0 & 4 & 0.1220 & 0.1370 & 0.3500 \\
1 & 0 & 2.9320 & 6.5010 & 0.2510 \\
1 & 1 & 3.0230 & 4.2870 & 0.3060 \\
1 & 2 & 2.0410 & 1.9780 & 0.5590 \\
1 & 3 & 0.9530 & 0.9760 & 0.7660 \\
1 & 4 & 0.2800 & 0.3100 & 1.1680 \\
\hline
\end{tabular}
\end{table}

We analyze the uncertainty patterns from both spatial and temporal perspectives:

For spatial analysis, cells in branch 0 exhibit minimal movement with little variation in state and counts. Correspondingly, the variances of $v_1$, $v_2$, and $g$ in this branch are consistently small, indicating low predictive uncertainty. In contrast, branch 1 demonstrates substantially larger variances, reflecting higher uncertainty in predictions.

For temporal analysis, The Simulation Gene Data features significant state transitions and quantity changes during initial phases, with diminishing magnitudes over time. The variance patterns of $v_1$, $v_2$, and $g$ accurately capture this temporal evolution, validating our uncertainty estimation approach. Specifically, for branch 1, the of $v$ decrease monotonically over time, aligning with the reduced dynamics in later stages.

\textbf{Effect of sampling bias.}
In cellular dynamics reconstruction, it is conventionally assumed that observed datasets accurately capture the temporal evolution of the true cell distribution. Consequently, the observed cell counts should align with the biological processes of cell proliferation and death. Evaluating the robustness of VGFM when trained on biased observational data holds significant practical relevance.

To generate non-uniform or subsampled data, we resample the original dataset with perturbed branch ratios. Specifically, for each time point $t$, we randomly sample a perturbation factor $a_t \sim \mathcal{N}(0,\sigma^2)$. Given an original branch ratio of $s_t:(1-s_t)$, the resampled ratio becomes $(s_t - |a_t|):(1 - s_t + |a_t|)$, where $\sigma$ quantifies the bias level. We train our model on the resampled data and evaluate its performance on the original dataset to assess VGFM's robustness.

\begin{table}[h]
\centering
\caption{Performance evaluation ($\mathcal{W}_1$/RME metrics) on Simulation Gene Data with varying sampling bias levels, averaged over three random seeds.}
\label{tab:sampling_bias_simulation}
\begin{tabular}{ccccc}
\toprule
\textbf{Bias Factor $\sigma$} & \textbf{$t_1$} & \textbf{$t_2$} & \textbf{$t_3$} & \textbf{$t_4$} \\
\midrule
0 (unbiased) & 0.041/0.007 & 0.053/0.005 & 0.038/0.007 & 0.040/0.008 \\
0.1 & 0.074/0.035 & 0.094/0.051 & 0.075/0.063 & 0.077/0.032 \\
0.2 & 0.085/0.059 & 0.113/0.073 & 0.134/0.062 & 0.123/0.125 \\
\bottomrule
\end{tabular}
\end{table}

\subsection{Dyngen Data}
 Dyngen is a multi-modal simulation engine for studying dynamic cellular processes at single-cell resolution~\cite{cannoodt2021spearheading}. We follow the setup of~\cite{huguet2022manifold}, which simulates gene expression time-series data that mimics cell proliferation processes, including branching and temporal progression. In the analyzed instance of the \texttt{dyngen} dataset, the samples span five discrete time points, labeled from 0 to 4, capturing the temporal evolution of the cell population. 

We use PHATE~\cite{moon2019visualizing} to reduce its dimensions to 5, the same as~\cite{huguet2022manifold}. Starting from time point 0, cells can be divided into two branches based on the sign of the second PHATE coordinate ($x_2$). Quantitative analysis reveals a pronounced branch imbalance: for instance, at time point 4, 88 cells belong to one branch while 213 belong to the other, which poses a great challenge to the models that do not consider unbalancedness, resulting in cross-branch inference. Throughout this study, we assume that the temporal change in cell counts within each branch is entirely governed by a growth function, without contributions from migration or observational noise.

By setting \( \tau = 5 \) and \( \epsilon = 0.03 \) according to our selection scheme described in Appendix~\ref{apdx:tau} and illustrated in Fig.~\ref{fig:tau} (b).
 Our model successfully avoids cross-branch reconstructions in trajectory modeling, as illustrated in Fig.~\ref{fig:dygen_vis}. This leads to generated samples that remain well-aligned with the underlying manifold. In terms of growth rate estimation, our model also accurately captures the rapid expansion observed in the branch below between time point 3 and time point 4, where the number of observed cells increases dramatically from 25 to 213, indicating the highest growth rate in the system.

\begin{figure}[H]
    \centering
    \subfigure[Predicted dynamics]
    {
    \includegraphics[width=0.45\textwidth]{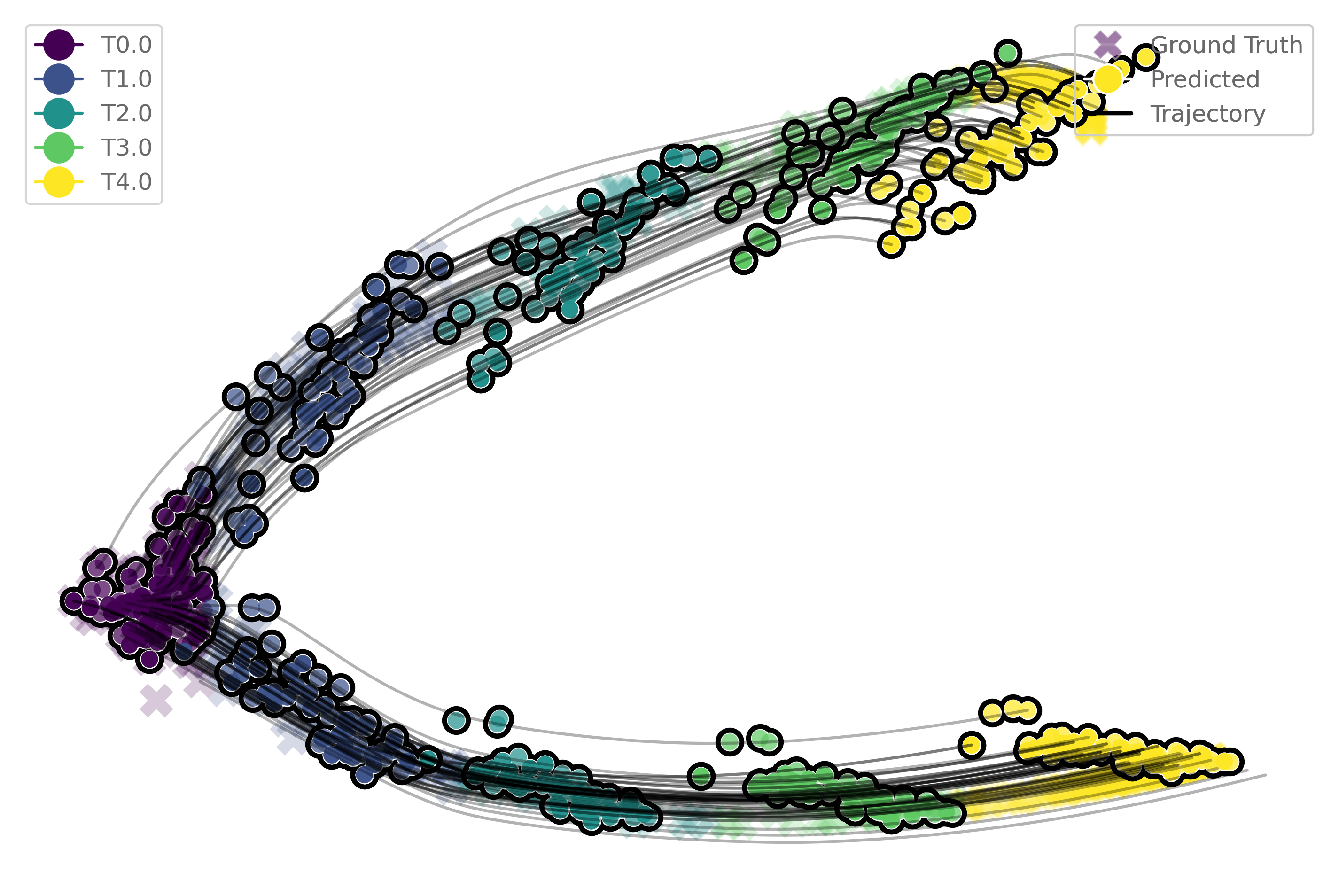}
    }
    \subfigure[Predicted growth rate]
    {
    \includegraphics[width=0.45\textwidth]{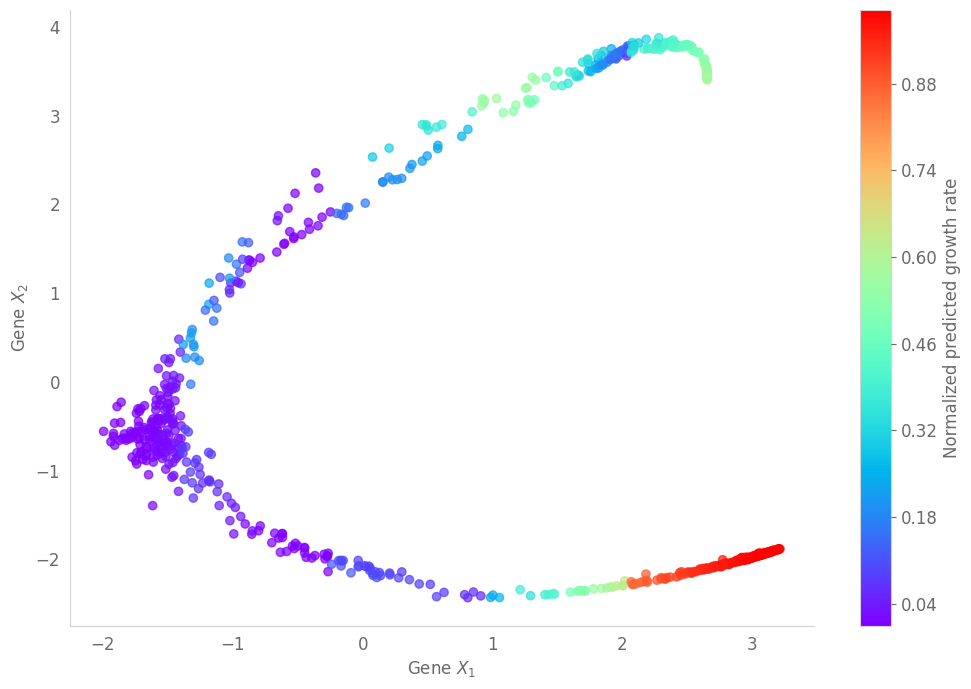}
    }
    \caption{Predicted dynamics and growth rates by VGFM on Dyngen data.}
    \label{fig:dygen_vis}
\end{figure}
\subsection{Gausssian 1000D Data}
DeepRUOT~\cite{zhang2024learning} employs a high-dimensional Gaussian mixture distribution (100D) \cite{ruthotto2020machine} to evaluate the scalability of models.
In this work, we adopt an even more challenging setting by testing scalability at 1000 dimensions. 
Following the setup of \cite{zhang2024learning}, for the initial distribution, we generated 400 samples from the Gaussian located lower in the \((x_1, x_2)\) plane, and 100 samples from the Gaussian positioned higher. For the final distribution, we generated 1,000 samples from the upper Gaussian, and 200 samples each from the two lower Gaussians, and assume cells in the upper region exhibit proliferation without transport \cite{zhang2024learning}. We observe that simulation-based methods \cite{zhang2024learning} encounter training instability in the 1000-dimensional setting and flow matching-based methods fail to learn a reliable velocity field due to the unbalancedness of data, as shown in Tab.~\ref{tab:simulation_data}. (Some works incorporating unbalancedness \cite{eyringunbalancedness,corso2025composing} may as well reconstruct correct velocity field but fail to construct growth rate). In contrast, By setting \( \tau = 5 \) and \( \epsilon = 0 .03 \) according to our selection scheme described in Appendix~\ref{apdx:tau} and illustrated in Fig.~\ref{fig:tau} (c). Our method remains effective even in high-dimensional regimes for both velocity and growth field learning. 
\begin{figure}[H]
    \centering
    \subfigure[Predicted dynamics]
    {
    \includegraphics[width=0.5\textwidth]{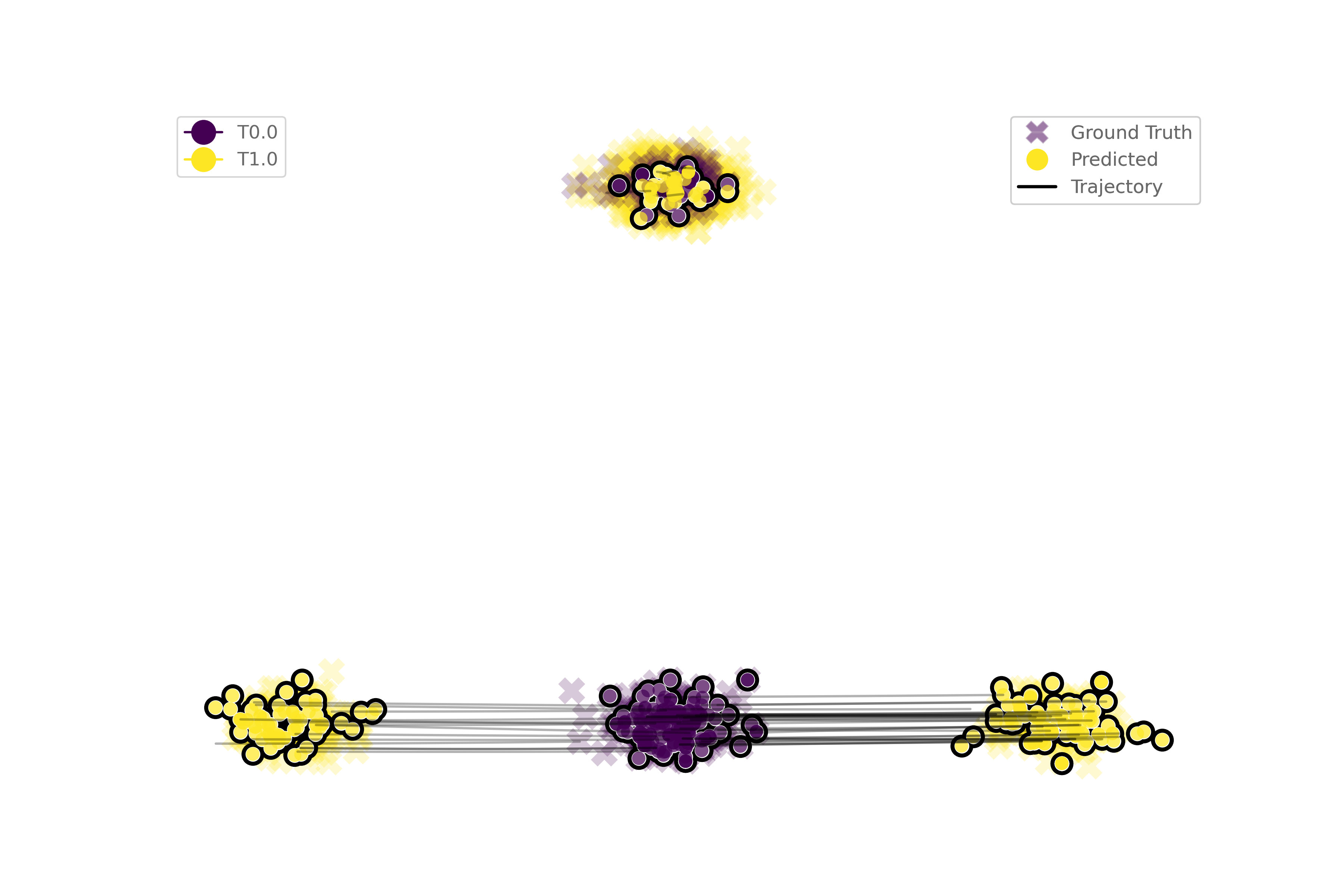}
    }
    \subfigure[Predicted growth rate]
    {
    \includegraphics[width=0.45\textwidth]{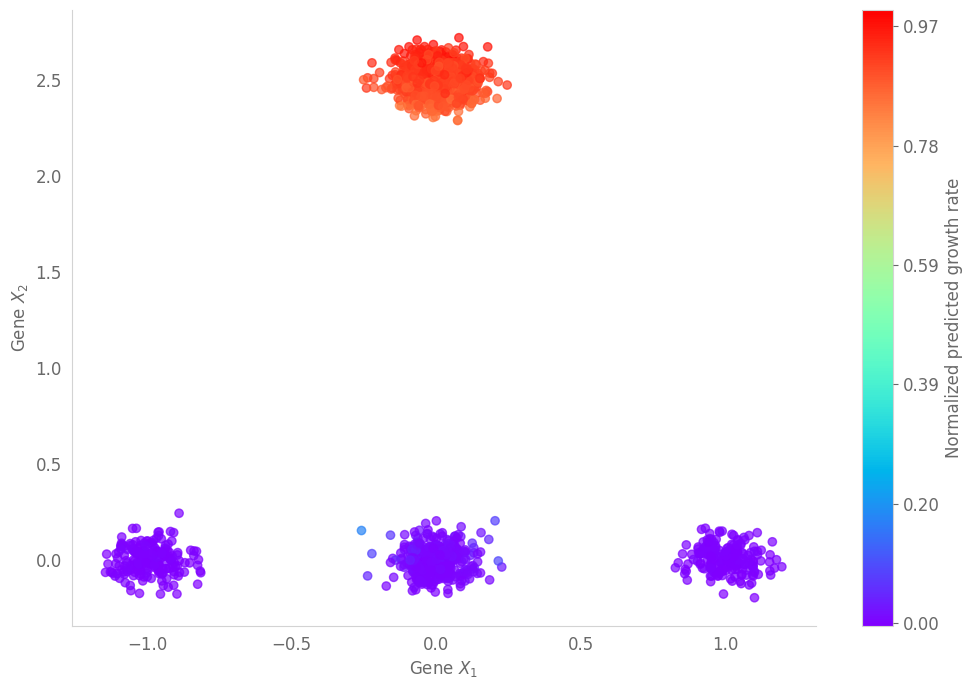}
    }
    \caption{Predicted dynamics and growth rates by VGFM on Gaussian 1000D data.}
    \label{fig:gasussian_vis}
\end{figure}
\subsection{Additional Experiment: Mouse Hematopoiesis Data}
We also validate our algorithm on the mouse hematopoiesis data previously analyzed in~\cite{weinreb2020lineage,sha2024reconstructing,zhang2024learning}. This dataset leverages lineage tracing to track differentiation trajectories. After applying batch correction to integrate data across multiple experiments, the cells were embedded into a two-dimensional force-directed layout (SPRING plot). The resulting visualization reveals a pronounced bifurcation, where early progenitor cells diverge into two distinct differentiation lineages.
\begin{figure}[H]
    \centering
    \subfigure[Predicted dynamics]
    {
    \includegraphics[width=0.5\textwidth]{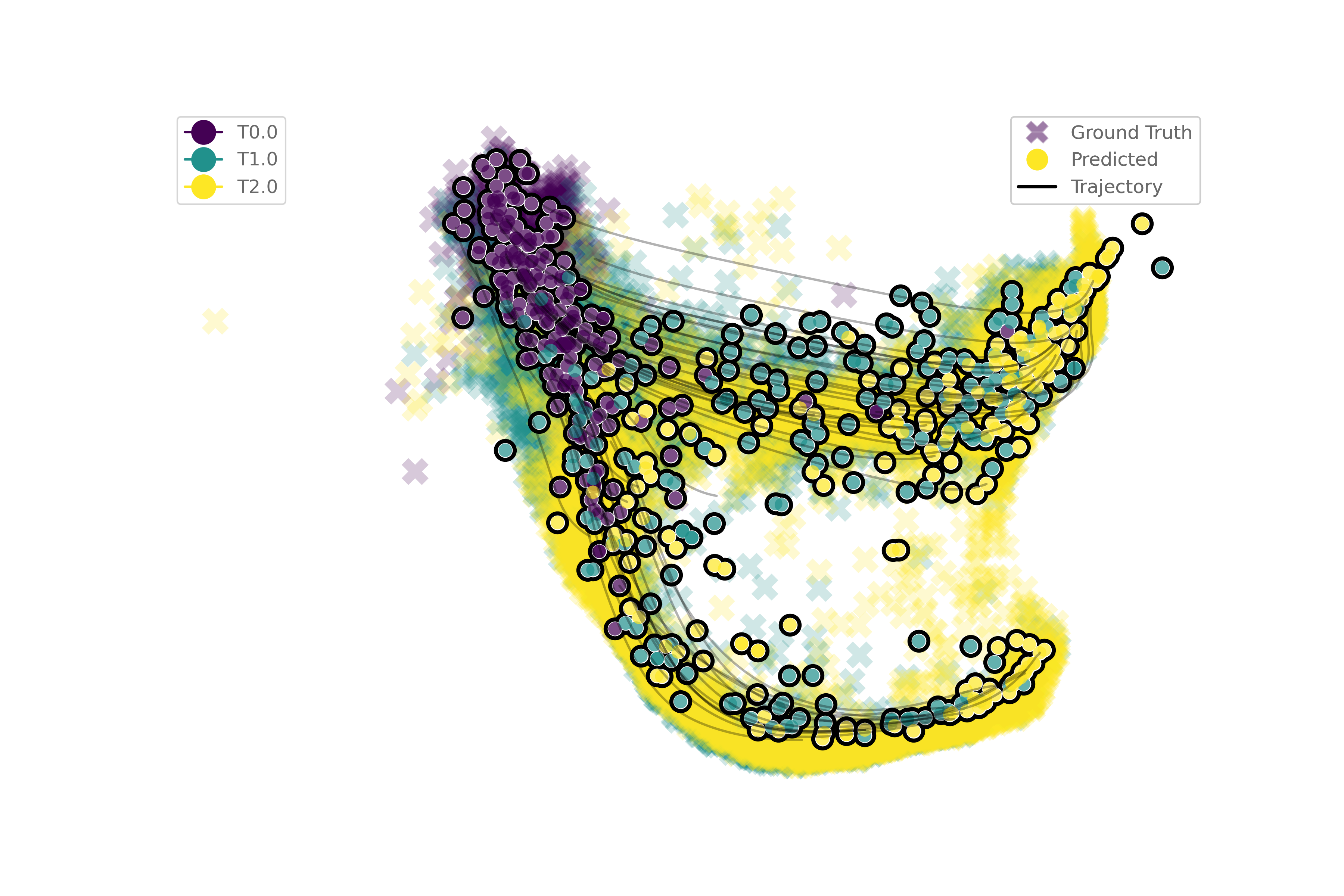}
    }
    \subfigure[Predicted growth rate]
    {
    \includegraphics[width=0.45\textwidth]{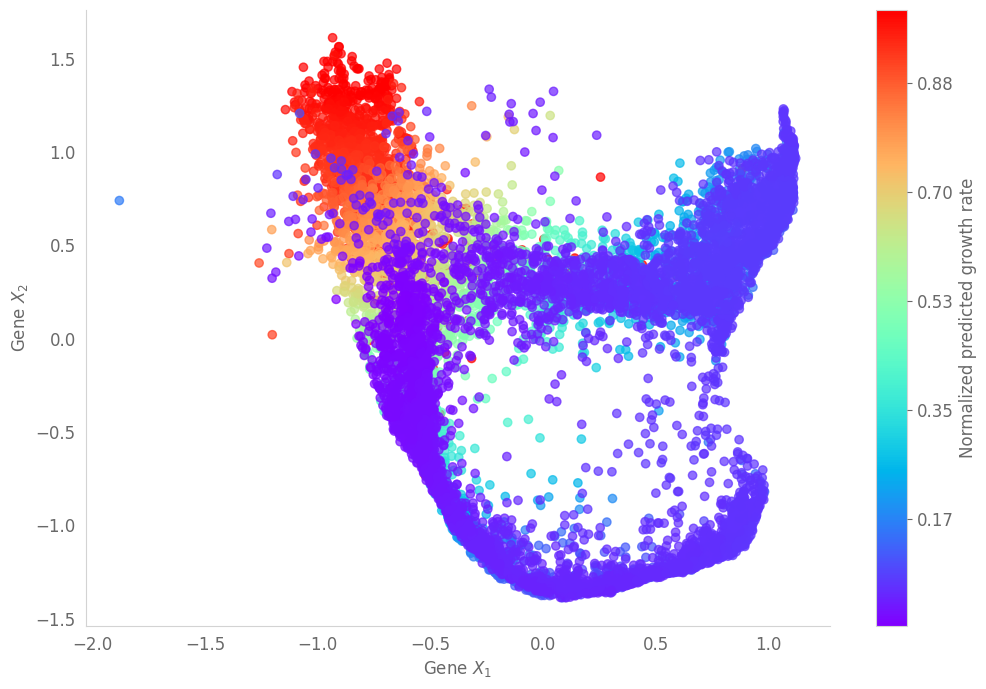}
    }
    \caption{Predicted dynamics and growth rates by VGFM on mouse hematopoiesis data.}
    \label{fig:mouse_vis}
\end{figure}

Our model effectively captures the underlying branching structure in the data, and the predicted growth rates align well with those reported in~\cite{zhang2024learning}, as well as with established biological priors and are consistent with known biological lineage patterns. Following the evaluation protocol of~\cite{zhang2024learning}, we also compute the Wasserstein-1 distance as a quantitative metric. Under this evaluation, by setting \( \tau = 20 \) and \( \epsilon = 0.005 \) according to our selection scheme described in Appendix~\ref{apdx:tau}, we train the model with a warm-up stage of 200 iterations. After that, the distribution fitting loss \( \mathcal{L}_{\rm OT} \) is introduced for an additional 100 training epochs. Our method achieves superior performance, demonstrating improved trajectory inference. In addition, we evaluate the RME (Relative Mass Error) metric introduced in this paper. The results indicate that our model significantly outperforms the other baselines in terms of mass-matching reconstruction accuracy, as shown in Tab.~\ref{tab:mouse} and Fig.~\ref{fig:mouse_mass}.

\begin{table}[H]
\centering
\caption{$\mathcal{W}_1$ and RME over all time on mouse hematopoiesis data. Part of the results were adopted from~\cite{zhang2024learning}.}
\label{tab:mouse}
\begin{tabular}{lcccc}
\toprule
\multirow{2}{*}{\textbf{Models}} 
& \multicolumn{2}{c}{$t_1$} 
& \multicolumn{2}{c}{$t_2$} \\
\cmidrule(lr){2-3} \cmidrule(lr){4-5}
& $\mathcal{W}_1$ & RME 
& $\mathcal{W}_1$ & RME \\
\midrule
SF$^2$M~\cite{tong2024simulation}   &  0.167    & —       & 0.190     &  —    \\
uAM\cite{neklyudov2023action}   &  0.745    &    —    & 0.777     &   —   \\
UDSB~\cite{pariset2023unbalanced}   &  0.388    &    0.159    & 0.128     &   0.249   \\
TIGON~\cite{sha2024reconstructing}  &  0.314    &    0.124    & 0.342     &   0.177   \\
DeepRUOT~\cite{zhang2024learning}  &  0.145    &    0.140    & 0.132     &   0.202   \\
VGFM  &  \textbf{0.115}    &    \textbf{0.043}    & \textbf{0.094}     &   \textbf{0.019}   \\

\bottomrule
\end{tabular}
\end{table}

\begin{figure}[H]
    \centering
    \includegraphics[width=0.7\textwidth]{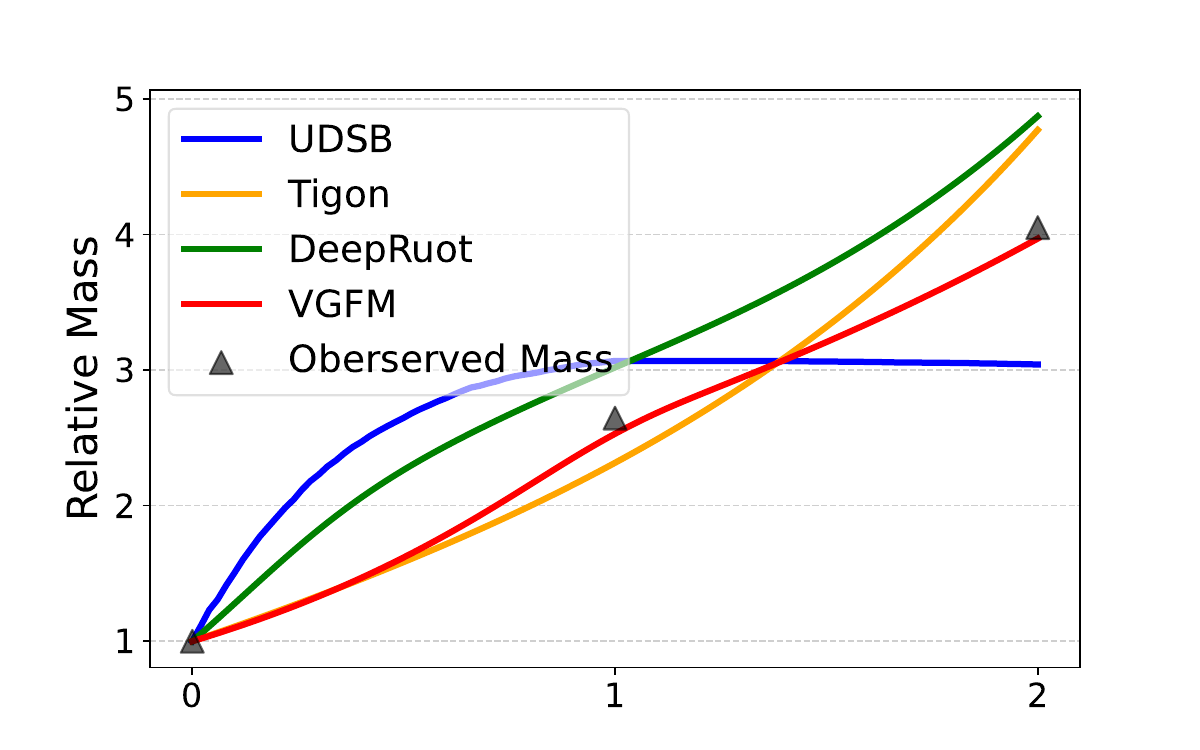}
    \caption{Comparison of predicted relative mass (UDSB, TIGON, DeepRUOT, VFGM) with observed values from mouse hematopoiesis data.}
    \label{fig:mouse_mass}
\end{figure}

\subsection{EB Data}\label{append:eb}

Extensive experiments on the Embryoid Body (EB) \cite{moon2018manifold} dataset with varying PCA dimensions are conducted in Sect.~\ref{sec:experiments}. We preprocess this dataset first by normalizing EB (5D) and leaving EB (50D) unnormalized. The proposed VFGM demonstrates strong performance in both cellular dynamics reconstruction and growth prediction, as illustrated in Fig.~\ref{fig:eb_vis}. Specifically, we set the parameters as $\epsilon=0.01$, $\tau=5$ and normalize the cost matrix to ensure numerical stability when computing the semi-relaxed optimal transport plans (Eq.~\eqref{eq:discretesemikp}).    

\textbf{Big batch strategy for large-scale datasets.}
When the number of observed data points is large, computing the transport plan between two consecutive time points using Eq.~\eqref{eq:discretesemikp} becomes computationally inefficient. To address this, we adopt a Big-Batch Strategy. Specifically, we partition the data at each time point into \( n \) subsets, referred to as Big Batches. For each of the \( n \) Big Batches, we precompute and store the transport plans between adjacent time points. During training, we randomly select one Big Batch and sample smaller mini-batches from it based on the precomputed transport plan for model optimization. In real-world datasets, we set \( n = 5 \), while in synthetic datasets above we do not apply this strategy, \ie, \( n = 1 \), as the sample size is relatively smaller. The mini-batch size is set to 256 for all experiments, except for Dyngen, where we use a smaller batch size of 60 due to its limited number of samples.

\textbf{Comparison with sinkhorn divergence as distribution fitting loss $\mathcal{L}_{\rm OT}$.}
We compute the distribution fitting loss $\mathcal{W}_1$ in Sect. \ref{sec:overalltrain} using EMD distance by \texttt{pot} library, following MIOFlow \cite{huguet2022manifold} and DeepRUOT \cite{zhang2024learning}. Particularly, we compute the OT using the \texttt{pot} library, using the function \texttt{pot.emd()}. This function solves for $\pi$ using a network flow algorithm, which is not compatible with PyTorch’s automatic differentiation. However, the gradients of the cost function $c$ can still be backpropagated.

We replace the EMD distance with Sinkhorn divergence \cite{feydy19a}, which is fully differentiable and enjoys many favorable properties, using CUDA/C++ based \texttt{geomloss} library. Tab. \ref{tab:sinkhorndiv} verified the effectiveness of Sinkhorn divergence for the distribution fitting loss. 

Sinkhorn divergence between distributions $p_0$ and $p_1$ is defined as
$S_\epsilon(p_0, p_1) = \mathcal{W}_\epsilon(p_0, p_1) - \frac{1}{2}\mathcal{W}_\epsilon(p_0, p_1) - \frac{1}{2}\mathcal{W}_\epsilon(p_1, p_1)$, 
where $\mathcal{W}_\epsilon(p_0, p_1)$ is the entropy-regularized optimal transport distance (Sinkhorn distance) and $\epsilon > 0$ is the entropy regularization parameter.

\begin{table}[H]
\centering
\caption{Comparison of EMD and sinkhorn divergence (with different entropic regularization parameters) on EB 50D dataset with metrics ($\mathcal{W_1}$/RME at each timepoint).}
\label{tab:sinkhorndiv}
\begin{tabular}{lccccc}
\toprule
$\mathcal{L}_{\rm OT}$ & $t_1$ & $t_2$ & $t_3$ & $t_4$  & Training time(min) \\
\midrule
EMD & 7.951/0.039 & 8.747/0.042 & 9.244/0.019 & 9.620/0.044  & 13 \\
$S_{\epsilon}(\epsilon=0.001)$ & 7.902/0.018 & 8.767/0.013 & 9.063/0.083 & 9.507/0.096  & 9 \\
$S_{\epsilon}(\epsilon=0.002)$ & 7.904/0.032 & 8.791/0.033 & 9.111/0.104 & 9.523/0.120  & 9 \\
$S_{\epsilon}(\epsilon=0.005)$ & 7.917/0.048 & 8.768/0.062 & 9.086/0.137 & 9.518/0.151  & 9 \\
$S_{\epsilon}(\epsilon=0.01)$ & 7.904/0.035 & 8.801/0.038 & 9.102/0.113 & 9.511/0.127  & 9 \\
$S_{\epsilon}(\epsilon=0.05)$ & 7.908/0.036 & 8.787/0.045 & 9.080/0.120 & 9.512/0.129  & 9 \\
$S_{\epsilon}(\epsilon=0.1)$ & 7.916/0.030 & 8.773/0.022 & 9.111/0.091 & 9.591/0.108  & 9 \\
\bottomrule
\end{tabular}
\end{table}
It can be found that the sinkhorn divergence achieves further improvement on the accuracy of dynamics reconstruction, and costs less time for computing the distribution fitting loss.

\begin{figure}[H]
    \centering
    \subfigure[Predicted dynamics]
    {
    \includegraphics[width=0.45\textwidth]{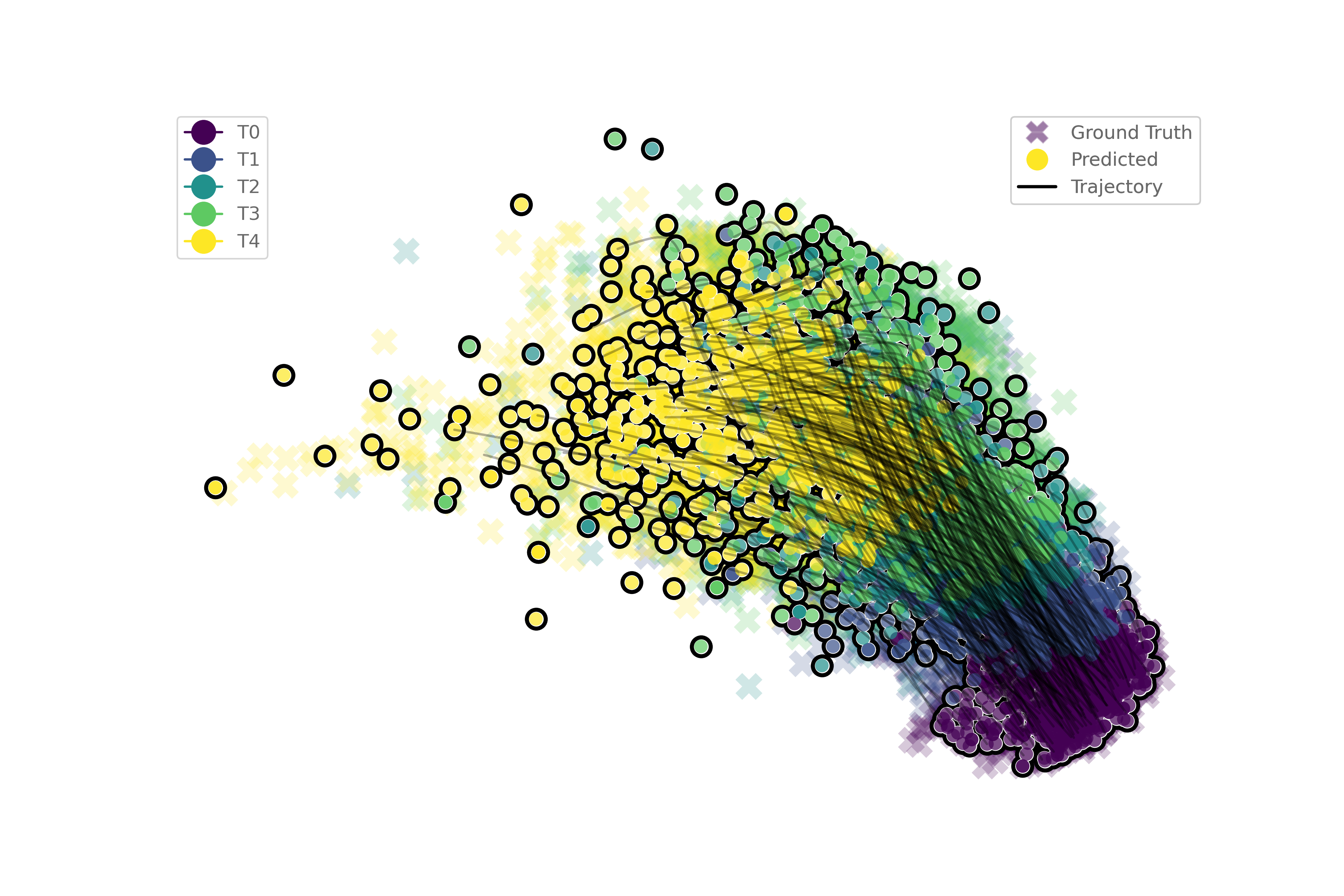}
    }
    \subfigure[Predicted growth rate]
    {
    \includegraphics[width=0.45\textwidth]{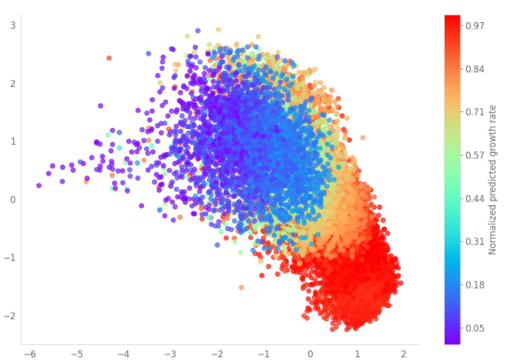}
    }
    \caption{Predicted dynamics and predicted growth rates by VGFM on 5D EB data with timepoint 1 held out.}
    \label{fig:eb_vis}
\end{figure}
We also examine whether the distribution fitting loss might lead to overfitting on the given population snapshots. In the hold-one-out experiments (Tab.~\ref{tab:real}), we relabel the timestamps of the four timepoints as 0, 1, 2, 3, and 4. Following the protocol in~\cite{kapusniak2024metric}, we evaluate the model by holding out one intermediate timepoint at a time (\ie, 1, 2, and 3) and report the average Wasserstein-1 distance. While hold-one-out experiments have already validated the superiority of VFGM over other methods, we further analyze the loss curves of both the distribution fitting loss and the $\mathcal{W}_1$ distance on a hold-out distribution (\eg, the distribution at time 1) that is not seen during training (Fig.~\ref{fig:eb_hoo}). The results show that the $\mathcal{W}_1$ distance on the hold-out distribution decreases during the optimization of $\mathcal{L}_{\rm OT}$, indicating the absence of overfitting and demonstrating the generalization ability of the distribution fitting loss in modeling unseen distributions.

\begin{figure}[H]
    \centering
    \includegraphics[width=0.7\textwidth]{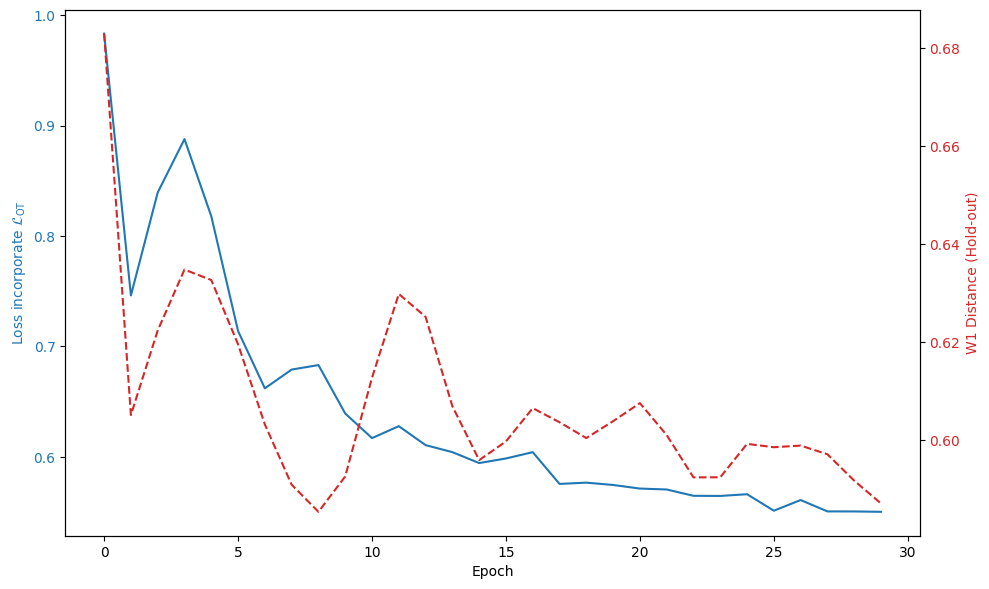}
    \caption{Loss curves of the training loss after incorporating $\mathcal{L}_{\rm OT}$ (blue) and the $\mathcal{W}_1$ distance on a hold-out distribution at timepoint 1 (red) on 5D EB data.}
    \label{fig:eb_hoo}
\end{figure}
\subsection{CITE Data}


CITE-seq (Cellular Indexing of Transcriptomes and Epitopes by Sequencing) \cite{lance2022multimodal} is an advanced technique that enables the simultaneous profiling of transcriptomes and surface protein expression at the single-cell level through the use of antibody-derived tags. In this study, we utilize only the gene expression matrix from the CITE-seq dataset, and preprocess the data by normalizing CITE (5D) and leaving CITE (50D) unnormalized. The experimental settings for the CITE dataset are consistent with those used in the EB data experiments, employing PCA with 5 and 50 dimensions, and hyperparameters set as $\epsilon=0.01$, $\tau=5$.
We relabel the timestamps of the four timepoints as 0, 1, 2, and 3. Following the protocol in~\cite{kapusniak2024metric}, we evaluate the model by holding out one intermediate timepoint at a time (\ie, 1 and 2) and report the average Wasserstein-1 distance.

To assess the model’s capability to capture state trajectories (Fig.~\ref{fig:cite_vis} (a)) and predict mass growth (Fig.~\ref{fig:cite_vis} (b)), the distribution at time point 1 is held out. Additionally, the distribution at time point 2 is held out to evaluate the performance of the $\mathcal{L}_{\rm OT}$ (Fig.~\ref{fig:cite_hoo}). The strong performance observed in the visualizations substantiates the potential of the proposed VGFM framework.

\begin{figure}[H]
    \centering
    \subfigure[Predicted dynamics]
    {
    \includegraphics[width=0.45\textwidth]{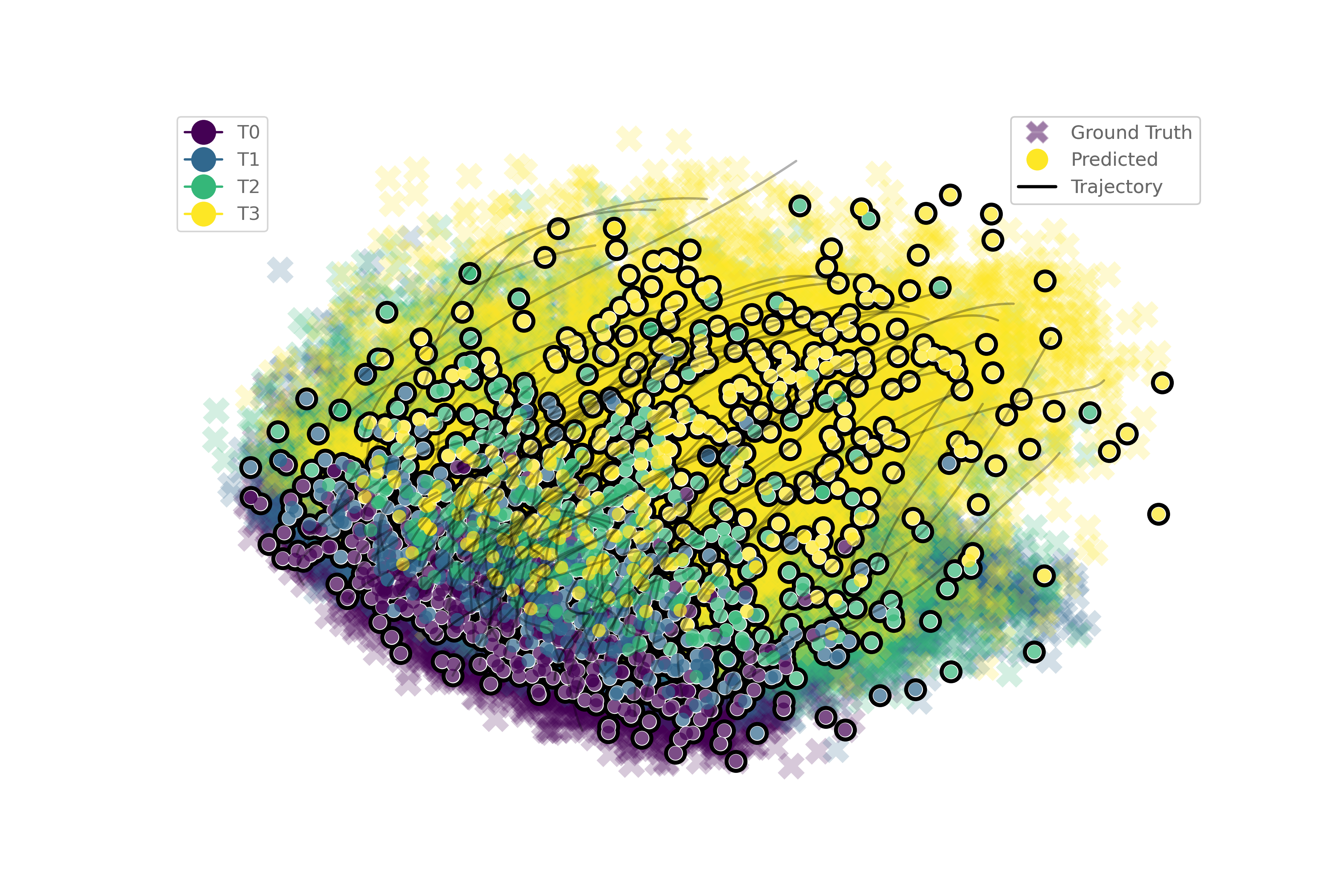}
    }
    \subfigure[Predicted growth rate]
    {
    \includegraphics[width=0.45\textwidth]{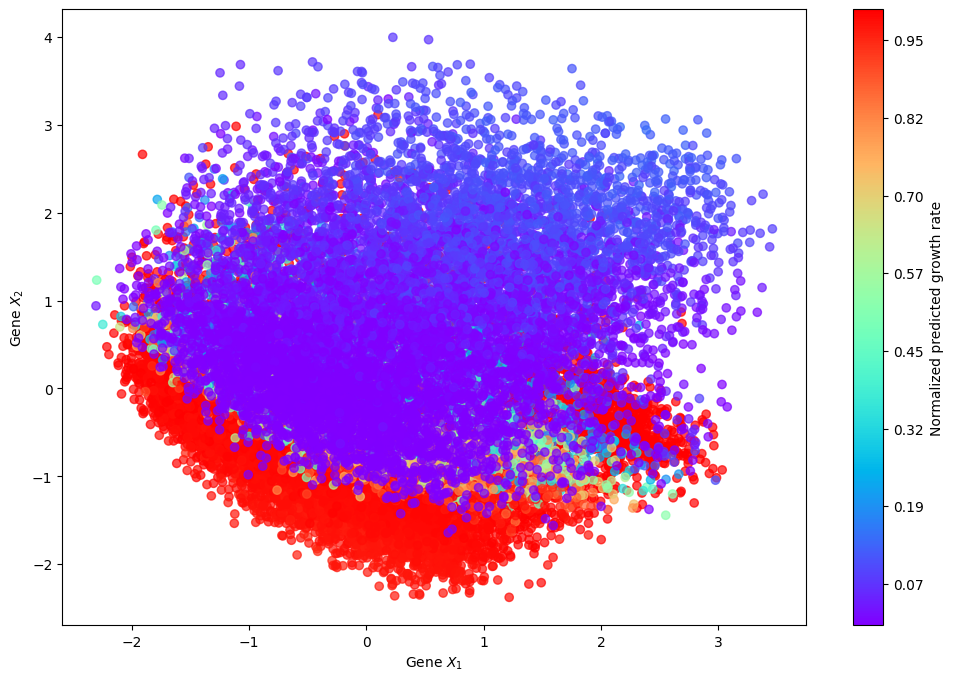}
    }
    \caption{Predicted dynamics and predicted growth rates by VGFM on 5D CITE Data with timepoint 1 held out.}
    \label{fig:cite_vis}
\end{figure}

\begin{figure}[H]
    \centering
    \subfigure[\parbox{0.8\linewidth}{\centering Predicted dynamics (w/o $\mathcal{L}_{\rm OT}$) \\ $\mathcal{W}_1$ at hold-out timepoint: 38.130}]
    {
    \includegraphics[width=0.45\textwidth]{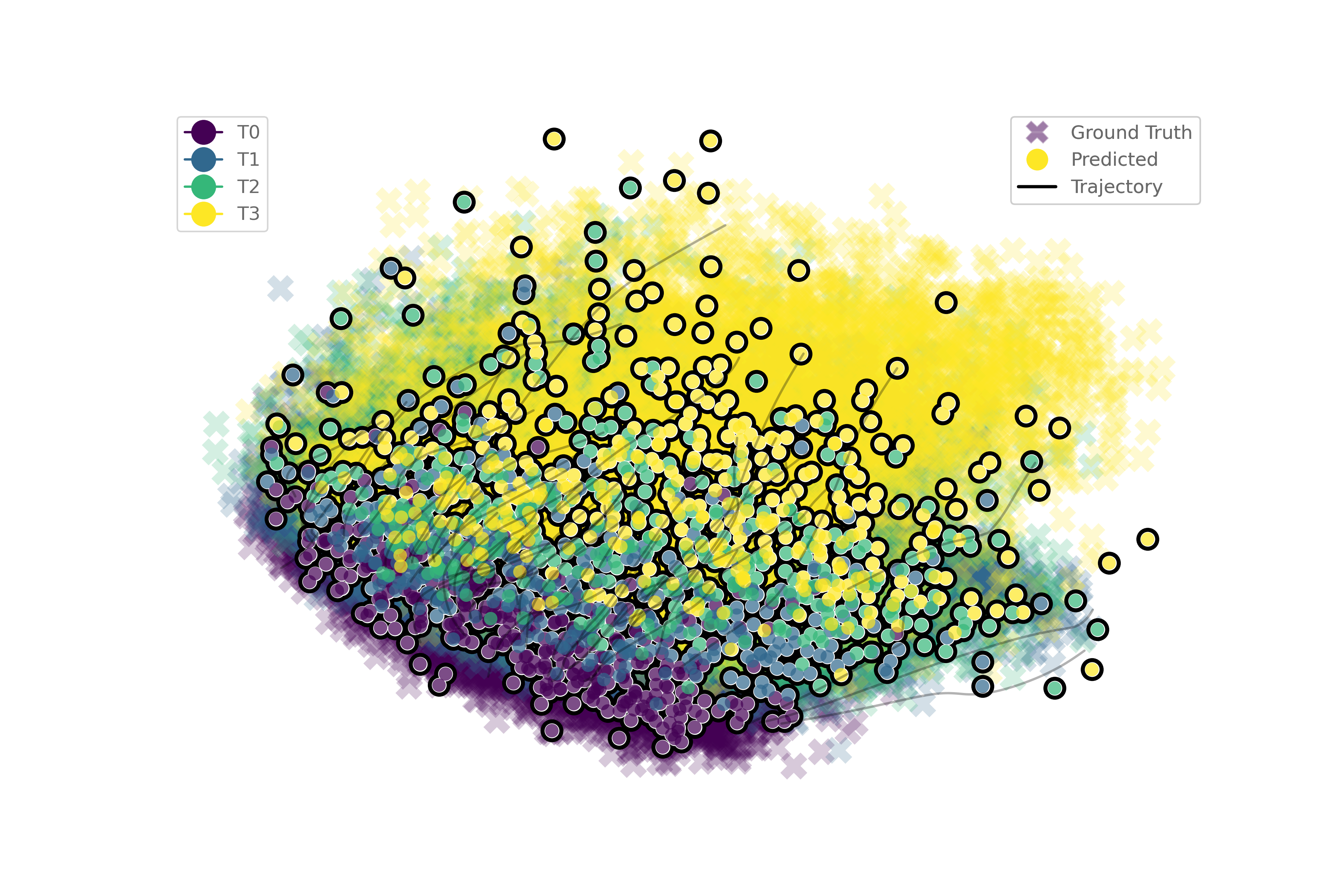}
    }
    \subfigure[\parbox{0.8\linewidth}{\centering Predicted dynamics \\ $\mathcal{W}_1$ at hold-out timepoint: 36.462}]
    {
    \includegraphics[width=0.45\textwidth]{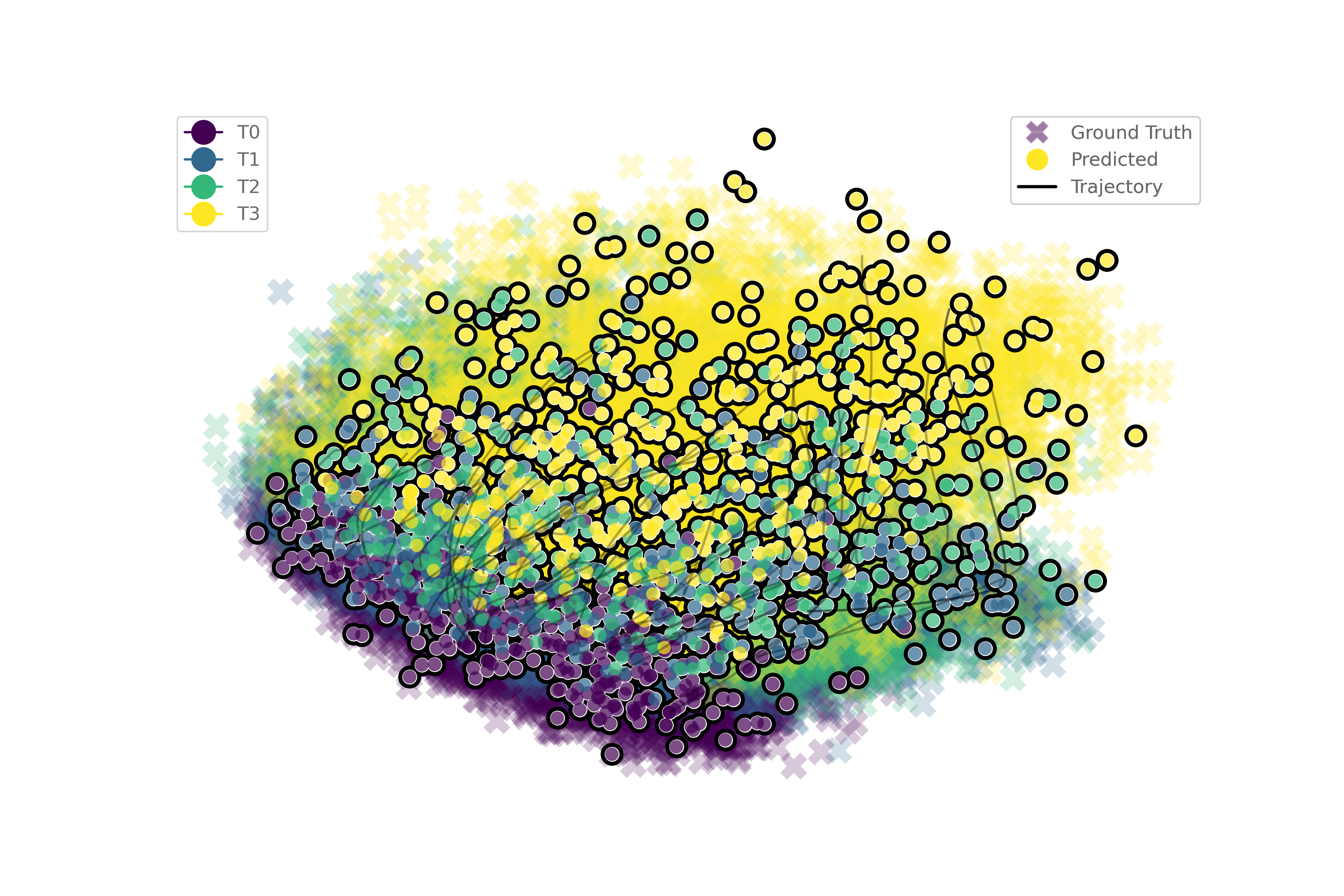}
    }
    \caption{Visualization of predicted dynamics by (a) VFGM (w/o $\mathcal{L}_{\rm OT}$) and (b) VGFM on CITE (50D) dataset, where the hold-out time is the second intermediate timepoint.}
    \label{fig:cite_hoo}
\end{figure}

\subsection{Pancreas Data Analysis}\label{appd: pan}
To further explore the scalability of VGFM, we applied our model and compared methods explicitly modeling $g_t(x)$ \cite{sha2024reconstructing,zhang2024learning} to the Pancreas dataset \cite{bastidas2019comprehensive} comprising measurements of the developing mouse pancreas. We select cells at day 14.5 and 15.5 and relabel them as 0, 1 and select 2000 highly variable genes. As shown in \cite{eyringunbalancedness}, we assume the following cell type transitions are exclusively correct (denoted by $\rightarrow$), \ie, there is no descending cell type (or set of cell types) other than the given one. We partition all considered cell type transitions into three branches.

\textbf{Endocrine branch (ED) transitions.}

\begin{itemize}
    \item Fev+ Alpha (\textbf{FA}) $\rightarrow$ Alpha (\textbf{A})
    \item Fev+ Beta (\textbf{FB}) $\rightarrow$ Beta (\textbf{B})
    \item Fev+ Delta (\textbf{FD}) $\rightarrow$ Delta (\textbf{D})
    \item Fev+ Epsilon (\textbf{FE}) $\rightarrow$ Epsilon (\textbf{E})
    \item A $\rightarrow$ A
    \item B $\rightarrow$ B
    \item D $\rightarrow$ D
    \item E $\rightarrow$ E
\end{itemize}

\textbf{Ngn3 EP transitions.}

\begin{itemize}
    \item Ngn3 high early (\textbf{NE}) $\rightarrow$ ED
    \item Ngn3 high late (\textbf{NL}) $\rightarrow$ ED
\end{itemize}

\textbf{Non-endocrine branch (NEB) transitions.}

\begin{itemize}
    \item Ductal (\textbf{DU}) $\rightarrow$ DU
    \item Tip (\textbf{T}) $\rightarrow$ Acinar (\textbf{AC})
    \item AC $\rightarrow$ AC
\end{itemize}

 We observed that VGFM is the only method showing a steadily decreasing training loss, both for $\mathcal{L}_{\mathrm{VGFM}}$ and $\mathcal{L}_{\mathrm{OT}}$. We report $\mathcal{W}_1$ and RME as shown in Tab. \ref{tab:pan}.

 \begin{table}
     \centering
     \caption{$\mathcal{W}_1$ and RME at day 15.5 between real data and generated data from VGFM and its variant.}
     \label{tab:pan}
     \begin{tabular}{lcc}
     \toprule
          Method&  $\mathcal{W}_1$ &RME\\
          \midrule
          VGFM (w/o $\mathcal{L}_{\rm OT})$&27.002& 0.025\\
          VGFM&24.416&0.017\\
    \bottomrule
     \end{tabular}
     \label{tab:pan}
 \end{table}
 This indicates that both loss terms are still effective in high-dimensional real-world data.

\textbf{Analysis on mean and variance.} We calculated the means and variances of the real and generated gene at day 15.5 and plotted the corresponding mean-variance trend (Fig. \ref{fig:6pic_pan} (a), (b)) and histograms (Fig. \ref{fig:6pic_pan} (c), (d)). The results show that the generated samples closely follow the mean–variance trend of the real data, especially by incorporating $\mathcal{L}_{\rm OT}.$
\begin{figure}[H]
    \centering
    \subfigure[Mean-Variance trend (w/o $\mathcal{L}_{\rm OT}$)]
    {
    \includegraphics[width=0.45\textwidth]{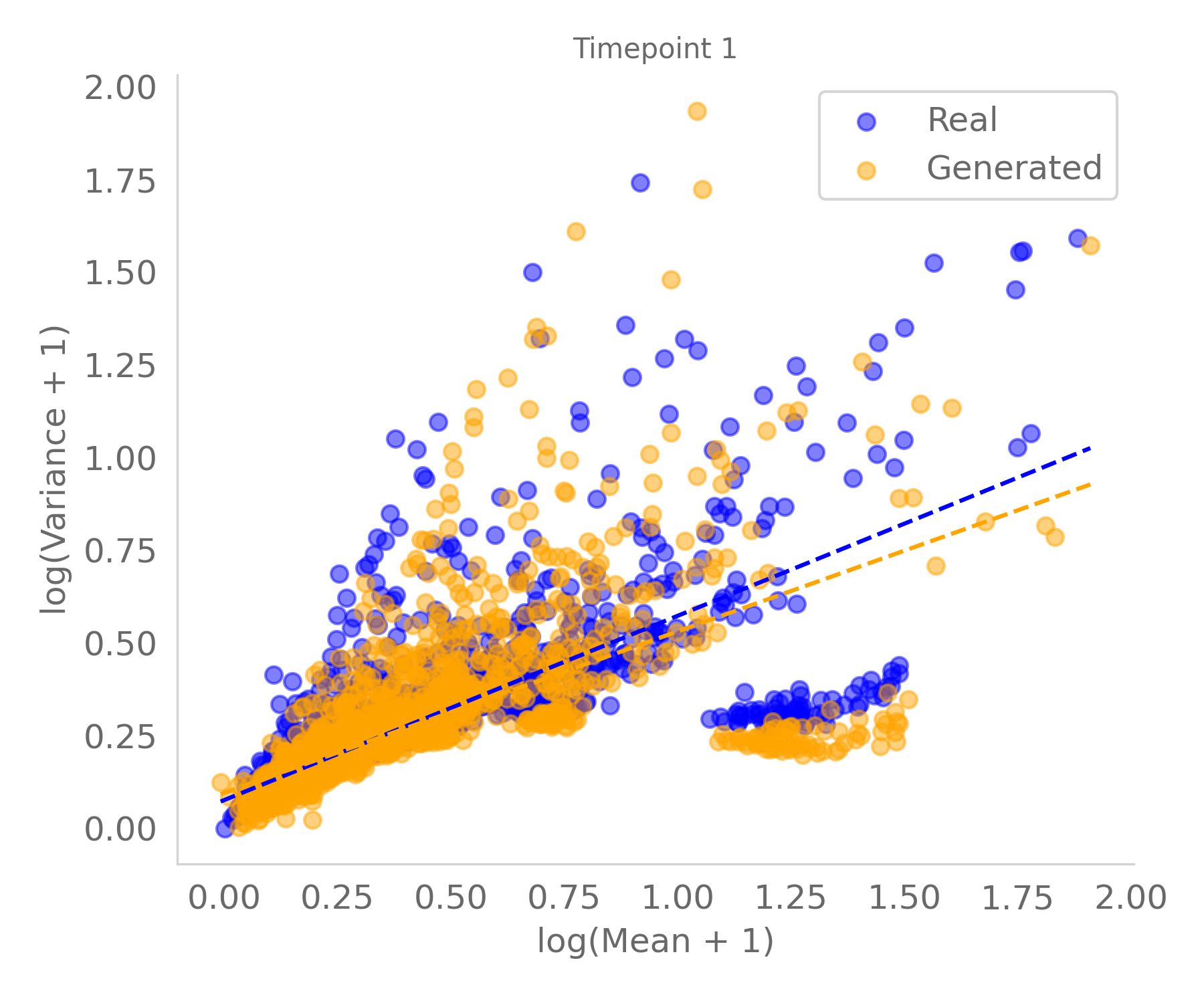}
    }
    \subfigure[Mean-Variance trend]
    {
    \includegraphics[width=0.45\textwidth]{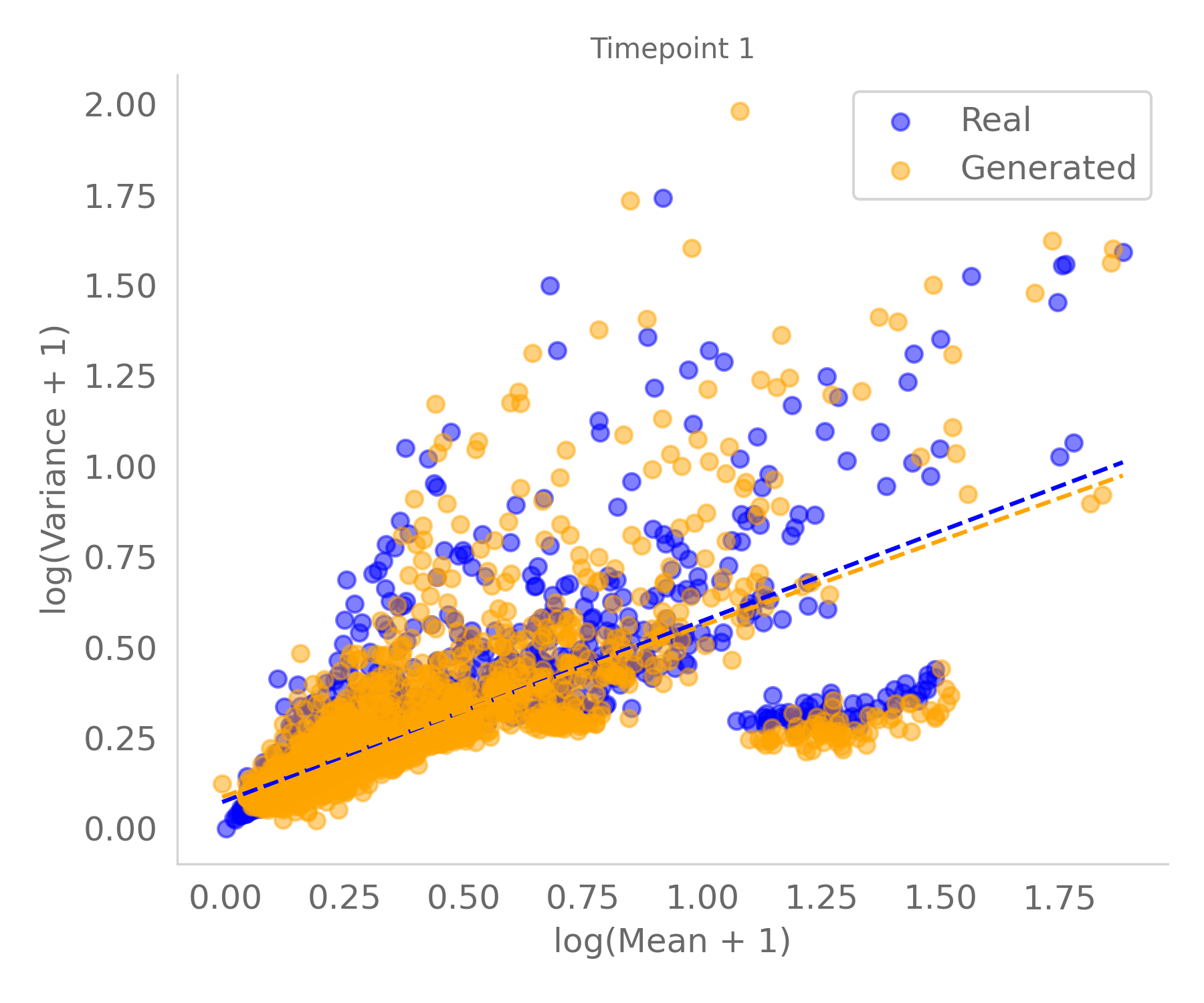}
    }
    \subfigure[Mean-Variance histogram (w/o $\mathcal{L}_{\rm OT}$)]
    {
    \includegraphics[width=0.9\textwidth]{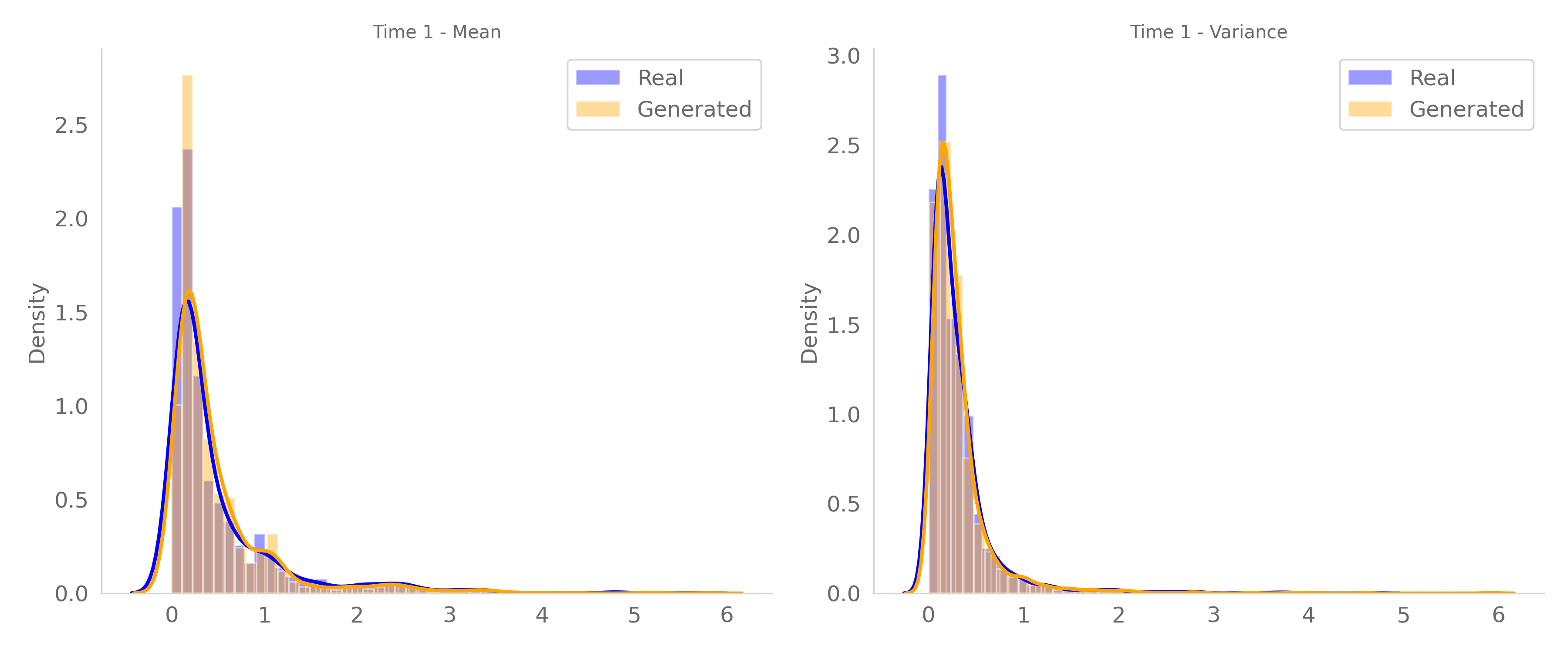}
    }
    \subfigure[Mean-Variance histogram]
    {
    \includegraphics[width=0.9\textwidth]{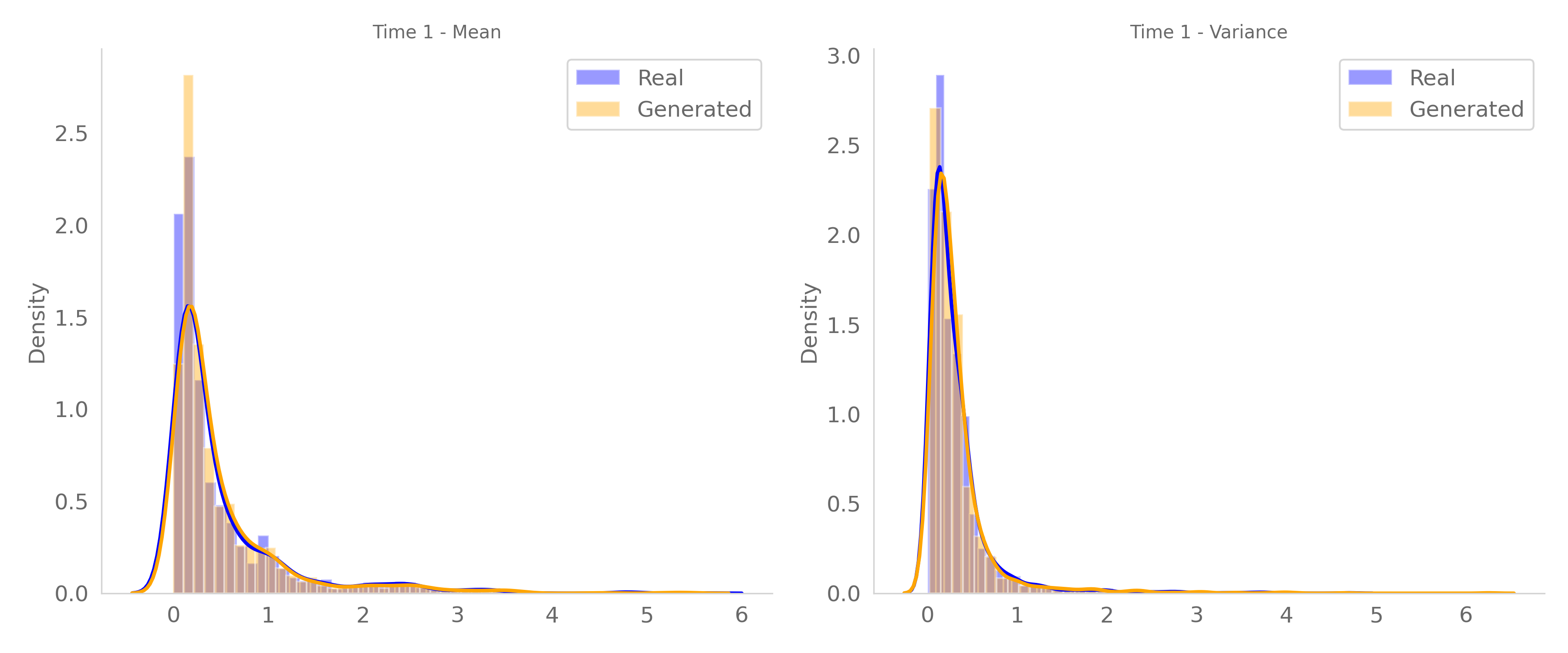}
    }
    \caption{
    mean-variance trend ((a), (b)) and histograms ((c), (d)) of real and generated gene data.
    }
    \label{fig:6pic_pan}
\end{figure}

\textbf{Interpretable learned growth function.} We highlight that our main contribution lies in modeling and training the growth function $g_t(x)$, with VGFM in the flow matching framework, to leverage snapshot mass change for learning $g_w(x,t)$. During training, the growth loss rapidly converges to a negligible value, much faster than the velocity loss. To further investigate $g_w(x,t)$, We first calculate growth rate of each cell at the initial time point (day 14.5) and visualize them in Fig. \ref{fig:mass_branch}. At this time, the proliferation observed during this developmental stage mainly originates from Acinar and Ductal cells. Remarkably, our model successfully recapitulates this pattern without being given any prior knowledge of the cell types, demonstrating its ability to infer biologically meaningful dynamics directly from the data.

\begin{figure}[H]
    \centering
    \includegraphics[width=0.9\linewidth]{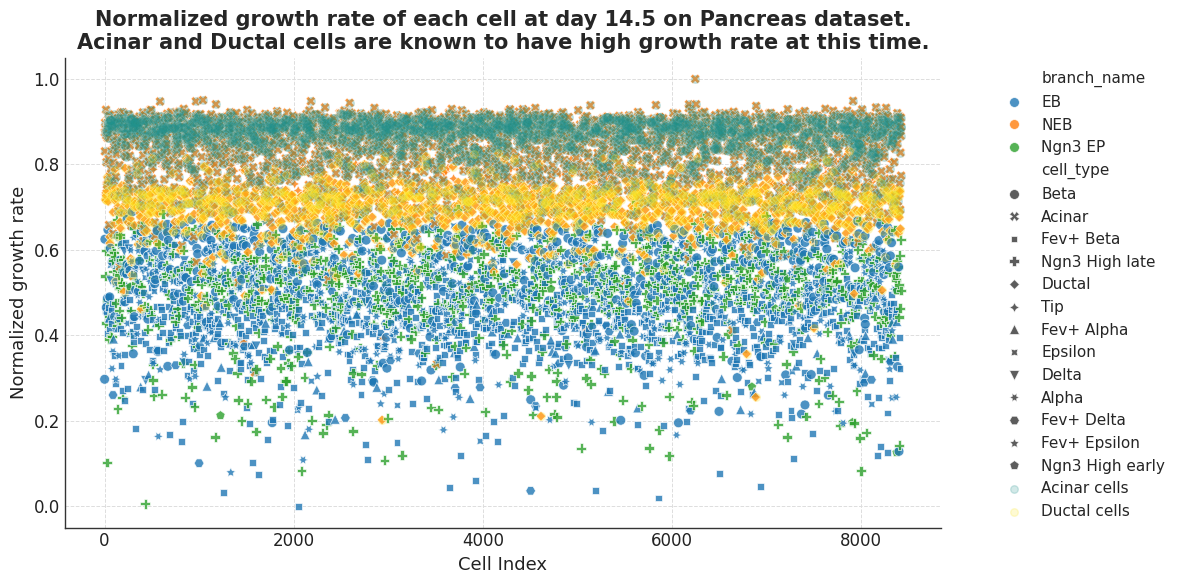}
    \caption{Normalized growth rate of each cell at time point 0 (day 14.5).}
    \label{fig:mass_branch}
\end{figure}
We also demonstrate that the learned $g_w(x,t)$ not only successfully recapitulates the relative growth rates of different cell types without any prior information about cell identities, but also that the branch-wise mass obtained from numerical integration of the ODE closely matches the total branch mass at time point 1. 

\begin{figure}[H]
    \centering  
     \includegraphics[width=0.8\linewidth]{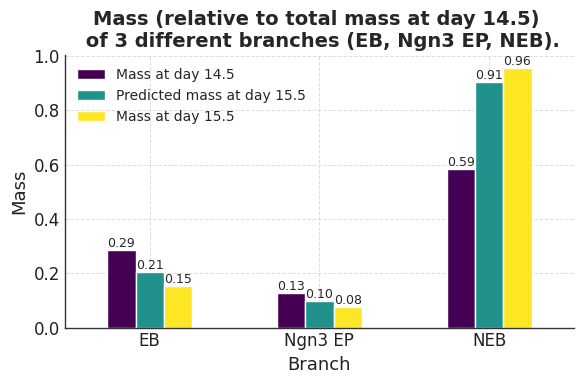}
    \caption{Relative mass of different branches}
    \label{fig:mass_pred}
\end{figure}

we then follow \cite{sha2024reconstructing}, for each time $t$, compute the element-wise absolute value $\left|\frac{\partial g(x,t)}{\partial x_i}\right|$ in the gene space for every cell, and average across cells to quantify each gene’s contribution to growth dynamics. From our analysis of $g_w(x,t)$, We have identified Pnliprp1, Clps, and Ctrb1, as shown in Fig. \ref{fig:gene_growth}. These genes are well-known markers of Acinar cells and are highly expressed in the exocrine pancreas. Notably, as shown above, cells from the non-endocrine branch, particularly Acinar and Ductal cells, are much more abundant at later stages due to their high proliferation rates. This alignment between the learned growth-driving genes and known biological processes indicates that VGFM captures growth dynamics in a biologically interpretable manner, successfully linking the latent growth function to meaningful cell-type–specific proliferation patterns.
\begin{figure}[H]
    \centering
    \subfigure[Key genes at time 0]
    {
    \includegraphics[width=0.45\textwidth]{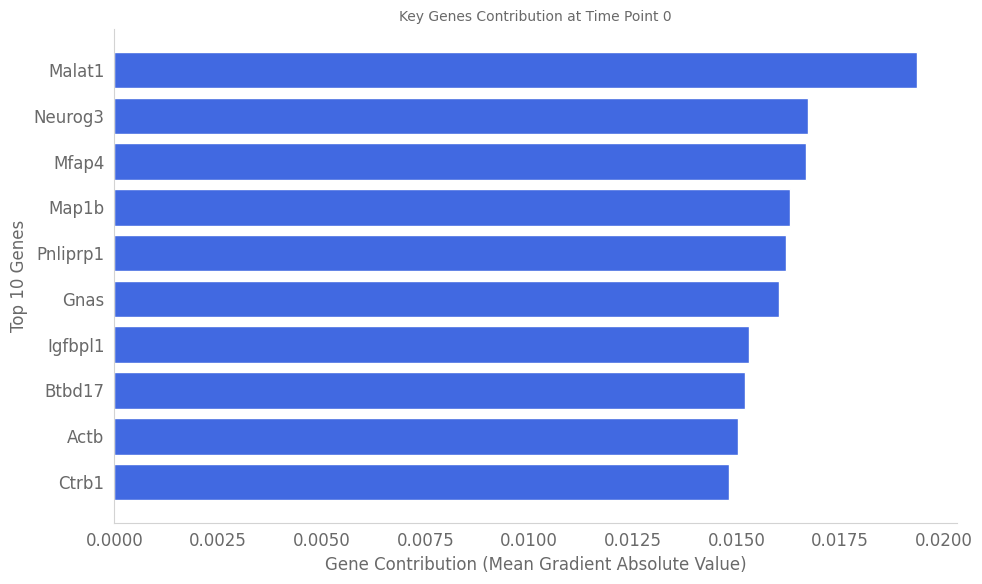}
    }
    \subfigure[Key genes at time 1]
    {
    \includegraphics[width=0.45\textwidth]{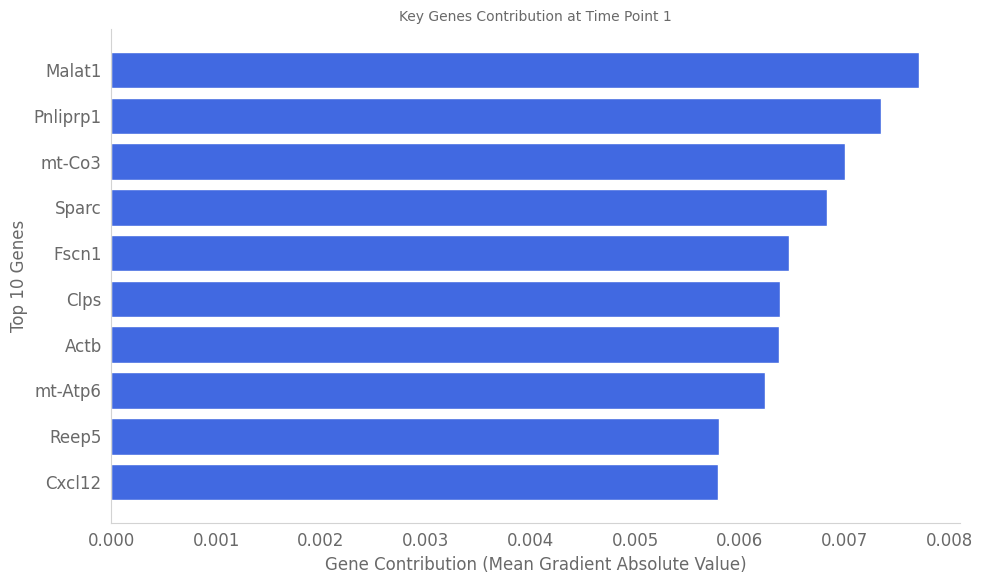}
    }
    \caption{
    Key genes identified by $g_w(x,t)$ at time 0 (a) and 1 (b).
    }
    \label{fig:gene_growth}
\end{figure}

\section{Broader Impacts}\label{append:broader}
Our method provides a scalable and efficient framework for modeling cellular dynamics, enabling trajectory reconstruction and growth rate inference in high-dimensional single-cell datasets. In biomedical and clinical settings, such capabilities can facilitate a deeper understanding of developmental processes, disease progression, and response to treatment at a single-cell resolution. For example, modeling differentiation trajectories and proliferation patterns of stem or immune cells could inform therapeutic strategies in cancer, regenerative medicine, and immunotherapy.

However, the proposed method is inherently data-driven and relies on statistical patterns learned from observational data. As such, it may produce biologically implausible trajectories or growth behaviors that conflict with known biological priors, especially when the training data is noisy, biased, or incomplete. This could potentially lead to misleading interpretations or incorrect clinical hypotheses if not carefully validated by domain experts. Therefore, we emphasize that any downstream medical conclusions drawn from the model’s output should be interpreted with caution and in conjunction with biological prior knowledge and experimental validation.

\newpage
\section*{NeurIPS Paper Checklist}

\begin{enumerate}

\item {\bf Claims}
    \item[] Question: Do the main claims made in the abstract and introduction accurately reflect the paper's contributions and scope?
    \item[] Answer: \answerYes{} 
    \item[] Justification: The main claims made in the abstract and introduction accurately reflect the
paper’s contributions and scope.
    \item[] Guidelines:
    \begin{itemize}
        \item The answer NA means that the abstract and introduction do not include the claims made in the paper.
        \item The abstract and/or introduction should clearly state the claims made, including the contributions made in the paper and important assumptions and limitations. A No or NA answer to this question will not be perceived well by the reviewers. 
        \item The claims made should match theoretical and experimental results, and reflect how much the results can be expected to generalize to other settings. 
        \item It is fine to include aspirational goals as motivation as long as it is clear that these goals are not attained by the paper. 
    \end{itemize}

\item {\bf Limitations}
    \item[] Question: Does the paper discuss the limitations of the work performed by the authors?
    \item[] Answer: \answerYes{} 
    \item[] Justification: Please see Sect.~\ref{sec:discussion}
    \item[] Guidelines:
    \begin{itemize}
        \item The answer NA means that the paper has no limitation while the answer No means that the paper has limitations, but those are not discussed in the paper. 
        \item The authors are encouraged to create a separate "Limitations" section in their paper.
        \item The paper should point out any strong assumptions and how robust the results are to violations of these assumptions (e.g., independence assumptions, noiseless settings, model well-specification, asymptotic approximations only holding locally). The authors should reflect on how these assumptions might be violated in practice and what the implications would be.
        \item The authors should reflect on the scope of the claims made, e.g., if the approach was only tested on a few datasets or with a few runs. In general, empirical results often depend on implicit assumptions, which should be articulated.
        \item The authors should reflect on the factors that influence the performance of the approach. For example, a facial recognition algorithm may perform poorly when image resolution is low or images are taken in low lighting. Or a speech-to-text system might not be used reliably to provide closed captions for online lectures because it fails to handle technical jargon.
        \item The authors should discuss the computational efficiency of the proposed algorithms and how they scale with dataset size.
        \item If applicable, the authors should discuss possible limitations of their approach to address problems of privacy and fairness.
        \item While the authors might fear that complete honesty about limitations might be used by reviewers as grounds for rejection, a worse outcome might be that reviewers discover limitations that aren't acknowledged in the paper. The authors should use their best judgment and recognize that individual actions in favor of transparency play an important role in developing norms that preserve the integrity of the community. Reviewers will be specifically instructed to not penalize honesty concerning limitations.
    \end{itemize}

\item {\bf Theory assumptions and proofs}
    \item[] Question: For each theoretical result, does the paper provide the full set of assumptions and a complete (and correct) proof?
    \item[] Answer: \answerYes{}
    \item[] Justification: We provide the full set of assumptions and a complete (and correct) proof for each theoretical result.
    \item[] Guidelines:
    \begin{itemize}
        \item The answer NA means that the paper does not include theoretical results. 
        \item All the theorems, formulas, and proofs in the paper should be numbered and cross-referenced.
        \item All assumptions should be clearly stated or referenced in the statement of any theorems.
        \item The proofs can either appear in the main paper or the supplemental material, but if they appear in the supplemental material, the authors are encouraged to provide a short proof sketch to provide intuition. 
        \item Inversely, any informal proof provided in the core of the paper should be complemented by formal proofs provided in appendix or supplemental material.
        \item Theorems and Lemmas that the proof relies upon should be properly referenced. 
    \end{itemize}

    \item {\bf Experimental result reproducibility}
    \item[] Question: Does the paper fully disclose all the information needed to reproduce the main experimental results of the paper to the extent that it affects the main claims and/or conclusions of the paper (regardless of whether the code and data are provided or not)?
    \item[] Answer: \answerYes{} 
    \item[] Justification: The paper fully discloses all the information needed to reproduce the main experimental results of the paper.
    \item[] Guidelines:
    \begin{itemize}
        \item The answer NA means that the paper does not include experiments.
        \item If the paper includes experiments, a No answer to this question will not be perceived well by the reviewers: Making the paper reproducible is important, regardless of whether the code and data are provided or not.
        \item If the contribution is a dataset and/or model, the authors should describe the steps taken to make their results reproducible or verifiable. 
        \item Depending on the contribution, reproducibility can be accomplished in various ways. For example, if the contribution is a novel architecture, describing the architecture fully might suffice, or if the contribution is a specific model and empirical evaluation, it may be necessary to either make it possible for others to replicate the model with the same dataset, or provide access to the model. In general. releasing code and data is often one good way to accomplish this, but reproducibility can also be provided via detailed instructions for how to replicate the results, access to a hosted model (e.g., in the case of a large language model), releasing of a model checkpoint, or other means that are appropriate to the research performed.
        \item While NeurIPS does not require releasing code, the conference does require all submissions to provide some reasonable avenue for reproducibility, which may depend on the nature of the contribution. For example
        \begin{enumerate}
            \item If the contribution is primarily a new algorithm, the paper should make it clear how to reproduce that algorithm.
            \item If the contribution is primarily a new model architecture, the paper should describe the architecture clearly and fully.
            \item If the contribution is a new model (e.g., a large language model), then there should either be a way to access this model for reproducing the results or a way to reproduce the model (e.g., with an open-source dataset or instructions for how to construct the dataset).
            \item We recognize that reproducibility may be tricky in some cases, in which case authors are welcome to describe the particular way they provide for reproducibility. In the case of closed-source models, it may be that access to the model is limited in some way (e.g., to registered users), but it should be possible for other researchers to have some path to reproducing or verifying the results.
        \end{enumerate}
    \end{itemize}

\item {\bf Open access to data and code}
    \item[] Question: Does the paper provide open access to the data and code, with sufficient instructions to faithfully reproduce the main experimental results, as described in supplemental material?
    \item[] Answer: \answerYes{} 
    \item[] Justification: Our code is available at \href{https://github.com/DongyiWang-66/VGFM}{https://github.com/DongyiWang-66/VGFM}.
    \item[] Guidelines:
    \begin{itemize}
        \item The answer NA means that paper does not include experiments requiring code.
        \item Please see the NeurIPS code and data submission guidelines (\url{https://nips.cc/public/guides/CodeSubmissionPolicy}) for more details.
        \item While we encourage the release of code and data, we understand that this might not be possible, so “No” is an acceptable answer. Papers cannot be rejected simply for not including code, unless this is central to the contribution (e.g., for a new open-source benchmark).
        \item The instructions should contain the exact command and environment needed to run to reproduce the results. See the NeurIPS code and data submission guidelines (\url{https://nips.cc/public/guides/CodeSubmissionPolicy}) for more details.
        \item The authors should provide instructions on data access and preparation, including how to access the raw data, preprocessed data, intermediate data, and generated data, etc.
        \item The authors should provide scripts to reproduce all experimental results for the new proposed method and baselines. If only a subset of experiments are reproducible, they should state which ones are omitted from the script and why.
        \item At submission time, to preserve anonymity, the authors should release anonymized versions (if applicable).
        \item Providing as much information as possible in supplemental material (appended to the paper) is recommended, but including URLs to data and code is permitted.
    \end{itemize}

\item {\bf Experimental setting/details}
    \item[] Question: Does the paper specify all the training and test details (e.g., data splits, hyperparameters, how they were chosen, type of optimizer, etc.) necessary to understand the results?
    \item[] Answer: \answerYes{} 
    \item[] Justification:  The paper specifies all the training and test details necessary to understand the results.
    \item[] Guidelines:
    \begin{itemize}
        \item The answer NA means that the paper does not include experiments.
        \item The experimental setting should be presented in the core of the paper to a level of detail that is necessary to appreciate the results and make sense of them.
        \item The full details can be provided either with the code, in appendix, or as supplemental material.
    \end{itemize}

\item {\bf Experiment statistical significance}
    \item[] Question: Does the paper report error bars suitably and correctly defined or other appropriate information about the statistical significance of the experiments?
    \item[] Answer: \answerNo{}
    \item[] Justification: The current version of the paper does not report error bars or statistical significance metrics.
    \item[] Guidelines:
    \begin{itemize}
        \item The answer NA means that the paper does not include experiments.
        \item The authors should answer "Yes" if the results are accompanied by error bars, confidence intervals, or statistical significance tests, at least for the experiments that support the main claims of the paper.
        \item The factors of variability that the error bars are capturing should be clearly stated (for example, train/test split, initialization, random drawing of some parameter, or overall run with given experimental conditions).
        \item The method for calculating the error bars should be explained (closed form formula, call to a library function, bootstrap, etc.)
        \item The assumptions made should be given (e.g., Normally distributed errors).
        \item It should be clear whether the error bar is the standard deviation or the standard error of the mean.
        \item It is OK to report 1-sigma error bars, but one should state it. The authors should preferably report a 2-sigma error bar than state that they have a 96\% CI, if the hypothesis of Normality of errors is not verified.
        \item For asymmetric distributions, the authors should be careful not to show in tables or figures symmetric error bars that would yield results that are out of range (e.g. negative error rates).
        \item If error bars are reported in tables or plots, The authors should explain in the text how they were calculated and reference the corresponding figures or tables in the text.
    \end{itemize}

\item {\bf Experiments compute resources}
    \item[] Question: For each experiment, does the paper provide sufficient information on the computer resources (type of compute workers, memory, time of execution) needed to reproduce the experiments?
    \item[] Answer: \answerYes{} 
    \item[] Justification: The paper provide sufficient information on the computer resources needed to reproduce the experiments.
    \item[] Guidelines:
    \begin{itemize}
        \item The answer NA means that the paper does not include experiments.
        \item The paper should indicate the type of compute workers CPU or GPU, internal cluster, or cloud provider, including relevant memory and storage.
        \item The paper should provide the amount of compute required for each of the individual experimental runs as well as estimate the total compute. 
        \item The paper should disclose whether the full research project required more compute than the experiments reported in the paper (e.g., preliminary or failed experiments that didn't make it into the paper). 
    \end{itemize}
    
\item {\bf Code of ethics}
    \item[] Question: Does the research conducted in the paper conform, in every respect, with the NeurIPS Code of Ethics \url{https://neurips.cc/public/EthicsGuidelines}?
    \item[] Answer:\answerYes{} 
    \item[] Justification:  The research conducted in the paper conform, in every respect, with the NeurIPS Code of Ethics.
    \item[] Guidelines:
    \begin{itemize}
        \item The answer NA means that the authors have not reviewed the NeurIPS Code of Ethics.
        \item If the authors answer No, they should explain the special circumstances that require a deviation from the Code of Ethics.
        \item The authors should make sure to preserve anonymity (e.g., if there is a special consideration due to laws or regulations in their jurisdiction).
    \end{itemize}

\item {\bf Broader impacts}
    \item[] Question: Does the paper discuss both potential positive societal impacts and negative societal impacts of the work performed?
    \item[] Answer: \answerYes{} 
    \item[] Justification: Please see Appendix~\ref{append:broader}.
    \item[] Guidelines:
    \begin{itemize}
        \item The answer NA means that there is no societal impact of the work performed.
        \item If the authors answer NA or No, they should explain why their work has no societal impact or why the paper does not address societal impact.
        \item Examples of negative societal impacts include potential malicious or unintended uses (e.g., disinformation, generating fake profiles, surveillance), fairness considerations (e.g., deployment of technologies that could make decisions that unfairly impact specific groups), privacy considerations, and security considerations.
        \item The conference expects that many papers will be foundational research and not tied to particular applications, let alone deployments. However, if there is a direct path to any negative applications, the authors should point it out. For example, it is legitimate to point out that an improvement in the quality of generative models could be used to generate deepfakes for disinformation. On the other hand, it is not needed to point out that a generic algorithm for optimizing neural networks could enable people to train models that generate Deepfakes faster.
        \item The authors should consider possible harms that could arise when the technology is being used as intended and functioning correctly, harms that could arise when the technology is being used as intended but gives incorrect results, and harms following from (intentional or unintentional) misuse of the technology.
        \item If there are negative societal impacts, the authors could also discuss possible mitigation strategies (e.g., gated release of models, providing defenses in addition to attacks, mechanisms for monitoring misuse, mechanisms to monitor how a system learns from feedback over time, improving the efficiency and accessibility of ML).
    \end{itemize}
    
\item {\bf Safeguards}
    \item[] Question: Does the paper describe safeguards that have been put in place for responsible release of data or models that have a high risk for misuse (e.g., pretrained language models, image generators, or scraped datasets)?
    \item[] Answer: \answerNA{}{} 
    \item[] Justification: The paper poses no such risks.
    \item[] Guidelines:
    \begin{itemize}
        \item The answer NA means that the paper poses no such risks.
        \item Released models that have a high risk for misuse or dual-use should be released with necessary safeguards to allow for controlled use of the model, for example by requiring that users adhere to usage guidelines or restrictions to access the model or implementing safety filters. 
        \item Datasets that have been scraped from the Internet could pose safety risks. The authors should describe how they avoided releasing unsafe images.
        \item We recognize that providing effective safeguards is challenging, and many papers do not require this, but we encourage authors to take this into account and make a best faith effort.
    \end{itemize}

\item {\bf Licenses for existing assets}
    \item[] Question: Are the creators or original owners of assets (e.g., code, data, models), used in the paper, properly credited and are the license and terms of use explicitly mentioned and properly respected?
    \item[] Answer: \answerYes{}
    \item[] Justification: The creators or original owners of assets used in
    the paper are properly credited and the license and terms of use are explicitly mentioned and
    properly respected.
    \item[] Guidelines:
    \begin{itemize}
        \item The answer NA means that the paper does not use existing assets.
        \item The authors should cite the original paper that produced the code package or dataset.
        \item The authors should state which version of the asset is used and, if possible, include a URL.
        \item The name of the license (e.g., CC-BY 4.0) should be included for each asset.
        \item For scraped data from a particular source (e.g., website), the copyright and terms of service of that source should be provided.
        \item If assets are released, the license, copyright information, and terms of use in the package should be provided. For popular datasets, \url{paperswithcode.com/datasets} has curated licenses for some datasets. Their licensing guide can help determine the license of a dataset.
        \item For existing datasets that are re-packaged, both the original license and the license of the derived asset (if it has changed) should be provided.
        \item If this information is not available online, the authors are encouraged to reach out to the asset's creators.
    \end{itemize}

\item {\bf New assets}
    \item[] Question: Are new assets introduced in the paper well documented and is the documentation provided alongside the assets?
    \item[] Answer: \answerNA{} 
    \item[] Justification: The paper does not release new assets.
    \item[] Guidelines:
    \begin{itemize}
        \item The answer NA means that the paper does not release new assets.
        \item Researchers should communicate the details of the dataset/code/model as part of their submissions via structured templates. This includes details about training, license, limitations, etc. 
        \item The paper should discuss whether and how consent was obtained from people whose asset is used.
        \item At submission time, remember to anonymize your assets (if applicable). You can either create an anonymized URL or include an anonymized zip file.
    \end{itemize}

\item {\bf Crowdsourcing and research with human subjects}
    \item[] Question: For crowdsourcing experiments and research with human subjects, does the paper include the full text of instructions given to participants and screenshots, if applicable, as well as details about compensation (if any)? 
    \item[] Answer: \answerNA{} 
    \item[] Justification: The paper does not involve crowdsourcing nor research with human subjects.
    \item[] Guidelines:
    \begin{itemize}
        \item The answer NA means that the paper does not involve crowdsourcing nor research with human subjects.
        \item Including this information in the supplemental material is fine, but if the main contribution of the paper involves human subjects, then as much detail as possible should be included in the main paper. 
        \item According to the NeurIPS Code of Ethics, workers involved in data collection, curation, or other labor should be paid at least the minimum wage in the country of the data collector. 
    \end{itemize}

\item {\bf Institutional review board (IRB) approvals or equivalent for research with human subjects}
    \item[] Question: Does the paper describe potential risks incurred by study participants, whether such risks were disclosed to the subjects, and whether Institutional Review Board (IRB) approvals (or an equivalent approval/review based on the requirements of your country or institution) were obtained?
    \item[] Answer: \answerNA{} 
    \item[] Justification: The paper does not involve crowdsourcing nor research with human subjects.
    \item[] Guidelines:
    \begin{itemize}
        \item The answer NA means that the paper does not involve crowdsourcing nor research with human subjects.
        \item Depending on the country in which research is conducted, IRB approval (or equivalent) may be required for any human subjects research. If you obtained IRB approval, you should clearly state this in the paper. 
        \item We recognize that the procedures for this may vary significantly between institutions and locations, and we expect authors to adhere to the NeurIPS Code of Ethics and the guidelines for their institution. 
        \item For initial submissions, do not include any information that would break anonymity (if applicable), such as the institution conducting the review.
    \end{itemize}

\item {\bf Declaration of LLM usage}
    \item[] Question: Does the paper describe the usage of LLMs if it is an important, original, or non-standard component of the core methods in this research? Note that if the LLM is used only for writing, editing, or formatting purposes and does not impact the core methodology, scientific rigorousness, or originality of the research, declaration is not required.
    \item[] Answer: \answerNA{} 
    \item[] Justification: The core method development in this research does not involve LLMs as any important, original, or non-standard components.
    \item[] Guidelines:
    \begin{itemize}
        \item The answer NA means that the core method development in this research does not involve LLMs as any important, original, or non-standard components.
        \item Please refer to our LLM policy (\url{https://neurips.cc/Conferences/2025/LLM}) for what should or should not be described.
    \end{itemize}

\end{enumerate}

\end{document}